\newcolumntype{C}[1]{>{\centering\arraybackslash}p{#1}}
\def\BibTeX{{\rm B\kern-.05em{\sc i\kern-.025em b}\kern-.08em
		T\kern-.1667em\lower.7ex\hbox{E}\kern-.125emX}}
\newtheorem{prop}{Proposition}
\newtheorem{definition}{Definition}
\newtheorem{lemma}{Lemma}
\newcommand{\diag}[1]{ \text{diag}\!\left(#1\right)}
\newcommand{\R}{\mathbb{R}}
\newcommand{\parabegin}[1]{\vspace{3px}\noindent\textbf{#1}}
\title{Bilinear Scoring Function Search \\for Knowledge Graph Learning}
\author{Yongqi Zhang,~\IEEEmembership{Member,~IEEE}
	Quanming Yao,~\IEEEmembership{Member,~IEEE}
	James T. Kwok,~\IEEEmembership{Fellow,~IEEE}
\IEEEcompsocitemizethanks{\IEEEcompsocthanksitem Y. Zhang was with 4Paradigm Inc. Beijing, China.
	E-mail: zhangyongqi@4paradigm.com
	\IEEEcompsocthanksitem Q. Yao was with Department of Electronic Engineering,
	Tsinghua University and 4Paradigm Inc. Beijing, China.
	E-mail: qyaoaa@connect.ust.hk
	\IEEEcompsocthanksitem 
	J.T. Kwok was with Department of Computer Science,
	Hong Kong University of Science and Technology.
	Hong Kong, China.
	E-mail: jamesk@cse.ust.hk
}
\thanks{The code is public available at https://github.com/AutoML-Research/AutoSF. 
	Corresponding author: Quanming Yao.}
}
\begin{document}	
	
\IEEEtitleabstractindextext{
\begin{abstract}
Learning embeddings
for entities and relations in knowledge graph (KG)
have benefited many downstream tasks.
In recent years,
scoring functions, 
the crux of KG learning,
have been human designed 
to 
measure the plausibility of triples
and
capture different kinds of relations in KGs.
However, 
as relations exhibit intricate patterns that are hard to infer before training,
none of them consistently perform the best on benchmark tasks. 
In this paper,
inspired by the recent success of automated machine learning (AutoML), 
we search bilinear scoring functions
for different KG tasks 
through the AutoML techniques.
However,
it is non-trivial to explore domain-specific information here.
We first set up a search space for AutoBLM
by analyzing existing scoring functions.
Then, we propose a progressive algorithm (AutoBLM) 
and an evolutionary algorithm (AutoBLM+),
which are further accelerated by filter and predictor 
to deal with the domain-specific properties for KG learning.
Finally,
we perform extensive experiments on benchmarks in 
KG completion, multi-hop query, and entity classification tasks.
Empirical results
show that the searched scoring functions
are KG dependent, new to the literature, 
and outperform the existing scoring functions.
AutoBLM+ is better than  AutoBLM 
as the evolutionary algorithm can
 flexibly explore better structures in the same budget.
\end{abstract}
	
\begin{IEEEkeywords}
Automated machine learning,
Knowledge graph,
Neural architecture search,
Graph embedding
\end{IEEEkeywords}
}

\maketitle

\section{Introduction}
\label{sec:intro}

\IEEEPARstart{T}{he} knowledge
 graph (KG)
\cite{singhal2012introducing,nickel2016review,wang2017knowledge} is a 
graph in which the nodes represent entities, 
the edges are the relations between
entities, and
the facts
are represented by triples of the form \textit{(head entity, relation, tail entity)} (or $(h,r,t)$ in short).
The KG has been found useful in a lot of 
data mining and machine learning
applications and tasks,
including question answering~\cite{ren2019query2box},
product
recommendation~\cite{zhang2016collaborative},
knowledge graph completion~\cite{nickel2011three,yang2014embedding},
multi-hop query~\cite{hamilton2018embedding,ren2019query2box},
and entity classification~\cite{kipf2016semi}.

In a KG, 
plausibility  of a fact
$(h,r,t)$ 
is given by $f(h,r,t)$, where $f$ is the
{\em scoring function\/}. 
Existing $f$'s are
custom-designed by human experts,
and can be categorized into the following three families:
(i) translational distance models (TDMs)
\cite{bordes2013translating,wang2014knowledge,fan2014transition,sun2019rotate},
which 
model the relation embeddings as translations from the head entity embedding
to the tail entity embedding;
(ii)
bilinear model (BLMs)
\cite{nickel2011three,yang2014embedding,trouillon2017knowledge,nickel2016holographic,liu2017analogical,kazemi2018simple,lacroix2018canonical,zhang2019quaternion},
which
model the interaction between entities and relations
by a bilinear product between the entity and relation embeddings; and (iii)
neural network models (NNMs) 
\cite{dong2014knowledge,
dettmers2017convolutional,guo2019learning,schlichtkrull2018modeling,vashishth2019composition},
which
use neural networks to capture the interaction.
The scoring function
can significantly impact KG learning performance \cite{nickel2016review,wang2017knowledge,lin2018knowledge}.
Most TDMs are less expressive and have poor empirical performance~\cite{wang2017knowledge,wang2018evaluating}.
NNMs are powerful but have
large numbers of parameters and may overfit the training triples.
In comparison, 
BLMs  
are more advantageous in that they are 
easily customized to be expressive,
have linear complexities w.r.t. the numbers of entities/relations/dimensions,
and have state-of-the-art performance~\cite{lacroix2018canonical}.
While a number of BLMs have been proposed,
the best BLM is often dataset-specific.

Recently, 
automated machine learning (AutoML)~\cite{quanming2018auto,automl_book}
has demonstrated 
its power in many machine learning tasks such as 
hyperparameter optimization (HPO)~\cite{feurer2015efficient}
and neural architecture search
(NAS)~\cite{zoph2017neural,liu2018darts,elsken2019neural}.
The models discovered have better performance than those
designed by
humans, and 
the amount of human effort required is
significantly reduced.
Inspired by its success,
we propose 
in this paper 
the use of AutoML for the design  of
KG-dependent scoring functions.
To achieve this, one has to 
pay careful consideration to the
three main components in 
an AutoML algorithm:
(i) search space, 
which identifies important properties of the learning models
to search;
(ii) search algorithm, which ensures that
finding a good model in this space
is efficient;
and 
(iii) evaluation method,
which offers feedbacks to the search algorithm.

In this paper,
we make the following contributions in achieving these goals:
\begin{itemize}[leftmargin = 10px,itemsep=1px]
\item 
We design a search space
of scoring functions, which includes all the
existing BLMs.
We further analyze properties of this search space,
and provide conditions for a candidate scoring function
to be expressive,
degenerate, and equivalent to another.

\item 
To explore the above search space properties 
and reduce the computation cost in evaluation,
we design 
a filter to remove degenerate and equivalent structures,
and a performance predictor with specifically-designed symmetry-related features (SRF) 
to select promising structures.

\item 
We customize a progressive algorithm (AutoBLM)
and an evolutionary algorithm (AutoBLM+)
that,
together with the filter and performance predictor,
allow flexible exploration of new BLMs.
\end{itemize}

Extensive experiments are 
performed on the 
tasks
of KG completion, multi-hop query and entity classification.
The results
demonstrate that the models obtained by AutoBLM and
AutoBLM+
outperform the start-of-the-art 
human-designed 
scoring functions.
In addition,
we show that the customized progressive and evolutionary algorithms are 
much less expensive than popular search algorithms
(random search, Bayesian optimization
and reinforcement learning)
in finding a good scoring function.

\vspace{3px}
\noindent
\textit{Differences with the Conference Version.}
Compared to 
the preliminary version published in ICDE 2020~\cite{zhang2020autosf},
we have made the following important extensions:
\begin{enumerate}[leftmargin=*]
\item \textit{Theory.}
We add new analysis to the designed 
search space based on bilinear models.
We theoretically prove
when the candidates in the search space can be expressive (Section~\ref{ssec:unifiedBLM}),
degenerate (Section~\ref{sssec:degenerate})
and equivalent structures (Section~\ref{sssec:equiv}).

\item 
\textit{Algorithm.}
We extend the search algorithm with the evolutionary algorithm (Section~\ref{ssec:evolution}),
i.e., AutoBLM+.
The evolutionary strategy in Algorithm~\ref{alg:evolution} 
can explore better in the search space,
and can also leverage the filter 
and predictor to deal with the domain-specific properties.

\item
\textit{Tasks.}
We extend AutoBLM and AutoBLM+ to two new tasks,
namely, multi-hop query (Section~\ref{sec:exp:hop})
and entity classification in (Section~\ref{sec:exp:cls}).
We show that the search problem can be well adopted to these new scenarios,
and achieve good empirical performance.

\item \textit{Ablation Study.}
We conduct more experiments on the performance 
(Section~\ref{exp:kgc:performance} and \ref{exp:kgc:ogb}) and analysis (Section~\ref{exp:alg:compare})
of the new search algorithm,
analysis on the influence of  $K$ 
(Section~\ref{sec:exp:varyK}),
and the problem of parameter sharing (Section~\ref{sec:exp:ps})
to analyze the design schemes in the search space and search algorithm.

\end{enumerate}

\parabegin{Notations.}
In this paper, vectors are denoted by lowercase boldface, and matrix by uppercase boldface.
The important notations are listed in Table~\ref{tab:notations}.

\begin{table}[ht]
	\centering
	\vspace{-7px}
	\caption{Notations used in the paper.}
	\label{tab:notations}
	\vspace{-10px}
	\renewcommand{\arraystretch}{1.1}
	\begin{tabular}{c|C{6.3cm}}
		\toprule
		$\mathcal E, \mathcal R, \mathcal S$   &   set of entities, relations, triples \\
		$|\mathcal E|, |\mathcal R|, |\mathcal S|$ & number of entities, relations, triples \\ 
		$(h,r,t)$ & triple of head entity, relation and tail entity \\
		$\bm h, \bm r, \bm t$ & embeddings of $h$, $r$, and $t$ \\ 
		$f(h,r,t)$ & scoring function for triple $(h,r,t)$ \\ 
		$\mathbb R^d, \mathbb C^d, \mathbb H^d$ & $\!\!d$-dimensional real/complex/hypercomplex space \\
		$\bm R_{(\bm r)}\!\in\!\mathbb R^{d\times d}$ & square matrix based on relation embedding $\bm r$ \\
		$\langle\bm a, \bm b, \bm c\rangle$ & triple product $:=\sum_{i=1}^d a_i b_i c_i=\bm a^\top\diag{\bm b}\bm c$ \\
		$\|\bm v\|_1$ & $\ell_1$-norm of vector $\bm v$ \\
		$\text{Re}(\bm v)$ & real part of complex vector $\bm v\in\mathbb C^d$ \\
		$\bar{\bm v}$ & conjugate of complex vector $\bm v\in\mathbb C^d$  \\
		\bottomrule
	\end{tabular}
	\vspace{-10px}
\end{table}

\section{Background and Related Works}
\label{sec:relworks}

\subsection{Scoring Functions for Knowledge Graph (KG)}
\label{ssec:kg}

A knowledge graph 
(KG)
can be represented by a third-order tensor $\mathbf G\in \mathbb
R^{|\mathcal E|\times |\mathcal R| \times |\mathcal E|}$,
in which 
$G_{hrt}=1$ if
the corresponding triple $(h,r,t)$ 
exists
in the KG, and 0
otherwise.
The {\em scoring function} $f(h,r,t): \mathcal E\times \mathcal R \times \mathcal
E \rightarrow \mathbb R$ measures plausibility of the triple $(h,r,t)$.
As introduced in Section~\ref{sec:intro},
it is
desirable 
for a scoring function 
to be 
able to represent  any of the
symmetric, anti-symmetric, general asymmetric and inverse
KG relations  
in Table~\ref{tab:relations}.

\begin{table}[t]
	\centering
	\caption{Popular properties in KG relations.}
	\label{tab:relations}
	\setlength\tabcolsep{1.5pt}
	\renewcommand{\arraystretch}{1.08}
	\vspace{-10px}
	\begin{tabular}{c|c|c} \toprule
		property & examples in WN18/FB15k & constraint on $f$\\ \midrule
		symmetry
		&  \texttt{isSimilarTo}, \texttt{spouseOf} 
		&   $f(t, r, h) \! = \! f(h,r,t)$  
		\\  
		anti-symmetry
		&  \texttt{ancestorOf}, \texttt{isPartOf} 
		&  $f(t, r, h) \! = \! -f(h,r,t)$  
		\\
		general asymmetry
		& \texttt{locatedIn}, \texttt{profession} 
		&  $f(t, r, h) \! \neq \! f(h,r,t)$   
		\\  
		inverse 
		&  \texttt{hypernym}, \texttt{hyponym}  
		& $f(t, r, h) \! = \! f(h, r', t)$		
		\\ 
		\bottomrule
	\end{tabular} 
	\vspace{-12px}
\end{table}

\begin{definition}[Expressiveness~\cite{trouillon2017knowledge,wang2017multi,balavzevic2019tucker}] 
	\label{def:express}
	A scoring function 
   is {\em fully expressive\/} if
for any KG $G$ and the corresponding tensor $\mathbf G\in\mathbb R^{|\mathcal
E|\times |\mathcal R|\times |\mathcal E|}$, one can find an instantiation 
	$f$
of 
	the scoring function  such that
	$f(h,r,t)=G_{hrt}$, $\forall h,t\in\mathcal E, r\in \mathcal R$.
\end{definition}

Not all scoring functions are fully expressive.
For example, consider a KG with two people \textit{A}, \textit{B}, and 
a relation ``OlderThan''. Obviously, we can have
either \textit{(A, OlderThan, B)} or \textit{(B, OlderThan, A)}, but not both.
The scoring function $f(h,r,t) = \langle \bm h, \bm r, \bm t\rangle =\sum_{i=1}^dh_i r_i t_i$, where
$\bm h, \bm r, \bm t$ are $d$-dimensional embeddings of $h, r$ and $t$,
respectively, cannot be fully expressive
since
$f(h,r,t)=f(t,r,h)$.

On the other hand,
while expressiveness  indicates
the ability of $f$ to fit a given KG,
it may not 
generalize well when inference on different KGs.
As real-world KGs can be very sparse~\cite{singhal2012introducing,wang2017knowledge},
a scoring function with a large amount of trainable parameters may overfit the training triples.
Hence, it is also desirable that the scoring function has only a manageable number of parameters.

In the following, we review the three main types of scoring functions, namely,
translational distance model (TDM),
neural network model (NNM), and
biLinear model (BLM).
As will be seen,
many TDMs (such as TransE~\cite{bordes2013translating} and TransH~\cite{wang2014knowledge}) cannot model the symmetric relations well~\cite{wang2017knowledge,ji2020survey}.
Neural network models,
though fully expressive,
have large numbers of parameters.
This not only prevents the model from generalizing well on unobserved triples in a sparse KG,
but also increases the training and inference costs~\cite{dettmers2017convolutional,lacroix2018canonical,vashishth2019composition}.
In comparison,
BLMs 
(except DistMult)
can model all relation pattens in Table~\ref{tab:relations} and are fully expressive.
Besides,
these models 
(except RESCAL and TuckER)
have moderate complexities
(with the number of parameters 
linear in $|\mathcal E|, |\mathcal R|$ and $d$).
Therefore,
we consider BLM as a better choice, and it will be our focus in this paper.

\begin{table*}[ht]
	\centering
	\caption{The representative BLM scoring functions. 
		For each scoring function we show the definitions, 
		expressiveness in Definition~\ref{def:express},
		the ability to model all common relation patterns in Table~\ref{tab:relations} (``RP'' for short),
		and the number of parameters.}
	\vspace{-10px}
	\label{tab:scofun}
	\setlength\tabcolsep{8pt}
	\renewcommand{\arraystretch}{1.3}
	\begin{tabular}{c |  c | c | c | c}
		\toprule
		scoring function & definition & $\!$expressiveness$\!$    &  RP  & \# parameters                                                        \\ \midrule
		RESCAL~\cite{nickel2011three}                                                                               & $\bm h^\top \bm R_{(\bm r)}\bm t$    & $\surd$    &  $\surd$  & $O\left(|\mathcal E|d + |\mathcal R|d^2\right)$   \\
		              DistMult~\cite{yang2014embedding}      & $\langle \bm h, \bm r, \bm t\rangle$     & $\times $  & $\times$ & $O(|\mathcal E|d+|\mathcal R|d)$              \\
		ComplEx~\cite{trouillon2017knowledge}/HolE
		\cite{nickel2016holographic} & Re$\left({\bm{h}} \otimes {\bm{r}} \otimes\bar{{\bm{t}}} \right)$         & $\surd$    & $\surd$ & $O\left(|\mathcal E|d+|\mathcal R|d\right)$    \\
		Analogy~\cite{liu2017analogical}  & $ \langle \hat{\bm{h}}, \hat{\bm{r}},	\hat{\bm{t}}\rangle$ + Re$\left(\underline {\bm{h}} \otimes \underline {\bm{r}} \otimes\bar{\underline {\bm{t}}} \right)$         & $\surd$    & $\surd$ & $O\left(|\mathcal E|d+|\mathcal R|d\right)$    \\
		{SimplE~\cite{kazemi2018simple}/CP~\cite{lacroix2018canonical}}                                           & $\langle \hat{\bm{h}}, \hat{\bm{r}}, \underline {\bm{t}} \rangle$ + $\langle \underline {\bm{h}}, \underline {\bm{r}}, \hat{\bm{t}} \rangle$                 & $\surd$  & $\surd$   & $O\left(|\mathcal E|d+|\mathcal R|d\right)$                           \\
		 QuatE~\cite{zhang2019quaternion}                                                                            & $\bm h\odot\bm r\odot\bm t$                                                                                                                                                        & $\surd$   & $\surd$  & $O\left(|\mathcal E|d+|\mathcal R|d\right)$                           \\
		TuckER~\cite{balavzevic2019tucker}                                                                          & $\mathcal W\times_1 \bm h\times_2\bm r\times_3\bm t$                                                                                                                               & $\surd$  & $\surd$   & $O\left(|\mathcal E|d+|\mathcal R|d+d^3\right)$                        \\ \bottomrule
	\end{tabular} 
	\vspace{-7px}
\end{table*}


\parabegin{Translational Distance Model (TDM).}
Inspired by analogy results in word embeddings~\cite{bengio2013representation}, 
scoring functions in TDM 
take the relation $r$ as a translation from $h$ to $t$.
The most representative TDM is TransE~\cite{bordes2013translating}, with $f(h,r,t)=-\|\bm t - (\bm h+\bm r)\|_1$.
In order to handle one-to-many, many-to-one and many-to-many relations,
TransH~\cite{wang2014knowledge} and TransR~\cite{fan2014transition} introduce additional vectors/matrices to map
the entities to a relation-specific hyperplane.
The more recent
RotatE~\cite{sun2019rotate} treats the relations as rotations in a
complex-valued space:
$f(h,r,t) = -\|\bm t-\bm h\otimes\bm r\|_1$,
where $\bm h, \bm r, \bm t \in \mathbb C^d$
and $\otimes$ is the Hermitian product~\cite{trouillon2017knowledge}.
As discussed in~\cite{wang2017multi},
most TDMs are not fully expressive. For example,
TransE and TransH cannot model symmetric relations.

\parabegin{Neural Network Model (NNM)}.
NNMs take the 
entity and relation 
embeddings 
as input, and
output a probability for the triple $(h,r,t)$ using a neural network.
Earlier works are
based on multilayer perceptrons~\cite{dong2014knowledge} 
and neural tensor networks~\cite{socher2013reasoning}.
More recently,
ConvE~\cite{dettmers2017convolutional}
uses the convolutional network to 
capture interactions among embedding dimensions.
By sampling relational paths~\cite{guu2015traversing} from the KG,
RSN 
\cite{guo2019learning}
and Interstellar \cite{zhang2020interstellar}
use the recurrent network~\cite{hochreiter1997long} to recurrently combine the head entity and
relation 
with a step-wise scoring function.
As the KG is a graph,
R-GCN~\cite{schlichtkrull2018modeling} 
and CompGCN~\cite{vashishth2019composition} use the graph convolution network
\cite{kipf2016semi}
to aggregate entity-relation compositions layer by layer.
Representations 
at the final layer 
are then used to compute the scores.
Because of the use of an additional neural network,
NNM requires more parameters and has larger model complexity.

\parabegin{BiLinear Model (BLM).}
BLMs model the KG relation as a bilinear product between entity embeddings.
For example, RESCAL~\cite{nickel2011three} 
defines $f$ as:
$f(h,r,t) = \bm h^\top \bm R_{(\bm r)}\bm t$,
where $\bm h, \bm t\in\mathbb R^d$, and 
$\bm R_{(\bm r)}\in\mathbb R^{d\times d}$.
To avoid overfitting, DistMult~\cite{yang2014embedding} requires
$\bm{R}_{(\bm r)}$ to be 
diagonal, and
$f(h,r,t)$ reduces to a triple product:
$f(h,r,t) 
= \bm h^\top \diag{\bm r}\bm t 
= \left\langle \bm h, \bm r, \bm t\right\rangle$.
However, it can only model symmetric relations.
To capture anti-symmetric relations,
ComplEx~\cite{trouillon2017knowledge} uses complex-valued embeddings $\bm{h}, \bm r, \bm{t}\in\mathbb C^d$ 
with 
$f(h,r,t) 
=  \text{Re}\big(\bm h^\top\diag{\bm r}\bar{\bm t} \big)
=  \text{Re}\left(\bm h\otimes \bm r\otimes\bar{\bm t}\right)$,
where 
$\otimes$ is the Hermitian product in complex space~\cite{trouillon2017knowledge}.
HolE~\cite{nickel2016holographic} uses the circular correlation instead of the dot product,
but is shown to be equivalent to ComplEx~\cite{hayashi2017equivalence}.

Analogy~\cite{liu2017analogical} decomposes the head embedding $\bm h$ into a real part $ \hat{\bm{h}}\in\mathbb R^{\hat{d}}$
and a complex part
$\underline {\bm{h}}\in\mathbb C^{\underline {d}}$.
Relation embedding $\bm r$ (resp. tail embedding $\bm t$) is similarly decomposed into 
a real part $ \hat{\bm{r}}$ (resp. $\hat{\bm{t}}$)
and a complex part
$\underline {\bm{r}}$ (resp. $\underline {\bm{t}}$). $f$ is then written
as:
$f(h,r,t)
=\langle \hat{\bm h}, \hat{\bm r}, \hat{\bm t}\rangle 
+ \text{Re}\big(\underline {\bm h}\otimes \underline {\bm r}\otimes\bar{\underline {\bm t}}\big)$,
which can be regarded as a combination of DistMult and ComplEx.
To simultaneously model the forward triplet $(h,r,t)$ and its inverse $(t,r',h)$,
SimplE~\cite{kazemi2018simple} / CP~\cite{lacroix2018canonical} 
similarly
splits the embeddings to a forward part 
($\hat{\bm{h}},\hat{\bm{r}},\hat{\bm{t}}\in\mathbb R^{d}$)
and a backward part ($\underline {\bm{h}},\underline {\bm{r}},\underline
{\bm{t}}\in\mathbb R^{d}$):
$f(h,r,t)=\langle \hat{\bm{h}}, \hat{\bm{r}}, \underline {\bm{t}} \rangle+\langle \underline {\bm{t}}, \underline {\bm{r}}, \hat{\bm{h}} \rangle$.
To allow more interactions among embedding dimensions,
the recent QuatE~\cite{zhang2019quaternion} 
uses embeddings in the hypercomplex space ($\bm h,\bm r, \bm t\in\mathbb H^d$) to 
model
$f(h,r,t)=\bm h\odot \bm r \odot \bm t$
where
$\odot$ 
is the Hamilton product.
By using the Tucker decomposition~\cite{tucker1966some},
TuckER~\cite{balavzevic2019tucker} proposes a generalized bilinear model 
and introduces more parameters in the core tensor $\mathcal W\in\mathbb R^{d\times d \times d}$:
$f(h,r,t) =
\mathcal W\times_1 \bm h \times_2 \bm r\times_3 \bm t$,
where
$\times_i$ is the tensor product along the $i$th mode.
A summary of 
these BLMs is  
in
Table~\ref{tab:scofun}.

\subsection{Common Learning Tasks in KG}

\subsubsection{KG Completion}
\label{sec:app1}

KG
is naturally incomplete~\cite{singhal2012introducing}, and
KG completion is a representative task in KG learning \cite{wang2017knowledge,nickel2011three,bordes2013translating,kazemi2018simple,trouillon2017knowledge,yang2014embedding,dettmers2017convolutional}.  
Scores on the observed triples are maximized, while those on the non-observed  triplets are minimized.
After training,
new triples 
can be added
to the KG
by entity prediction with either a missing head
entity
$(?, r, t)$
or a missing tail entity
$(h, r, ?)$~\cite{wang2017knowledge}.
For each kind of query,
we enumerate all the entities $e\in\mathcal E$
and compute the corresponding scores $f(e, r, t)$ or
$f(h, r, e)$.
Entities with larger scores are more likely to be true facts.
Most of the models in Section~\ref{ssec:kg} can be directly used for KG completion.

\subsubsection{Multi-hop Query}
\label{sec:app2}

In KG completion, we predict queries $(h, r, ?)$ with length one, i.e., $1$-hop query.  In practice, there can be multi-hop queries with lengths larger than one~\cite{guu2015traversing,hamilton2018embedding,wang2017knowledge}.  For example, one may want to predict ``\textit{who is the sister of Tony's mother}".  
To solve this problem, we need to solve the length-2 query problem $(?, sister \circ mother, Tony)$ with the relation composition operator $\circ$.

Given the KG $\mathcal G=\{\mathcal E, \mathcal R, \mathcal S\}$,
let 
$\varphi_r$,
corresponding to the relation $r\in\mathcal R$,
be a binary function
$\mathcal E\times \mathcal E\mapsto\{True, False \}$.
The multi-hop query 
is defined
as follows.
\begin{definition}[Multi-hop query~\cite{hamilton2018embedding,ren2019query2box}]
	The multi-hop query $(e_0, r_1\circ r_2 \circ \dots \circ r_L, e_? )$ with length $L > 1$
	is defined as
	$
	\exists e_1 \dots e_{L-1}, e_?: \varphi_{\!r_1}(e_0, e_1)
	\! \wedge \! 
	\varphi_{\!r_2}(e_1, e_2) 
	\! \wedge \! 
	\dots 
	\! \wedge \! 
	\varphi_{\!r_L}(e_{L-1}, e_?)
	$
	where $\wedge$ is the conjunction operation, $e_0$ is the starting entity, $e_?$ is the entity to predict,
	and $e_1 \dots e_{L-1}$ are intermediate entities that connect the conjunctions.
\end{definition}

Similar to KG completion,
plausibility of a query $(e_0, r_1\circ r_2 \circ \dots \circ r_L,
e_L)$ 
is measured
by
a scoring function 
\cite{guu2015traversing,hamilton2018embedding}:
\begin{equation}
f(e_0, r_1\circ r_2 \circ \dots \circ r_L, e_L)  = \bm e_0^\top \bm R_{(r_1)} \cdots \bm R_{(r_L)} \bm e_L,
\label{eq:multihop}
\end{equation}
where $\bm R_{(r_i)}$ is a relation-specific matrix of the $i$th relation.
The key is on how to model the composition of relations in the embedding space.
Based on TransE~\cite{bordes2013translating},
TransE-Comp~\cite{guu2015traversing}
models the composition operator as addition, and 
defines
the scoring function 
as
$f(e_0, r_1\circ \cdots \circ r_L, e_L) = -\|\bm e_L - (\bm e_0 + \bm r_1 +
\cdots + \bm r_L)\|_1$.
Diag-Comp~\cite{guu2015traversing}
uses the multiplication operator in DistMult~\cite{yang2014embedding} to define
$f(e_0, r_1\circ \cdots \circ r_L, e_L) = \bm e_0^\top\bm D_{\bm r_1}\cdots\bm
D_{\bm r_L}\bm e_L$,
where $\bm D_{\bm r_i}=\diag{\bm r_i}$.
Following RESCAL~\cite{nickel2011three},
GQE~\cite{hamilton2018embedding} 
performs
the composition with a product of relational matrices
$\{\bm R_{(r_1)},\dots,\bm R_{(r_L)}\}$, as:
$f(e_0, r_1\circ \cdots \circ r_L, e_L) = \bm e_0^\top\bm R_{(r_1)}\cdots\bm
R_{(r_L)}\bm e_L$.
More recently,
Query2box~\cite{ren2019query2box} models the composition of relations as a projection of box embeddings
and defines an entity-to-box distance to measure the score.

\subsubsection{Entity Classification}
\label{sssec:entclass}

Entity classification aims at predicting the labels of the unlabeled entities.
Since the labeled entities are few, a common approach is to 
use a graph convolutional network
(GCN)~\cite{kipf2016semi,gilmer2017neural} 
to aggregate neighborhood information.
The GCN
operates on the local neighborhoods of each entity
and aggregates the representations 
layer-by-layer as:
\begin{equation*}
\bm e_i^{\ell+1} = \sigma 
\big(\bm W_0^\ell \bm e_i^\ell + \sum\nolimits_{j: (i,r,j)\in\mathcal S_{\text{tra}}}\bm W^\ell \bm e_j^{\ell}\big),
\end{equation*}
where 
$\mathcal{S}_{\text{tra}} $ contains all the training triples,
$\sigma$ is the activation function,
$\bm e_i^{\ell}, \bm e_j^{\ell}$
are 
the 
layer-$\ell$
representations of $i$ and the  neighboring entities $j$,
respectively,
and $\bm W_0^\ell, \bm W^\ell\in\mathbb R^{d\times d}$ are weighting matrices sharing across different entities
in the $\ell$th layer.

GCN does not encode relations in edges. To alleviate this problem,
R-GCN~\cite{schlichtkrull2018modeling} and CompGCN~\cite{vashishth2019composition}
encode relation $r$ 
and entity $j$ 
together by a
composition function $\phi$:
\begin{equation*}
\bm e_i^{\ell+1} = \sigma 
\big(\bm W_0^\ell \bm e_i^{\ell}+\sum\nolimits_{(r,j): (i,r,j) \in \mathcal{S}_{\text{tra}}}\bm W^\ell \phi(\bm e_j^\ell,  \bm r^\ell)\big),
\end{equation*}
where 
$\bm r^\ell$ is the representation of relation $r$ 
at the $\ell$th layer.
The composition function $\phi(\bm e_j^\ell,  \bm r^\ell)$ 
can significantly impact performance~\cite{vashishth2019composition}.
R-GCN 
uses the composition operator in RESCAL~\cite{nickel2011three}, and
defines
$\phi(\bm e_j^\ell,  \bm r^\ell) = \bm R_{(r)}^\ell\bm e_j^\ell$,
where $\bm R^\ell_{(r)}$ is a relation-specific weighting matrix in the $\ell$th layer.
CompGCN,
following TransE~\cite{bordes2013translating}, DistMult~\cite{yang2014embedding}
and HolE~\cite{nickel2016holographic}, 
defines
three operators:
subtraction $\phi(\bm e_j^\ell,  \bm r^\ell)=\bm e_j^\ell-\bm r^\ell$, 
multiplication
$\phi(\bm e_j^\ell,  \bm r^\ell)=\bm e_j^\ell\cdot\bm r^\ell$ where
$\cdot$ is the 
element-wise product,
and circular correlation $\phi(\bm e_j^\ell,  \bm r^\ell)=\bm e_j^\ell\star\bm
r^\ell$ where $[\bm a\star \bm b]_k = \sum_{i=1}^d a_i b_{k+i-1 \text{\,mod\,} d}$.

\subsection{Automated Machine Learning (AutoML)}
\label{sec:automl}

Recently,
automated machine learning (AutoML)~\cite{automl_book,quanming2018auto}
has 
demonstrated its advantages
in the design of better machine learning models.
AutoML is often formulated as a bi-level optimization
problem~\cite{colson2007overview}, in which
model parameters 
are updated
from the training data in the inner loop, while
hyper-parameters are tuned from the validation data in the outer loop.
There are three important components
in AutoML~\cite{automl_book,quanming2018auto,bender2018understanding}:
\begin{enumerate}[leftmargin=10.9px]
	\item
	\textit{Search space:}
This identifies important properties of the learning models to search.
The search space should be large enough to cover most manually-designed models,
while specific enough to ensure that the search will not be too expensive.
	
\item \textit{Search algorithm}: 
A search algorithm is used to search for good solutions in the designed space.  
Unlike convex optimization problems, there is no universally efficient optimization tool.

\item \textit{Evaluation}: 
Since the search aims at improving performance, evaluation is needed 
to offer feedbacks to the search algorithm.
The evaluation procedure should be fast and the signal should be accurate.
\end{enumerate}

\subsubsection{Neural Architecture Search (NAS)}
\label{ssec:nas}

Recently, 
a variety of NAS algorithms have been developed to facilitate efficient search
of deep networks
\cite{elsken2019neural,automl_book,zoph2017neural}.
They can generally be divided into 
model-based approach and sample-based approach~\cite{quanming2018auto}.
The model-based approach builds a surrogate model for all candidates in the
search space,
and selects candidates with promising performance
using methods such as 
Bayesian optimization~\cite{feurer2015efficient},
reinforcement learning~\cite{zoph2017neural,pham2018efficient},
and gradient descent~\cite{liu2018darts,yao2019differentiable}.
It requires evaluating a large number of architectures 
for training the surrogate model
or requires a differentiable objective w.r.t. the architecture.
The sample-based approach
is more flexible and
explores new structures
in the search space 
by using heuristics
such as 
progressive algorithm~\cite{liu2018progressive} 
and evolutionary algorithm~\cite{real2019regularized}.

As for evaluation,
parameter-sharing~\cite{liu2018darts,pham2018efficient,yao2019differentiable} 
allows faster architecture evaluation by
combining architectures in the whole search space
with the same set of parameters.
However,
the obtained results can
be sensitive to initialization, which hinders reproducibility.
On the other hand,
 stand-alone methods 
\cite{zoph2017neural,liu2018progressive,real2019regularized}
train 
and evaluate 
the different models 
separately.
They are slower but more reliable.
To improve its efficiency,
a predictor can be used to select promising architectures~\cite{liu2018progressive}
before it is fully trained.

\begin{figure*}[ht]
	\centering
	\small 
	\subfigure[DistMult.] {
	\includegraphics[height=3.1cm]{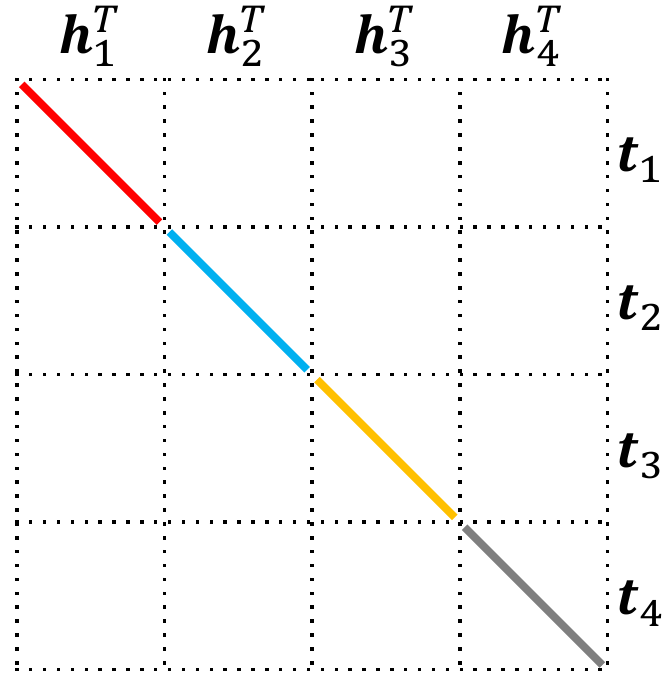}}
	\quad
	\subfigure[SimplE.] {
		\includegraphics[height=3.1cm]{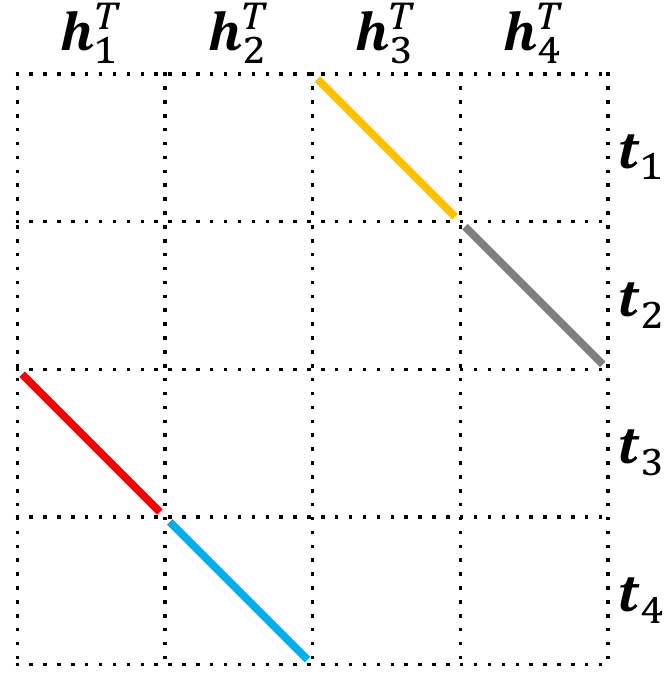}}
	\quad
	\subfigure[ComplEx.] {
		\includegraphics[height=3.1cm]{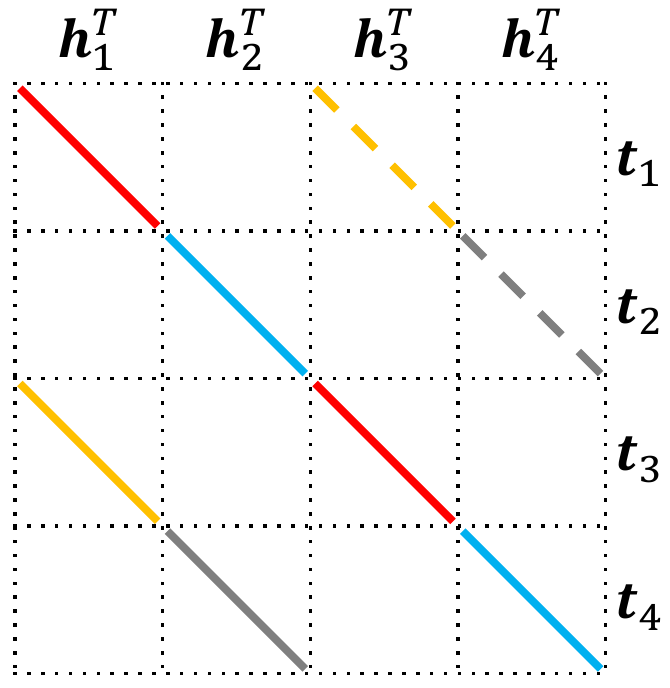}}
	\quad
	\subfigure[Analogy.] {
		\includegraphics[height=3.1cm]{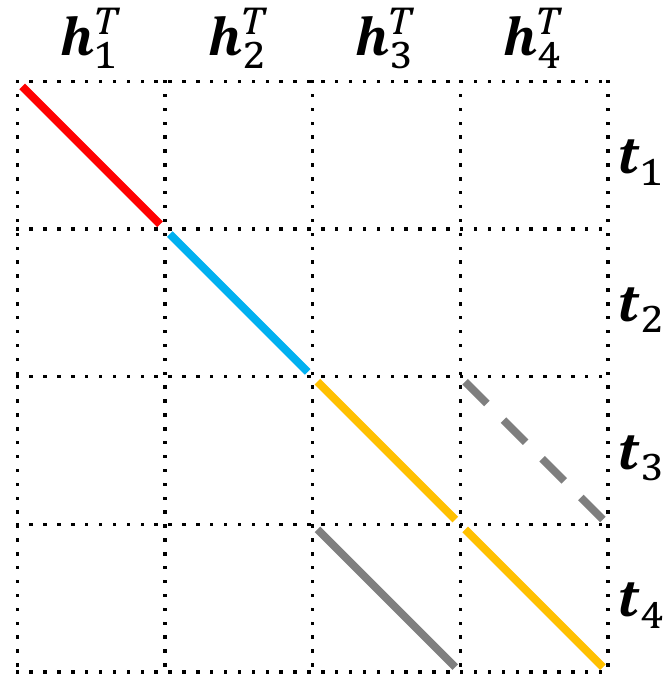}}
	\quad
	\subfigure[QuatE.] {
		\includegraphics[height=3.1cm]{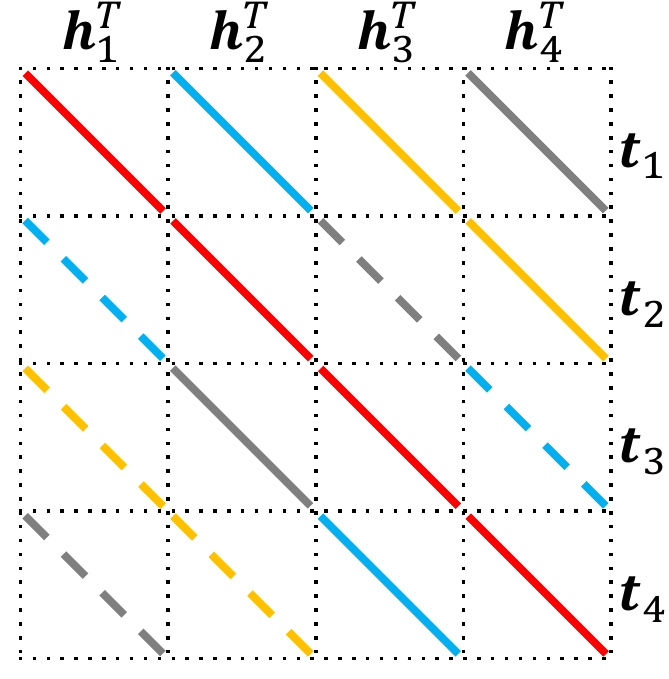}}
	\vspace{-10px}
\caption{The forms of $\bm R_{(\bm r)}$ for representative
BLMs (best viewed in color).  Different colors correspond to 
different parts of $[\bm r_1, \bm r_2, \bm r_3, \bm r_4]$
	(red for $\bm r_1$, blue for $\bm r_2$, yellow for $\bm r_3$, gray for $\bm
	r_4$).
Solid lines mean positive values, while 
dashed lines mean negative values. 
The empty parts have value zero.
}
\label{fig:graphsf}
\vspace{-10px}
\end{figure*}

\section{Automated Bilinear Model}
\label{sec:search}

In the last decade, KG learning has been improving with new scoring function designs.
However, as different KGs
may have different properties,
it is unclear
how 
a proper scoring function 
can be designed 
for a particular KG.
This raises the question:
\textit{Can we automatically design a scoring function for a given KG}?
To address this question,
we first  provide a unified view of
BLMs, 
and then
formulate the 
design of scoring function 
as an AutoML problem:
AutoBLM
(``automated bilinear model").

\subsection{A Unified View of BLM}
\label{ssec:unified}

Recall 
from Section~\ref{ssec:kg} 
that a BLM may operate in  the
real/complex/hypercomplex space.
To write the different BLMs in the same form,
we first unify them to the same representation space.
The idea is to partition each of the 
embeddings $\bm h, \bm r, \bm t$
to 
$K=4$ equal-sized chunks, as
$\bm h = [\bm h_1, \dots, \bm h_4],
\bm r = [\bm r_1, \dots, \bm r_4]$ and
$\bm t = [\bm t_1, \dots, \bm t_4]$.
The BLM 
is then written
in terms of $\{\langle\bm h_i, \bm r_j, \bm
t_k\rangle\}_{i,j,k\in \{1,\dots, 4\}}$.

\begin{itemize}[leftmargin=*]
\item DistMult~\cite{yang2014embedding}, which uses
$f(h,r,t) 
=\left\langle \bm h, \bm r, \bm t\right\rangle$.
We simply split 
$\bm h \in \R^d$  
(and analogously $\bm r$ and $\bm t$)
into 4 parts as $\{\bm h_1, \bm h_2 ,\bm h_3, \bm h_4\}$, where
$\bm h_i\in \R^{d/4}$ for $i=1,2,3,4$. Obviously, 
\begin{align*}
& 
\left\langle \bm h, \bm r, \bm t\right\rangle
\\
& = \left\langle \bm{h}_1,\bm{r}_1,\bm{t}_1\right\rangle 
+ \left\langle\bm{h}_2,\bm{r}_2,\bm{t}_2\right\rangle 
+ \left\langle\bm{h}_3,\bm{r}_3,\bm{t}_3\right\rangle
+ \left\langle\bm{h}_4,\bm{r}_4,\bm{t}_4\right\rangle.
\end{align*}
\item SimplE~\cite{kazemi2018simple} / CP~\cite{lacroix2018canonical},
which uses
$f(h,r,t) 
=\langle \hat{\bm{h}}, \hat{\bm{r}}, \underline {\bm{t}} \rangle+\langle
\underline {\bm{t}}, \underline {\bm{r}}, \hat{\bm{h}} \rangle$.
We split $\hat{\bm h} \in \R^d$ 
(and analogously $\hat{\bm r}$ and $\hat{\bm t}$)
into 2 parts as $\{\bm h_1, \bm h_2\}$ (where
$\bm h_1, \bm h_2 \in \R^{d/2}$),
and similarly $\underline{\bm h}$ as $\{\bm h_3, \bm h_4\}$
(and analogously $\underline{\bm r}$ and $\underline{\bm t}$). Then,
\begin{align*}
& 
\langle \hat{\bm{h}}, \hat{\bm{r}}, \underline {\bm{t}} \rangle+\langle \underline {\bm{t}}, \underline {\bm{r}}, \hat{\bm{h}} \rangle
\\
& 
= \left\langle \bm{h}_1, \bm{r}_1, \bm{t}_3\right\rangle 
+ \left\langle\bm{h}_2, \bm{r}_2, \bm{t}_4\right\rangle
+ \left\langle\bm{h}_3, \bm{r}_3, \bm{t}_1\right\rangle
+ \left\langle\bm{h}_4, \bm{r}_4, \bm{t}_2\right\rangle.
\end{align*}

\item ComplEx~\cite{trouillon2017knowledge} / HolE~\cite{nickel2016holographic},
which uses
$f(h,r,t) 
 =\text{Re}(\bm h\otimes \bm r\otimes \bar{\bm t})$, where
 $\bm h,\bm r,\bar{\bm t}$ are complex-valued.
Recall that any complex vector $\bm v
\in \mathbb C^d$
 is of the form $\bm v_{r} + i\bm v_{i}$,
where $\bm v_{r}
\in\mathbb R^d$ is the real part and
$\bm v_{i}\in\mathbb R^d$ is the imaginary part.
 Thus,
 \begin{eqnarray}
 \text{Re}(\bm h\otimes \bm r\otimes \bar{\bm t}) & = & \langle \bm h_{r}, \bm r_{r}, \bm t_{r}\rangle
 + \langle \bm h_{i}, \bm r_{r}, \bm t_{i}\rangle  \nonumber\\
&& + \langle \bm h_{r}, \bm r_{i}, \bm t_{i}\rangle 
 - \langle \bm h_{i}, \bm r_{i}, \bm t_{r}\rangle. \label{eq:compiex}
 \end{eqnarray}
We split
$\bm h_{r} \in \R^d$ 
(and analogously $\bm r_r$ and $\bm t_r$)
into 2 parts $\{\bm h_1, \bm h_2\}$ (where
$\bm h_1, \bm h_2 \in \R^{d/2}$),
and  similarly $\bm h_{i}=\{\bm h_3, \bm h_4\}$
(and analogously $\bm r_i$ and $\bm t_i$).
Then,
\begin{eqnarray*}
\lefteqn{\text{Re}(\bm h\otimes \bm r\otimes \bar{\bm t})} \\ 
\!\!& \!\!=\!\! & \!\! \big(\!\langle \bm{h}_1,\!\bm{r}_1,\!\bm{t}_1\rangle 
\!+\! \langle\bm{h}_2,\!\bm{r}_2,\!\bm{t}_2\rangle\!\big)
\!+\! \big(\!\langle\bm{h}_3,\!\bm{r}_1,\!\bm{t}_3\rangle 
\!+\! \langle\bm{h}_4,\!\bm{r}_2,\!\bm{t}_4\rangle\!\big)\\
\!\!& \!\! &
\!\! + \big(\!\langle \bm{h}_1,\!\bm{r}_3,\!\bm{t}_3\rangle 
\!+\! \langle\bm{h}_2,\!\bm{r}_4,\!\bm{t}_4\rangle\!\big)
\!-\! \big(\!\langle\bm{h}_3,\!\bm{r}_3,\!\bm{t}_1\rangle 
\!-\! \langle\bm{h}_4,\!\bm{r}_4,\!\bm{t}_2\rangle\!\big).
\end{eqnarray*}

\item Analogy~\cite{liu2017analogical}, which uses
$f(h,r,t) 
=\langle \hat{\bm h}, \hat{\bm r}, \hat{\bm t}\rangle 
+ \text{Re}\big(\underline {\bm h}\otimes \underline {\bm r}\otimes\bar{\underline {\bm t}}\big)$.
We split 
$\hat{\bm h} \in \R^d$ 
(and analogously $\hat{\bm r}$ and $\hat{\bm t}$)
into 2 parts
$\{\bm h_1, \bm h_2\}$ (where
$\bm h_1, \bm h_2 \in\mathbb R^{d/2}$),  and similarly
$\underline{\bm h}
\in\mathbb C^{d/2}$ 
(and analogously $\underline{\bm r}$ and $\underline{\bm t}$)
into 2 parts
$\{\bm h_3, \bm h_4\}$ (where
$\bm h_3, \bm h_4
\in\mathbb R^{d/2}$).
Then,
\begin{align*}
& \langle \hat{\bm h}, \hat{\bm r}, \hat{\bm t}\rangle 
+ \text{Re}\big(\underline {\bm h}\otimes \underline {\bm r}\otimes\bar{\underline {\bm t}}\big)\\
&= \left\langle \bm{h}_1, \bm{r}_1, \bm{t}_1\right\rangle 
+ \left\langle\bm{h}_2, \bm{r}_2, \bm{t}_2\right\rangle
+ \left\langle\bm{h}_3, \bm{r}_3, \bm{t}_3\right\rangle 
+ \left\langle\bm{h}_3, \bm{r}_4, \bm{t}_4\right\rangle \\
&
\quad
+ \left\langle\bm{h}_4, \bm{r}_3, \bm{t}_4\right\rangle
- \left\langle\bm{h}_4, \bm{r}_4, \bm{t}_3\right\rangle.
\end{align*}

\item QuatE~\cite{zhang2019quaternion}, which uses
$f(h,r,t) 
=\bm h\odot \bm r \odot \bm t$.
Recall that any hypercomplex vector $\bm v
\in \mathbb H^d$
is of the form $\bm v_1 + i\bm v_2 + j\bm v_3 + k \bm v_4$, where
$\bm v_1, \bm v_2,\bm v_3, \bm v_4 \in 
\mathbb R^{d}$. Thus, 
\begin{align*}
& \bm h\odot \bm r \odot \bm t \\
&= \left\langle \bm{h}_1, \bm{r}_1, \bm{t}_1\right\rangle 
- \left\langle \bm{h}_1, \bm{r}_2, \bm{t}_2\right\rangle 
- \left\langle \bm{h}_1,\bm{r}_3,\bm{t}_3\right\rangle 
- \left\langle \bm{h}_1,\bm{r}_4,\bm{t}_4\right\rangle \\
&+ \left\langle \bm{h}_2,\bm{r}_2, \bm{t}_1\right\rangle 
+ \left\langle \bm{h}_2, \bm{r}_1, \bm{t}_2\right\rangle 
+ \left\langle \bm{h}_2, \bm{r}_4, \bm{t}_3\right\rangle 
- \left\langle \bm{h}_2, \bm{r}_3, \bm{t}_4\right\rangle \\
& + \left\langle \bm{h}_3, \bm{r}_3, \bm{t}_1\right\rangle 
- \left\langle \bm{h}_3, \bm{r}_4, \bm{t}_2\right\rangle 
+ \left\langle \bm{h}_3, \bm{r}_1, \bm{t}_3\right\rangle 
+ \left\langle \bm{h}_3, \bm{r}_2, \bm{t}_4\right\rangle \\
& + \left\langle \bm{h}_4, \bm{r}_4, \bm{t}_1\right\rangle 
+ \left\langle \bm{h}_4, \bm{r}_3, \bm{t}_2\right\rangle
- \left\langle \bm{h}_4, \bm{r}_2, \bm{t}_3\right\rangle 
+ \left\langle \bm{h}_4, \bm{r}_1, \bm{t}_4\right\rangle.
\end{align*}
\end{itemize}
As $\langle\bm h_i, \bm r_k, \bm t_j\rangle = \bm h_i^\top\diag{\bm r_k}\bm t_j$,
all the above BLMs can be written in the form of a bilinear function
\begin{equation} \label{eq:bilinear}
\bm h^\top \bm R_{(\bm r)}\bm t,
\end{equation} 
where\footnote{With a slight abuse of notations, we still use $d$ to denote the dimensionality after this transformation.}
$\bm h= [\bm h_1^\top, \dots, \bm h_4^\top]^\top,
\bm t = [\bm t_1; \dots; \bm t_4] \in \R^d$,
and
$\bm R_{(\bm r)}\in\mathbb R^{d\times d}$
is a matrix
with $4\times 4$ blocks,
each block being either $\bm 0, \pm \diag{\bm r_1}, \dots$, or $\pm\diag{\bm r_4}$.
Figure~\ref{fig:graphsf} shows graphically
the $\bm R_{(\bm r)}$ for the BLMs considered.

\subsection{Unified Bilinear Model}
\label{ssec:unifiedBLM}

Using the above unified representation, 
the design of BLM becomes designing $\bm R_{(\bm r)}$ in (\ref{eq:bilinear}).

\begin{definition}[Unified BiLinear Model] 
	\label{def:unify}
The desired scoring function is  of the form
\begin{align}
f_{\bm{A}}({h},{r},{t})  
= 
\sum\nolimits_{i,j=1}^{K}
\!{\emph{sign}}(A_{ij})\!\left\langle\bm{h}_i, \bm r_{|A_{ij}|}, \bm{t}_j\right\rangle,
\label{eq:uni1}
\end{align}
where
\begin{equation} \label{eq:A}
\bm{A}\in \{0,\pm 1,\dots, \pm K\}^{ K\times K }
\end{equation} 
is called the {\em structure matrix}.
Here, we define $\bm r_0\equiv \bm 0$, and $\emph{sign}(0)=0$.
\end{definition}
It can be easily seen that this covers all the BLMs 
in Section~\ref{ssec:unified}
when $K=4$.  
Let $g_K(\bm{A}, \bm r)$ 
be a matrix with $K\times K$ blocks, with its $(i,j)$-th block:
\begin{equation} \label{eq:gk}
[g_K(\bm{A}, \bm r)]_{ij}=\text{sign}(A_{ij})\cdot \text{diag}(\bm r_{|A_{ij}|}).
\end{equation}
The form in (\ref{eq:uni1}) can be written more compactly as
\begin{equation} 
\label{eq:uni} 
f_{\bm{A}}({h},{r},{t})  
= \bm{h}^{\top} g_{K}(\bm{A}, \bm{r} ) \bm{t}.
\end{equation} 
A graphical illustration is shown in Figure~\ref{fig:autosf}.

\begin{figure}[ht]
\centering
\vspace{-4px}
\includegraphics[width=0.9\columnwidth]{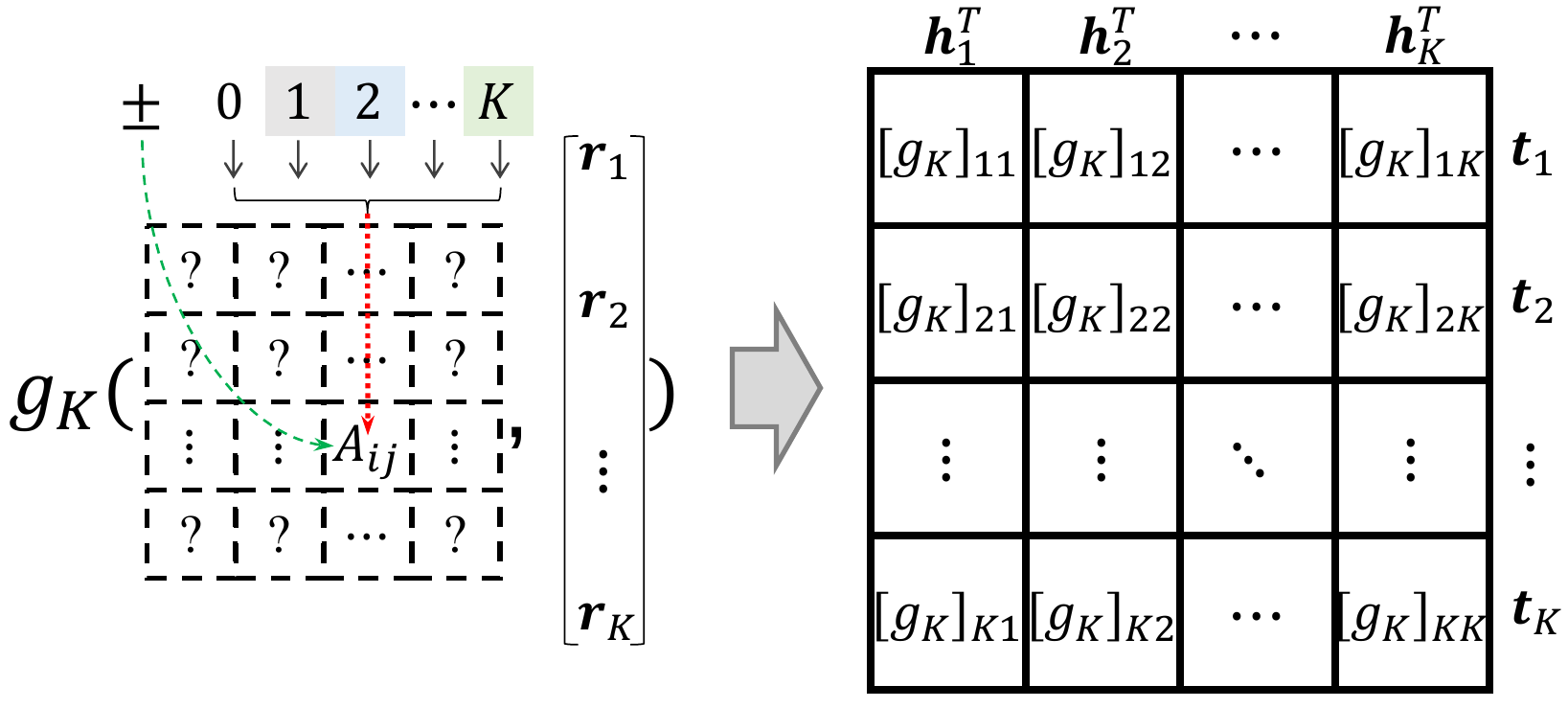}
\vspace{-8px}
\caption{A graphical illustration of 
the proposed form of $f_{\bm{A}}({h},{r},{t})$.}
\vspace{-4px}
\label{fig:autosf}
\end{figure}

The following Proposition gives a necessary and sufficient condition for the
BLM  with scoring function
in (\ref{eq:uni})
to be fully expressive.
The proof is in Appendix~\ref{app:expBLMs}.

\begin{prop} \label{pr:expBLMs}
Let
\begin{align} 
\mathcal{C} 
\equiv 
\{ \bm{r} & \in\mathbb R^K \,|\, 
\bm r\neq \bm 0, 
\notag
\\ 
&r[i]\in
\{0,\pm1,\dots,\pm K\},
i=1,\dots, K \}.
\label{eq:C}
\end{align} 
Given an $\bm{A}$
in (\ref{eq:A}),
the bilinear model with scoring function 
\eqref{eq:uni}
is fully expressive
if
\begin{enumerate}
\item $\exists \hat{\bm{r}} \in \mathcal{C}$ 
such that 
$g_{K}(\bm{A},\hat{\bm{r}})$ is symmetric (i.e.,
$g_{K}(\bm{A},\hat{\bm{r}})^\top=g_{K}(\bm{A}, \hat{\bm{r}})$),
and
\item $\exists \breve{\bm{r}}
\in \mathcal{C}$ 
such that 
$g_{K}(\bm{A},\breve{\bm{r}})$ is skew-symmtric
(i.e.,
$g_{K}(\bm{A},\breve{\bm{r}})^\top = -g_{K}(\bm{A},\breve{\bm{r}})$).
\end{enumerate}
\end{prop}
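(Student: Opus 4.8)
The plan is to reduce full expressiveness (Definition~\ref{def:express}) to the problem of realizing, for each relation $r$ separately, an \emph{arbitrary} real matrix as the slice $M^{(r)}\in\R^{|\mathcal E|\times|\mathcal E|}$ with $M^{(r)}_{ht}=f_{\bm A}(h,r,t)$, and then to split that slice into its symmetric and skew-symmetric parts, $M^{(r)}=\tfrac12(M^{(r)}+M^{(r)\top})+\tfrac12(M^{(r)}-M^{(r)\top})$, treating each part with one of the two hypotheses. First I would justify the reduction: since the chunk dimension $m=d/K$ is free, I partition the $m$ coordinates of every chunk into $|\mathcal R|$ disjoint blocks and force each relation embedding $\bm r$ to be supported only on its own block. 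Because $f_{\bm A}$ is multilinear and linear in $\bm r$, the relation-coordinate blocks of all other relations are switched off, so the slices decouple and share only the (single) per-entity embeddings; hence it suffices to fit one arbitrary slice per relation on a dedicated coordinate block.

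For the symmetric part I would exploit hypothesis~1. Replicating the template across a sub-block of slots, i.e. setting each chunk $\bm r_i$ equal to $\hat r[i]\,\bm v$ for a common $\bm v$, makes $g_K(\bm A,\bm r)$ the matrix whose $(i,j)$ block is $S_{ij}\diag(\bm v)$, where $S\equiv g_K(\bm A,\hat{\bm r})$ is symmetric. The score then collapses to $\sum_{p} v_p\,(\bm h^{(p)})^\top S\,\bm t^{(p)}$, a sum over slots $p$ of a fixed symmetric bilinear form, with $\bm h^{(p)}\in\R^K$ collecting the $p$-th entries of the chunks. Taking a nonzero eigenpair $(\lambda,\bm u)$ of $S$, I would place every entity's slot-$p$ vector along $\bm u$; one slot then realizes an arbitrary rank-one symmetric term $\mu\,\bm w\bm w^\top$ (its sign absorbed into $v_p$), and using one slot per eigenvalue in the spectral decomposition of the target symmetric part reproduces it exactly.

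The skew-symmetric part is analogous using hypothesis~2 and $W\equiv g_K(\bm A,\breve{\bm r})$, but this is where I expect the real work. A real skew-symmetric $W\neq\bm 0$ has a nonzero canonical block $\left(\begin{smallmatrix}0&\theta\\-\theta&0\end{smallmatrix}\right)$ with $\theta\neq0$; restricting each entity's slot vector to the corresponding two coordinates turns one slot into an elementary skew term $\theta(\bm a\bm b^\top-\bm b\bm a^\top)$, and summing over slots reproduces any skew-symmetric target via its real (block-diagonalized) canonical form. I would then devote separate slot sub-blocks to the symmetric and skew-symmetric parts of a given slice, so that a single relation embedding (the template $\hat{\bm r}$ on one sub-block and $\breve{\bm r}$ on another) yields exactly $M^{(r)}$. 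Since the $0/1$ target tensor is in particular real, matching every slice establishes full expressiveness.

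\textbf{Main obstacle.} The delicate points I would treat with care are: (i) checking that the two realizations are genuinely nonzero—i.e. that $S$ has a nonzero eigenvalue and $W$ a nonzero $2\times2$ block—so that the rank-one and elementary-skew building blocks actually exist (this is where $\hat{\bm r},\breve{\bm r}\in\mathcal C$ and the nonvanishing of $\bm A$ enter); and (ii) the bookkeeping that the symmetric slots, the skew slots, and the per-relation blocks never interfere, even though each entity carries a single embedding shared across all relations and both parities. The skew-symmetric construction, which relies on the real canonical form rather than a plain eigendecomposition, is the step I expect to demand the most attention.
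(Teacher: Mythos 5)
Your proposal is correct and shares the paper's overall skeleton: decouple the relations by assigning each one a disjoint block of embedding coordinates (the paper's Lemma on the tensor $\mathbf G$ does exactly this), split each slice into its symmetric and skew-symmetric halves, and realize each half from one of the two hypotheses. The divergence is in the two realization steps. For the symmetric half both arguments rest on the spectral theorem, but you distribute it across coordinates --- one rank-one term per slot, eigenvalues absorbed into the relation coordinates $v_p$ --- whereas the paper places the entire orthogonal factor $\bm P$ into a single entity-embedding chunk and $\text{vec}(\bm\Lambda)$ into a single relation chunk, realizing $\bm P^\top\bm\Lambda\bm P$ in one shot. For the skew-symmetric half the gap is wider: you invoke the real canonical form to write the target as a sum of elementary terms $\theta(\bm a\bm b^\top-\bm b\bm a^\top)$, one per slot, while the paper needs no decomposition of the target at all --- it puts $\bm I$ in one entity chunk and the target matrix $\breve{\bm G}$ itself in another, and uses skew-symmetry of the template to get $\bm I^\top(\tfrac12\bm I)\breve{\bm G}+\breve{\bm G}^\top(-\tfrac12\bm I)\bm I=\breve{\bm G}$. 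The paper's trick is shorter and sidesteps the canonical form, which you rightly single out as the most delicate step of your route; your construction is more uniform across the two parities and only ever needs a single nonzero entry of the $K\times K$ templates $S$ and $W$. Your flagged concern about nonvanishing is genuine but shared with the paper: the hypotheses only give $\hat{\bm r},\breve{\bm r}\neq\bm 0$, not $g_K(\bm A,\hat{\bm r})\neq\bm 0$ or $g_K(\bm A,\breve{\bm r})\neq\bm 0$, and the published proof silently assumes the latter (its case split ``some diagonal entry is nonzero / some off-diagonal entry is nonzero'' has no third case), so on this point you are if anything more careful than the paper.
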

Table~\ref{tab:conditions} shows examples 
		of $\hat{\bm{r}}$   and $\breve{\bm{r}}$  
for the existing BLMs  (ComplEx, HolE, Analogy, SimplE, CP, and QuatE), thus justifying that they are fully expressive.

\begin{table}[ht]
	\centering
\caption{Example
		$\hat{\bm{r}}$   (resp. $\breve{\bm{r}}$) for 
the 
two conditions
in Proposition~\ref{pr:expBLMs}.}
	\label{tab:conditions}
	\vspace{-10px}
	\setlength\tabcolsep{8pt}
	\renewcommand{\arraystretch}{1.1}
	\begin{tabular}{c|c|c}
		\toprule
		model & $\hat{\bm{r}}$   & $\breve{\bm{r}}$   \\
		\midrule
		ComplEx / HolE & $[1,2,0,0]$  & $[0,0,3,4]$ \\
		Analogy  & $[1,2,3,0]$  & $[0,0,0,4]$ \\
		SimplE / CP & $[1,2,1,2]$  & $[1,2,-1,-2]$ \\
		QuatE  &  $[1,0,0,0]$   & $[0,2,3,4]$  \\
		\bottomrule
	\end{tabular}
\vspace{-10px}
\end{table}

\subsection{Searching for BLMs}
\label{ssec:searSFs}

Using the family of unified BLMs in Definition~\ref{def:unify} as the search
space $\mathcal A$ for structure matrix
$\bm{A}$,
the search for a good data-specific BLM 
can be formulated
as
the following 
AutoML problem.

\begin{definition}[Bilinear Model Search (AutoBLM)]
\label{def:autoSF}
Let $F(\bm{P}; \bm{A})$ be
a KG embedding model (where 
$\bm{P}$ includes the
entity embedding matrix
$\bm{E}$ and relation embedding matrix $\bm{R}$,
and 
$\bm{A}$
is the structure matrix)
and $M(F, \mathcal S)$ be the performance measurement
of $F$ on triples $\mathcal{S}$
(the higher the better).
The AutoBLM
 problem is formulated as:
\begin{eqnarray}
&\bm{A}^* &
\in     
 \emph{Arg}\max\nolimits_{\bm{A} \in \mathcal{A}}
M\left(F(\bm{P}^*; \bm{A}), \mathcal{S}_{\text{val}}\right)
\label{eq:autosf:l1}
\\
\text{s.t.} 
&\bm{P}^* &
=
\arg\max\nolimits_{\bm{P}} M\left(F(\bm{P}; \bm{A}), \mathcal{S}_{\text{tra}}\right),
\label{eq:autosf}
\end{eqnarray}
where
\begin{align}
\mathcal{A}= \{ \bm{A} 
& = [A_{ij}]\in \R^{ K\times K } 
\notag
\\
& |\; A_{ij}\in \{
0,\pm 1,\dots, \pm K\} \;\forall i,j=1,\dots,K\},
\label{eq:space}
\end{align}
contains all the possible choices of $\bm{A}$,
$\mathcal{S}_{\text{tra}}$
is the training set, and 
$\mathcal{S}_{\text{val}}$ is the
validation set.
\end{definition}
As a bi-level optimization problem,
we first train the model to obtain $\bm{P}^*$ (converged model parameters) 
on the training set $\mathcal{S}_{\text{tra}}$ by 
\eqref{eq:autosf},
and then search for a better $\bm{A}$ (and consequently a
better relation matrix 
$g_{K}(\bm{A}, \bm r)$)
based on its performance $M$ on the validation set $\mathcal{S}_{\text{val}}$
in \eqref{eq:autosf:l1}.
Note that the objectives
in (\ref{eq:autosf:l1}) and 
(\ref{eq:autosf})
are non-convex, and
the search space is large
(with $(2K+1)^{K^2}$ candidates, as can be seen from
(\ref{eq:A})).
Thus, solving \eqref{eq:autosf} can be expensive and
challenging.

\subsection{Degenerate and Equivalent Structures}
\label{ssec:challange}

In this section, 
we introduce properties specific to the proposed search space $\mathcal{A}$.
A careful exploitation of these would be key to an efficient
search.

\subsubsection{Degenerate Structures}
\label{sssec:degenerate}

Obviously, not all structure matrices 
in (\ref{eq:A})
are
equally good.
For example, if all the nonzero blocks 
in $g_K(\bm{A}, \bm r)$
are in the first column,
$f_{\bm{A}}$ will be zero
for all head embeddings with $\bm h_1=\bm 0$.
These structures should be avoided.

\begin{definition}[Degenerate structure]
Matrix
${\bm{A}}$ 
is degenerate
if (i) there exists $\bm h\neq {\bf 0}$
such that $\bm h^\top g_K(\bm{A},\bm r)\bm t=0, \forall \bm r, \bm t$;
or (ii) there exists $\bm r\neq {\bf 0}$
such that $\bm h^\top g_K(\bm{A},\bm r)\bm t=0, \forall \bm h, \bm t$.
	\label{def:degenerate}
\end{definition}
With a degenerate ${\bm{A}}$,
the triple $(h,r,t)$ is always non-plausible for every nonzero
head embedding $\bm h$ or relation embedding $\bm r$,
which limits expressiveness of the scoring function.
The following Proposition
shows that
it is easy to check whether $\bm{A}$ is degenerate.
Its proof is in Appendix~\ref{app:degenerate}.
\begin{prop} \label{pr:degenerate}
$\bm{A}$ is not degenerate if and only if
$\emph{rank}(\bm{A})=K$
and
$\{1,\dots, K\} \subset \{|A_{ij}|: i,j=1,\dots, K\}$.
\end{prop}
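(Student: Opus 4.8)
The plan is to translate each clause of Definition~\ref{def:degenerate} into a statement about the family of signed indicator matrices into which $g_K(\bm{A},\bm{r})$ decomposes, and then to match those statements to the two conditions in the Proposition. First I would reduce to the case where each chunk is one-dimensional: writing $\bm{h}^\top g_K(\bm{A},\bm{r})\bm{t}=\sum_{i,j}\text{sign}(A_{ij})\langle\bm{h}_i,\bm{r}_{|A_{ij}|},\bm{t}_j\rangle$ and grouping the monomials by the intra-chunk coordinate they share shows that the whole form vanishes (identically in $\bm h$, in $\bm r$, or in $\bm t$) exactly when it vanishes coordinate-by-coordinate, so degeneracy is unaffected if we replace every $\text{diag}(\bm r_k)$ by the scalar $r_k$. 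In this reduced $K\times K$ setting, define $\bm{B}^{(k)}\in\R^{K\times K}$ by $B^{(k)}_{ij}=\text{sign}(A_{ij})$ when $|A_{ij}|=k$ and $B^{(k)}_{ij}=0$ otherwise, so that $g_K(\bm{A},\bm{r})=\sum_{k=1}^{K} r_k\,\bm{B}^{(k)}$. The key structural facts are that the $\bm{B}^{(k)}$ have pairwise disjoint supports whose union is the support of $\bm{A}$, and that $\bm{A}=\sum_{k=1}^{K} k\,\bm{B}^{(k)}$.

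Next I would dispatch clause (ii). Since $\bm{h}^\top M\bm{t}=0$ for all $\bm h,\bm t$ iff $M=\bm 0$, clause~(ii) asserts that some $\bm r\neq\bm 0$ makes $\sum_k r_k\bm{B}^{(k)}=\bm 0$. Because the supports are disjoint there is no cancellation between distinct $k$, so the sum vanishes iff $r_k\bm{B}^{(k)}=\bm 0$ for every $k$; hence such a nonzero $\bm r$ exists iff some $\bm{B}^{(k)}=\bm 0$, i.e. iff the value $k$ is missing from $\{|A_{ij}|\}$. Thus the absence of clause~(ii) is precisely $\{1,\dots,K\}\subset\{|A_{ij}|\}$, the second condition in the Proposition.

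For clause (i), I would note that $\bm h^\top g_K(\bm A,\bm r)\bm t=0$ for all $\bm r,\bm t$ forces $\bm h^\top g_K(\bm A,\bm r)=\bm 0^\top$ for all $\bm r$, and, expanding in the free variables $r_1,\dots,r_K$, this is equivalent to $\bm h^\top\bm{B}^{(k)}=\bm 0^\top$ for every $k$; that is, $\bm h$ is a common left-null vector of the $\bm{B}^{(k)}$. The easy direction is then immediate: any common left-null vector satisfies $\bm h^\top\bm A=\sum_k k\,\bm h^\top\bm{B}^{(k)}=\bm 0^\top$, so clause~(i) implies $\text{rank}(\bm A)<K$, and equivalently $\text{rank}(\bm A)=K$ excludes clause~(i). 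Combined with the previous paragraph, this yields the ``if'' part of the Proposition: $\text{rank}(\bm A)=K$ together with value-coverage rules out both clauses and hence non-degeneracy.

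The converse of the clause-(i) analysis, namely $\text{rank}(\bm A)<K\Rightarrow$ the $\bm{B}^{(k)}$ admit a common left-null vector, is the step I expect to be the main obstacle. A dependency $\bm h^\top\bm A=\bm 0^\top$ only states $\sum_k k\,(\bm h^\top\bm{B}^{(k)})=\bm 0^\top$, which mixes the $K$ row vectors $\bm h^\top\bm{B}^{(k)}$ through the fixed integer weights $1,\dots,K$ and does not, on its own, force each $\bm h^\top\bm{B}^{(k)}$ to vanish. To close this gap I would try to use the sign-valued, disjoint-support structure of the $\bm{B}^{(k)}$ together with value-coverage, refining a row dependency of $\bm A$ block-column by block-column into simultaneous dependencies for every $\bm{B}^{(k)}$. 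Pinning down the precise structural hypothesis under which this refinement is valid, and controlling the cases where a left-null vector of $\bm A$ distributes its weight across several relation-indices, is where the real work of the proof lies.
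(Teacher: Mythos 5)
Your reduction to scalar chunks, your decomposition $g_K(\bm{A},\bm{r})=\sum_{k=1}^K r_k\bm{B}^{(k)}$ with disjoint supports, and your handling of clause (ii) are all sound, and they line up with the paper's argument in substance (the paper's corresponding lemma forces $g_K(\bm{A},\bm{r})=\bm 0$ by testing against unit vectors and then invokes value coverage; its easy rank direction likewise extracts coefficients of $\bm r$ via the Kronecker structure $g_K(\bm{A},\bm 1)=\text{sign}(\bm{A})\otimes\bm I$). Your observation that clause (i) is exactly the existence of a common left-null vector of the $\bm{B}^{(k)}$, and that such a vector annihilates $\bm{A}=\sum_k k\,\bm{B}^{(k)}$, correctly establishes the ``if'' half of the proposition. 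But the proposal stops there: you never prove that $\text{rank}(\bm{A})<K$ forces clause (i), so only one implication of the ``if and only if'' is actually delivered. That is a genuine gap, not a detail.

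It is, however, the right place to get stuck, because the implication you were trying to refine is false. Take $K=2$ and $\bm{A}=\left(\begin{smallmatrix}1&1\\2&2\end{smallmatrix}\right)$: then $\text{rank}(\bm{A})=1<K$ and both values $1,2$ appear, yet $\bm h^\top g_2(\bm{A},\bm r)=\big(\bm h_1^\top\diag{\bm r_1}+\bm h_2^\top\diag{\bm r_2},\ \bm h_1^\top\diag{\bm r_1}+\bm h_2^\top\diag{\bm r_2}\big)$ vanishes for all $\bm r$ only if $\bm h=\bm 0$, and $g_2(\bm{A},\bm r)=\bm 0$ only if $\bm r=\bm 0$; so this $\bm{A}$ is not degenerate under Definition~\ref{def:degenerate} even though the proposition says it must be (an analogous $4\times 4$ example with rows $(1,1,2,2)$, $(2,2,1,1)$, $(3,3,4,4)$, $(4,4,3,3)$ works for $K=4$). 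The paper's own proof of this direction argues that $\text{rank}(g_K(\bm{A},\bm r))<d$ implies ``for all $\bm r$ there exists $\bm h\neq\bm 0$ with $g_K(\bm{A},\bm r)^\top\bm h=\bm 0$,'' which has the quantifiers in the wrong order relative to Definition~\ref{def:degenerate} ($\exists\bm h\ \forall\bm r$ is required); your $\bm{B}^{(k)}$ decomposition makes the failure visible, since a dependency $\bm h^\top\bm{A}=\bm 0^\top$ mixes the rows of the $\bm{B}^{(k)}$ through the weights $1,\dots,K$ and need not split. So the honest verdict is: your proposal is incomplete, but the missing direction cannot be closed as stated; the correct non-degeneracy criterion for clause (i) is the absence of a common left-null vector of the $\bm{B}^{(k)}$ (equivalently of the blocks of $g_K(\bm{A},\cdot)$), which is implied by, but not equivalent to, $\text{rank}(\bm{A})=K$.
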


Since $K$  is very small (which is equal to 4 here), the above conditions are
inexpensive to check. Hence,
we can efficiently filter out degenerate 
$\bm{A}$'s
and avoid wasting
time in training and evaluating these structures.

\subsubsection{Equivalence}
\label{sssec:equiv}

In general, two different $\bm{A}$'s can have the same performance (as measured by 
$F$ in Definition~\ref{def:autoSF}).
This is captured in the following notion of equivalence.
If a group of $\bm{A}$'s are equivalent, we only need to evaluate one of them.

\begin{definition}[Equivalence]
	\label{def:equiv}
Let 
${\bm P}^* =\arg\max\nolimits_{\bm P} $
$M(F(\bm{P}; \bm{A}), \mathcal{S})$  and
${\bm P'}^* =\arg\max\nolimits_{\bm P'} $
$M(F(\bm{P'}; \bm
A'), \mathcal{S})$.
If $\bm{A} \neq \bm{A}'$ but
$M\big(F({\bm P}^*; \bm{A}), \mathcal S \big)$
$= M\big(F({\bm P'}^*;
\bm{A}'), \mathcal S \big)$
for all 
$\mathcal S$,
then 
$\bm{A}$ is equivalent to $\bm{A}'$
(denoted $\bm{A}\equiv \bm{A}'$).	

\end{definition}

The following
Proposition
shows several conditions for two structures to be equivalent.
Its proof is in Appendix~\ref{app:equiv}.
Examples are shown in Figure~\ref{fig:equiv}.

\begin{figure}[t]
	\centering
	\vspace{-3px}
	\subfigure[Analogy. \label{fig:ana}]
	{\includegraphics[height=3.1cm]{figure/analogy.pdf}}
	\qquad\quad
	\subfigure[Permuting rows and columns. \label{fig:cond1}]
	{\includegraphics[height=3.1cm]{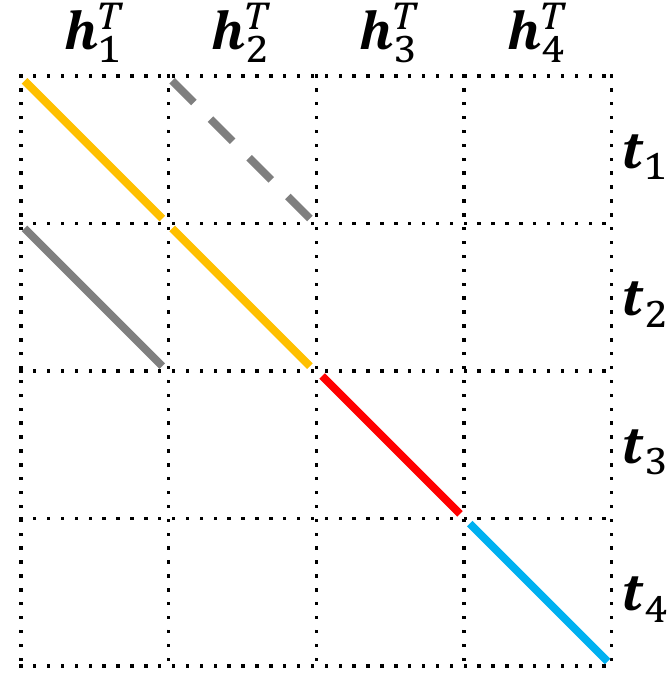}}
	
	\subfigure[Permuting values. \label{fig:cond2}]
	{\includegraphics[height=3.1cm]{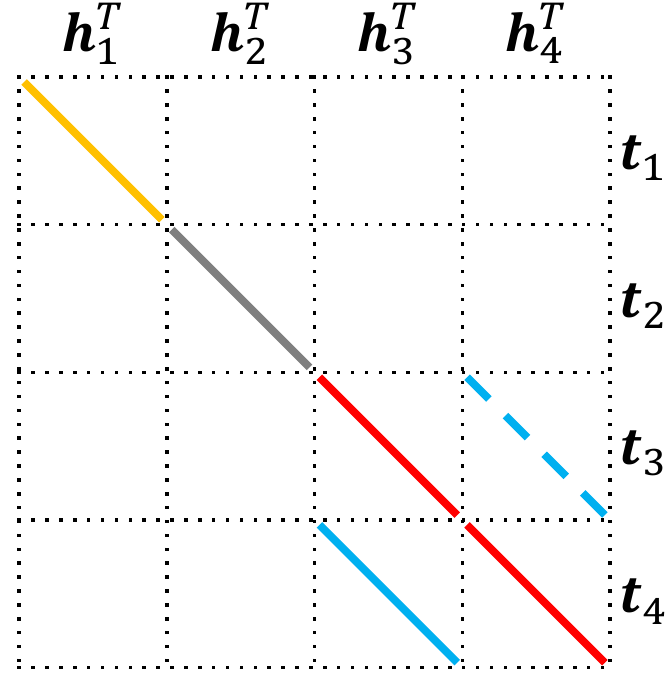}}
	\qquad\quad
	\subfigure[Flipping signs. \label{fig:cond3}]
	{\includegraphics[height=3.1cm]{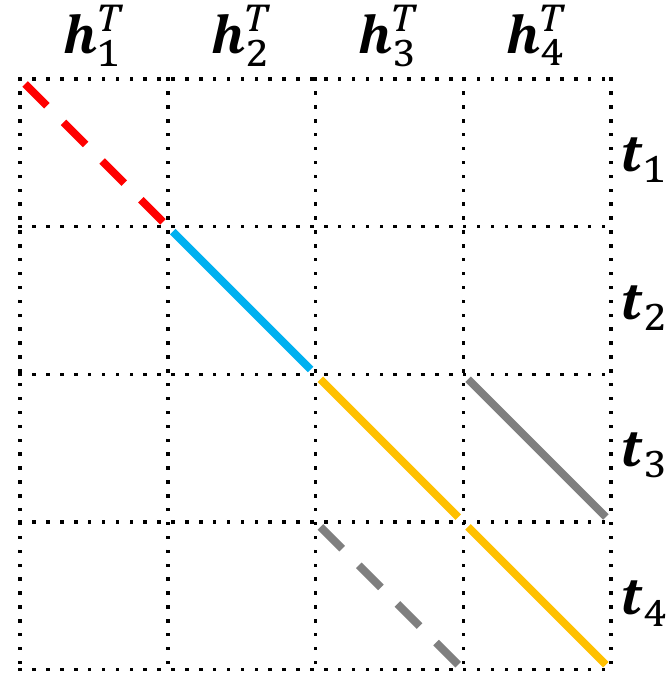}}
	\vspace{-8px}
	\caption{Illustration of $\bm R_{(\bm r)}$ for Analogy (Figure~\ref{fig:ana}) and
		three example equivalent structures
		based on the conditions in Proposition~\ref{pr:equiv}.
		Figure~\ref{fig:cond1} permutes the index $[1,2,3,4]$ of rows and columns in $\bm{A}$ to $[3,4,1,2]$;
		Figure~\ref{fig:cond2} permutes the values $[1,2,3,4]$ in $\bm{A}$ to $[3,4,1,2]$;
		Figure~\ref{fig:cond3} flips the signs of values $[1,2,3,4]$ in $\bm{A}$ to $[-1,2,-3,4]$.}
	\label{fig:equiv}
	\vspace{-8px}
\end{figure}

\begin{prop} \label{pr:equiv}
Given an $\bm{A}$ in (\ref{eq:A}), construct
$\bm \Phi_{\bm{A}}\in \mathbb R^{K\times K^2}$ such
	that
	$[\bm \Phi_{\bm{A}}]_{
	|A_{ij}|
	,(i-1)K+j} = \emph{sign}(A_{ij})$
	if $|A_{ij}|
	\in\{1,\dots, K\}$,
	and 0
otherwise.\footnote{Intuitively,
in $\bm \Phi_{\bm{A}}$, the indexes of nonzero values 
in its 
	$|A_{ij}|$-th
row 
indicate positions of elements in ${\bm{A}}$ whose absolute values are 
	$|A_{ij}|$.}
Two structure matrices
$\bm{A}$ and $\bm{A}'$ are equivalent
if any one of the following conditions 
is satisfied.
\begin{enumerate}[label=(\roman*)]
	\item 
	Permuting rows and columns: There
	exists a permutation matrix $\bm \Pi\in\mathbb \{0,1\}^{K\times K}$ such that $\bm{A}' = \bm \Pi^{\top} \bm{A}\bm \Pi$.
		\item 
Permuting values:
There exists a permutation matrix $\bm \Pi\in\{0,1\}^{K\times K}$ such
that
		$\bm\Phi_{\bm{A}'} = \bm \Pi\bm \Phi_{\bm{A}}$;
		\item Flipping signs: There exists a sign vector $\bm s\in \{\pm 1\}^K$
		such that 
$[\bm\Phi_{\bm{A}'}]_{i,\cdot} = s_i \cdot [\bm\Phi_{\bm{A}}]_{i,\cdot}, \forall i=1,\dots, K$.
\end{enumerate}
\end{prop}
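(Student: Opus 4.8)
The plan is to prove each of the three conditions by exhibiting an explicit invertible reparametrization of the embeddings that turns the scoring function $f_{\bm{A}'}$ into $f_{\bm{A}}$ term by term, and then to argue that such a bijection preserves the optimal performance for every $\mathcal{S}$. The starting observation is that the performance $M(F(\bm{P};\bm{A}),\mathcal S)$ depends on the parameters $\bm{P}=(\bm E,\bm R)$ only through the collection of scores $\{f_{\bm{A}}(\bm{P};h,r,t)\}$. Hence, if for a given condition I can produce a bijection $T$ on the parameter space with $f_{\bm{A}'}(\bm{P};h,r,t)=f_{\bm{A}}(T(\bm{P});h,r,t)$ for all triples, then $M(F(\bm{P};\bm{A}'),\mathcal S)=M(F(T(\bm{P});\bm{A}),\mathcal S)$, and taking the maximum over $\bm{P}$ (equivalently over $T(\bm{P})$, since $T$ is a bijection) gives $\max_{\bm P'}M(F(\bm P';\bm A),\mathcal S)=\max_{\bm P}M(F(\bm P;\bm A'),\mathcal S)$ for every $\mathcal S$. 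Together with $\bm A\neq\bm A'$ this is exactly Definition~\ref{def:equiv}.

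For condition (i), writing $\bm A'=\bm\Pi^\top\bm A\bm\Pi$ means $A'_{ij}=A_{\pi(i),\pi(j)}$ for the permutation $\pi$ encoded by $\bm\Pi$. I would reindex the defining sum \eqref{eq:uni1} and check that permuting the chunks of the head and tail embeddings by the same $\pi$ (i.e.\ $\tilde{\bm h}_i=\bm h_{\pi^{-1}(i)}$ and $\tilde{\bm t}_j=\bm t_{\pi^{-1}(j)}$) sends $f_{\bm A'}$ to $f_{\bm A}$; the point is that applying the same permutation to rows and columns is consistent with head and tail sharing a single entity embedding, so $T$ acts as a blockwise coordinate permutation on $\bm E$ and is plainly invertible. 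For condition (ii), $\bm\Phi_{\bm A'}=\bm\Pi\bm\Phi_{\bm A}$ permutes the rows of $\bm\Phi$, whose row index is by construction the value $|A_{ij}|$, i.e.\ the index of the relation chunk; thus $|A'_{ij}|=\sigma(|A_{ij}|)$ with signs unchanged, and the reparametrization $\tilde{\bm r}_k=\bm r_{\sigma(k)}$ (a permutation of the relation chunks, acting only on $\bm R$) gives $f_{\bm A}(h,\tilde r,t)=f_{\bm A'}(h,r,t)$. For condition (iii), scaling the $i$-th row of $\bm\Phi_{\bm A}$ by $s_i$ flips the signs of exactly the entries with $|A_{ij}|=i$, so $\mathrm{sign}(A'_{ij})=s_{|A_{ij}|}\,\mathrm{sign}(A_{ij})$ while $|A'_{ij}|=|A_{ij}|$; here I absorb the sign into the relation chunks by $\tilde{\bm r}_k=s_k\bm r_k$ and use multilinearity of $\langle\cdot,\cdot,\cdot\rangle$ in its middle argument to obtain $f_{\bm A}(h,\tilde r,t)=f_{\bm A'}(h,r,t)$. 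In each case the map $T$ is a permutation or a diagonal sign change of real coordinates, hence a bijection.

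The routine but essential verification is the term-by-term score identity, which in all three cases follows from the multilinearity of the triple product $\langle\bm a,\bm b,\bm c\rangle=\sum_i a_ib_ic_i$ together with the fact that the chunks $\bm h_i,\bm r_k,\bm t_j$ appear linearly in \eqref{eq:uni1}; the terms with $A_{ij}=0$ contribute nothing under the convention $\bm r_0\equiv\bm 0$, $\mathrm{sign}(0)=0$, so they cause no trouble. I expect the main obstacle to be bookkeeping rather than mathematics: one must line up the abstract matrix statements with the right action on embedding chunks, and in particular exploit the design of $\bm\Phi_{\bm A}$ in conditions (ii) and (iii), where the row index is precisely the relation-chunk label so that a row permutation (respectively a row sign change) of $\bm\Phi$ corresponds exactly to permuting (respectively sign-flipping) the relation chunks. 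Getting the index conventions for $\bm\Pi$ and $\pi$ mutually consistent, and confirming that each $T$ respects any norm-based structure (permutations and sign flips are isometries on the affected coordinates), is where the care is needed.
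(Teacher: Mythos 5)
Your proposal is correct and follows essentially the same route as the paper: for each condition you exhibit the same explicit reparametrizations (permuting entity-embedding chunks for (i), permuting relation chunks for (ii), sign-flipping relation chunks for (iii)) and verify the term-by-term score identity from multilinearity, then transfer optimality. The only cosmetic difference is that you package the optimality step as a single bijection-of-parameters argument, whereas the paper's Lemma on equivalence checks the two directions (from the optimum of $\bm{A}$ to $\bm{A}'$ and back) separately; these are the same argument.
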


There are $K!$ possible permutation matrices for conditions
(i) and (ii), and $2^K$ 
possible sign vectors for condition (iii). Hence, one has to check
a total of $(K!)^22^K$ combinations.

\section{Search Algorithm}
\label{sec:progred}

In this section, 
we 
design efficient 
algorithms 
to search  for
the structure matrix 
$\bm{A}$ 
in (\ref{eq:A}).
As discussed in Section~\ref{ssec:nas},
 the
model-based approach
requires a proper surrogate model for such a complex space.
Thus,
we will focus on the sample-based approach, particularly on 
the progressive algorithm 
and evolutionary algorithm.
To search efficiently, 
one needs to (i) ensure that each new $\bm{A}$ is neither 
degenerate 
nor equivalent to an already-explored structure; and (ii) the
scoring function $f_{\bm{A}}({h},{r},{t})$ obtained from the new $\bm{A}$ is likely
to have high
performance.
These can be achieved by designing an efficient filter
(Section~\ref{ssec:filter}) and performance predictor
(Section~\ref{sssec:predictor}).
Then, we introduce two search algorithms: progressive search
(Section~\ref{ssec:progressive})
and evolutionary
algorithm
(Section~\ref{ssec:evolution}).

\subsection{Filtering Degenerate and Equivalent Structures}
\label{ssec:filter}

Algorithm~\ref{alg:filter} shows the filtering procedure.
First,
step~\ref{step:filt:rank}
removes
degenerate structure matrices 
by
using the conditions in Proposition~\ref{pr:degenerate}.
Step~\ref{step:filt:gen} then generates a set of $(K!)^22^K$ structures that are equivalent to $\bm{A}$
(Proposition~\ref{pr:equiv}).
$\bm{A}$ is filtered out
if any of its equivalent structures appears in the set 
$\mathcal H$ containing
structure matrices that have
already been explored.
As $K$ is small,
this filtering 
cost 
is very low
compared with the cost of model training in \eqref{eq:autosf}.

\begin{algorithm}[ht]
	\caption{
		Filtering degenerate and equivalent structure matrices. The
		output is ``False" if the input structure matrix
		$\bm{A}$ is to be filtered out.}
	\label{alg:filter}
	\small
	\begin{algorithmic}[1]
		\REQUIRE $\bm{A}$: a $K\times K$ structure matrix, $\mathcal H$: a set of structures.
		\STATE \textbf{initialization:} $\mathcal Q(\bm{A}, \mathcal H)=\text{True}$.
		\STATE \textbf{if} $\det(\bm{A}) = 0$ or 
		$\{1,\dots, K\} \not\subset \{|A_{ij}|: i,j=1,\dots, K\}$,
		\textbf{then} $\mathcal Q(\bm{A}, \mathcal H)=\text{False}$. \label{step:filt:rank}
		\STATE 
		generate a set of equivalent structures $\{\bm{A}'\!: \!\bm{A}'\!\equiv\! \bm{A}\}$ by
		enumerating permutation matrices $\bm P$'s and sign vectors $\bm s$'s. \label{step:filt:gen}$\!\!\!$
		\FOR{$\bm{A}'$ in $\{\bm{A}': \bm{A}'\equiv \bm{A}\}$}
		\STATE \textbf{if} $\bm{A}' \in \mathcal H$,  \textbf{then} $\mathcal Q(\bm{A}, \mathcal H)=\text{False}$,
		and exit the loop.
		\label{step:filt:equiv}
		\ENDFOR
		\RETURN $\mathcal Q(\bm{A}, \mathcal H)$.
	\end{algorithmic}
\end{algorithm}

\subsection{Performance Predictor}
\label{sssec:predictor}

After collecting $N$ structures 
in $\mathcal H$, 
we construct a predictor $\mathcal P$ to estimate the goodness of each 
$\bm{A}$.
As mentioned in Section~\ref{sec:automl}, 
search efficiency depends heavily on  how to 
evaluate the candidate models.

A highly efficient approach is parameter sharing, 
as is popularly used in
one-shot neural architecture search (NAS)~\cite{pham2018efficient,liu2018darts}.
However,
parameter sharing
can be problematic
when used to predict the performance 
of scoring functions.
Consider 
the following two 
$\bm{A}$'s: (i)
$\bm{A}_1$ is a $4\times 4$ matrix of all $+1$'s, and so $f_{\bm{A}_1}(h, r, t) =
\sum_{i,j=1}^4\langle\bm h_i, \bm r_1, \bm t_j\rangle$, and
(ii)
$\bm{A}_2$ is a $4\times 4$ matrix of all $-1$'s, and so $f_{\bm{A}_2}(h, r, t) =
-\sum_{i,j=1}^4\langle\bm h_i, \bm r_1, \bm t_j\rangle
= -f_{\bm{A}_1}(h, r, t)$.
When parameter sharing is used, it is likely that the performance predictor will output different
scores for
$\bm{A}_1$ 
and $\bm{A}_2$.
However, 
from 
Proposition~\ref{pr:equiv},
by setting $\bm s=[-1,-1,-1,-1]$
in condition (iii),
we have
$\bm{A}_1\equiv \bm{A}_2$ and thus they indeed have the same performance.
This problem 
will also be empirically demonstrated in Section~\ref{sec:exp:ps}.
Hence, instead,
we train and evaluate the models separately as in the 
stand-alone NAS evaluation 
\cite{zoph2017neural,liu2018progressive}.

\begin{algorithm}[ht]
	\caption{Construction of the symmetry-related feature (SRF) vectors.}
	\small
	\label{alg:srf}
	\begin{algorithmic}[1]
		\REQUIRE structure matrix $\bm{A}$.
		\STATE \textbf{initialization:} $\bm \alpha, \bm \beta :=\bm 0$.
		\FOR{$\bm r\in \mathcal C$}
		\IF{$\bm r\neq \bm 0$}
		\STATE $x=\big|\{i: r_i\!=\! 0\}\big|$;
		\STATE $y=\big|\{j>0: r_i\!=\! j \text{ or } r_i\!=\! -j\}\big|$;
		\\ // \textit{for symmetric case}
		\STATE \textbf{if} {$g_{K}(\bm{A}, \bm r) \!-\! g_{K}(\bm{A}, \bm r)^\top \!\!=\! \mathbf 0$} 
		\textbf{then} $\alpha_{(x,y)} \!=\! 1$; \label{step:srf:sym}
		\\ // \textit{for skew-symmetric case}
		\STATE \textbf{if} {$g_{K}(\bm{A}, \bm r) \!+\! g_{K}(\bm{A}, \bm r)^\top \!\!=\! \mathbf 0$} 
		\textbf{then} $\beta_{(x,y)} \!=\! 1$;  \label{step:srf:ssym}
		\ENDIF
		\ENDFOR
		\RETURN $[\text{vec}(\bm \alpha);\text{vec}(\bm \beta)]$.
	\end{algorithmic}
\end{algorithm}

Recall  from Section~\ref{ssec:kg}
that 
it is desirable for
the scoring function
to be fully expressive.
Proposition~\ref{pr:expBLMs} shows that this requires looking for 
a $\hat{\bm{r}}
\in \mathcal{C}$ 
such that $g_K(\bm{A}, \hat{\bm{r}})$ is symmetric and
a $\breve{\bm{r}} 
\in \mathcal{C}$ 
such that $g_K(\bm{A}, \breve{\bm{r}})$ is skew-symmetric.
This motivates us to examine 
each of the 
$(2K + 1)^K-1$ 
${\bm{r}}$'s in $\mathcal{C}$ (defined in (\ref{eq:C}))  and see whether it 
leads to a symmetric or skew-symmetric
$g_K(\bm{A}, {\bm{r}})$.
However, 
directly using all these $(2K + 1)^K-1$ choices as features to a performance predictor 
can be computationally expensive.
Instead,
empirically we find  
that
the following two features can be used to group
the scoring functions:
(i) number of zeros in $\bm{r}$:
$|\{i
\in \{1,\dots, K\}
: r_i= 0\}|$;
and (ii)
number of nonzero absolute values in $\bm{r}$:
$|\{j
>0: r_i= j \text{ or } r_i =-j, \;
i\in \{1,\dots, K\}
\}|$.
The possible choices
is reduced 
to $K(K+1)/2$ (groups of scoring functions).
We keep two 
	symmetry-related feature (SRF)
	as
	$\bm \alpha$ and $\bm \beta$.
If $g_K(\bm{A}, \bm r)$ is symmetric (resp. skew-symmetric) for any
${\bm{r}}$ in $\mathcal{C}$,
the 
entry in $\bm \alpha$ (resp. $\bm \beta$) 
corresponding 
to ${\bm{r}}$ 
is set to 1.
The construction process is also shown in Algorithm~\ref{alg:srf}.
Finally,
the SRF vector is 
composed with $\text{vec}(\bm \alpha)$ and $\text{vec}(\bm \beta)$,
which vectorize the values in $\bm \alpha$ and $\bm \beta$,
and
fed as input 
to a two-layer MLP
for performance prediction.

\begin{algorithm}[t]
	\caption{Progressive search algorithm (AutoBLM).}
	\label{alg:greedy}
	\small
	\begin{algorithmic}[1]
		\REQUIRE
		$I$: number of top structures;
		$N$: number of generated structures;
		$P$: number of structures selected by $\mathcal P$;
		$b_0$:  number of nonzero elements in initial structures;
		filter $\mathcal Q$ and performance predictor $\mathcal P$.
		\STATE \textbf{initialization:} 
		$b:=b_0$, create a candidate set $\mathcal H^b=\emptyset$;
		\label{step:greedyinit}
		\FOR{each $\bm{A}^{(b)}\in\{\bm{A}^{(b)}\}$} \label{step:Kstart}
		\STATE \textbf{if} $\mathcal Q(\bm{A}^{(b)}, \mathcal H^b)$ from Algorithm~\ref{alg:filter} is true \\ \textbf{then} $\mathcal{H}^b \leftarrow \mathcal{H}^b \cup \{ \bm{A}^{(b)}\} $;
		\STATE \textbf{if} $|\mathcal{H}^b|=I$, break loop;
		\ENDFOR  \label{step:Kend}
		\STATE \textit{train} and \textit{evaluate} all $\bm{A}^{(b)}$'s in $\mathcal{H}^b$;
		\STATE add $\bm{A}^{(b)}$'s to $\mathcal{T}^{b}$ and record the performance in $\mathcal Y^b$;
		\STATE update predictor $\mathcal P$ with records in $(\mathcal T^b, \mathcal Y^b)$.
		\REPEAT
		\STATE $b:= b+1$;  \label{step:greedyb}
		\STATE $\mathcal H^b=\emptyset$;
		\REPEAT	 \label{step:gen-start}
		\STATE randomly select a top-$I$ structure ${\bm{A}^{b-1}} \in \mathcal T^{b-1}$; \label{step:top-struct}
		\STATE randomly generate $i_b, j_b, k_b \in \{1, \dots, K \}$, $s_b\in \{ \pm 1 \}$, 
		and form $\bm{A}^{(b)}$ with
		$f_{\bm{A}^{(b)}} \leftarrow f_{\bm{A}^{b-1}} 
		+ s_b \left\langle \bm{h}_{i_b}, \bm{r}_{k_b}, \bm{t}_{j_b}  \right\rangle$;
		\label{step:gen}
		\STATE \textbf{if} $\mathcal Q(\bm{A}^{(b)}, \mathcal H^b\cup\mathcal T^b)$ from Algorithm~\ref{alg:filter} is true \\ \textbf{then} $\mathcal{H}^b \leftarrow \mathcal{H}^b \cup \{ \bm{A}^{(b)}\} $;
		\label{step:filter}
		\UNTIL{$\left|\mathcal H^{b}\right|=N$}  \label{step:gen-end}
		
		\STATE select top-$P$ $\bm{A}^{(b)}$'s in $\mathcal H^{b}$ based on the \textit{predictor} $\mathcal P$;
		\label{step:predict}    
		
		\STATE \textit{train} embeddings and \textit{evaluate} the performance of $\!\bm{A}^{(b)}$'s;
		\label{step:train}
		
		\STATE add $\bm{A}^{(b)}$'s in $\mathcal{T}^{b}$ and record the performance in $\mathcal Y^b$;
		\label{step:record}
		\STATE update the predictor  (the following commented out)
		$\mathcal{P}$ with ($\mathcal{T}={\mathcal{T}}^{b_0}\cup \dots \cup{\mathcal {T}}^b$, ${\mathcal{Y}}={\mathcal {Y}}^{b_0} \cup \dots \cup{\mathcal {Y}}^b$);
		\label{step:update} 
		\UNTIL{budget is exhausted or $b=K^2$;}
		\STATE select the top-$I$ structures in $\mathcal{T}$ based on performance in $\mathcal Y$ to form the set $\mathcal I$.
		\RETURN $\mathcal I$.
		\label{step:return}
	\end{algorithmic}
\end{algorithm}

\subsection{Progressive Algorithm}
\label{ssec:progressive}

To explore the search space $\mathcal A$ in \eqref{eq:space},
the simplest approach 
is by direct sampling.
However,
it can be expensive as the
space
is large.
Note from \eqref{eq:uni1} that the 
complexity of $f_{\bm{A}}({h},{r},{t})$ is controlled by
the number of nonzero elements in $\bm{A}$. 
Inspired by~\cite{liu2018progressive},
we 
propose in this section a progressive algorithm
that starts with
$\bm{A}$'s having only a few
nonzero elements, 
and
then gradually expands the search space
by allowing more nonzeros.

The procedure,
which is called AutoBLM,
is in Algorithm~\ref{alg:greedy}.
Let $\bm{A}^{(b)}$ be an
$\bm{A}$ with
$b$ nonzero elements, and the corresponding BLM be $f_{\bm{A}^{(b)}}$.
In step~\ref{step:greedyinit}, we initialize $b$ to some $b_0$ and create an empty candidate set $\mathcal H^b$.
As $\bm{A}$'s with fewer than $K$ nonzero elements are degenerate
(Proposition~\ref{pr:degenerate}),
we
use $b_0=K$. 
We first sample 
positions of  
$b_0$ 
nonzero elements, and
then
randomly assign them
values in $\{\pm 1, \pm 2, \dots, \pm K\}$.
The other entries
are set 
to zero.

Steps~\ref{step:Kstart}-\ref{step:Kend}
	filter away degenerate and equivalent structures.
The number of nonzero elements $b$ is then increased by $1$
(step~\ref{step:greedyb}).
For each such $b$,
steps~\ref{step:gen-start}-\ref{step:gen-end}
greedily  select a top-performing 
structure (evaluated based on the mean reciprocal rank (MRR)~\cite{wang2017knowledge} 
performance on $\mathcal{S}_{\text{val}}$) in $\mathcal T^{b-1}$,
and
generate  $N$ candidates.
All the candidates 
are checked by the filter
$\mathcal Q$ (Section~\ref{ssec:filter}) 
to avoid degenerate 
or equivalent solutions.
Next, the
predictor $\mathcal P$ in Section~\ref{sssec:predictor}
selects
the top-$P$ $\bm{A}^{(b)}$'s,
which are then trained and evaluated in step~\ref{step:train}.
The training data for $\mathcal P$ is collected with the recorded structures 
and performance
in $(\mathcal{T}, \mathcal Y)$ at step~\ref{step:update}.
Finally,
the top-$I$ structures in $\mathcal T$ evaluated by the corresponding performance
in $\mathcal Y$ are returned.

\begin{algorithm}[t]
	\caption{Evolutionary search algorithm. (AutoBLM+).}
	\label{alg:evolution}
	\small
	\begin{algorithmic}[1]
		\REQUIRE 
		$I$: number of top structures;
		$N$: number of generated structures;
		$P$: number of structures selected by $\mathcal P$;
		$b_0$: number of nonzero elements in initial structures;
		filter $\mathcal Q$, and performance predictor $\mathcal P$.
		\STATE \textbf{initialization:} $\mathcal{I} = \emptyset$; \label{step:evol:init}
		\FOR{each $\bm{A}\in \{\bm{A}^{(b_0)} \}$}
		\STATE \textbf{if} $\mathcal Q(\bm{A}, \mathcal I)$ from Algorithm~\ref{alg:filter} is true  \textbf{then} $\mathcal{I} \leftarrow \mathcal{I} \cup \left\lbrace \bm{A}\right\rbrace $;
		\STATE \textbf{if} $|\mathcal{I}|=I$, break loop;
		\ENDFOR 
		\STATE \textit{train} and \textit{evaluate} all $\bm{A}$'s in $\mathcal{I}$; \label{step:evol:endinit}
		\STATE add $\bm{A}$'s to $\mathcal{T}$ and record the performance in $\mathcal Y$;
		\REPEAT	\label{step:evo-init}
		
		\STATE update predictor $\mathcal P$ with records in $(\mathcal T, \mathcal Y)$.
		\REPEAT
		\STATE $\mathcal H=\emptyset$;
		\STATE \textbf{mutation:} sample $\bm{A}\in\mathcal I$ and mutate to $\bm{A}_{\text{new}}$; \textbf{or}
		\STATE \textbf{crossover:} sample $\bm{A}_{(a)}, \bm{A}_{(b)} \!\in\!
		\mathcal I$, and use crossover to generate $\bm{A}_{\text{new}}$;
		\label{step:evo-co}
		\label{step:evo-mu}
		\STATE \textbf{if} $\mathcal Q(\bm{A}_{\text{new}}, \mathcal H\cup\mathcal T)$ is true by Algorithm~\ref{alg:filter}, 
		\\ \textbf{then} $\mathcal{H} \leftarrow \mathcal{H} \cup \left\lbrace \bm{A}_{\text{new}}\right\rbrace $;
		\label{step:evo-filter}
		\UNTIL{$|\mathcal H| = N$}; \label{step:evo-end}
		\STATE  
		select top-$P$ structures $\bm{A}$ in $\mathcal H$ based on the 
		the \textit{predictor} $\mathcal P$;
		\label{step:evo-pred}
		\FOR{
			each top-$P$ structure
			$\bm{A}$ 
		}
		\STATE  \textit{train} embeddings and \textit{evaluate} the performance of ${\bm{A}}$;
		\label{step:evo-train}
		\STATE \textbf{survive:} update $\mathcal I$ with $\bm{A}$ if ${\bm{A}}$ is better than the worst structure in $\mathcal I$;
		\label{step:evo-update}
		\ENDFOR
		\STATE add $\bm{A}$'s in $\mathcal{T}$ and record the performance in $\mathcal Y$;
		\UNTIL{budget is exhausted;}
		\RETURN $\mathcal{I}$.
	\end{algorithmic}
\end{algorithm}

\subsection{Evolutionary Algorithm}
\label{ssec:evolution}

While progressive search 
can be efficient,
it may not fully explore the search space and can lead to 
sub-optimal solutions~\cite{tropp2004greed}.
The progressive search can only generate structures from fewer non-zero elements
to more ones.
Thus, it can not visit and adjust the structures with fewer non-zero elements
when $b$ is increased.
To address these problems, we consider in this section the use of evolutionary
algorithms~\cite{back1996evolutionary}.

The procedure,
which is called AutoBLM+,
is in Algorithm~\ref{alg:evolution}.
As in Algorithm~\ref{alg:greedy},
we start with structures having $b_0=K$ nonzero elements.
Steps~\ref{step:evol:init}-\ref{step:evol:endinit}
initializes a set $\mathcal I$ of $I$ non-degenerate and non-equivalent structures.
The main difference with 
Algorithm~\ref{alg:greedy}
is in steps~\ref{step:evo-init}-\ref{step:evo-end},
in which new structures are generated 
by mutation and crossover.
For a given structure $\bm{A}$,
mutation changes the value of 
each entry to another one in $\{0, \pm 1, \dots, \pm K \}$ 
with 
a small 
probability $p_m$.
For crossover,
given two structures  $\bm{A}_{(a)}$ and $ \bm{A}_{(b)}$,
each entry of the new structure
has equal probabilities
to be selected from 
the corresponding entries in 
$\bm{A}_{(a)}$  or $\bm{A}_{(b)}$.
After mutation or crossover,
we check if the newly generated $\bm{A}_{\text{new}}$ 
has to be filtered out. 
After $N$ structures are collected,
we 
use the performance predictor $\mathcal P$ in Section~\ref{sssec:predictor}
to select the top-$P$ structures.
These are then trained and evaluated for actual performance.
Finally, structures in $\mathcal I$ with performance 
worse than the newly
evaluated ones are replaced
(step~\ref{step:evo-update}).

\section{Experiments}

In this section,
experiments are performed on a number of KG tasks.
Algorithm~\ref{alg:full} shows the general 
procedure for each task. First,
we find a good hyper-parameter setting 
to train and evaluate different structures
(steps~\ref{step:before-for}-\ref{step:before-best}).
Based on the observation that the performance ranking of scoring functions is
consistent across different $d$'s (details are in Appendix~\ref{sssec:transfer}),
we set $d$ to a smaller value ($64$)
to reduce model training time.
The search algorithm 
is then used
to obtain the set $\mathcal I$ of top-$I$
structures (step~\ref{step:search}).
Finally,
the hyper-parameters 
are fine-tuned
with a larger $d$, and 
the best structure
selected 
(steps~\ref{step:after}-\ref{step:after-endfor}).
Experiments are run on a RTX 2080Ti GPU with 11GB memory.  All algorithms are implemented in python~\cite{paszke2017automatic}.

\begin{algorithm}[ht]
	\caption{Experimental procedure for each KG task.
Here, $H\!P$ denotes
the hyper-parameters 
$\{ \eta, \lambda, m, d \}$.}
	\label{alg:full}
	\small
	\begin{algorithmic}[1]
		\STATE // \textbf{stage 1}: \textit{configure hyper-parameters for scoring function search}.
			
		\FOR{$i = 1, \dots, 10$} \label{step:before-for}
		\STATE fix $d = 64$, randomly select $\eta_i $$\in $$[0$$, 1]$, $\lambda_i $$\in $$[10^{-5}$$, 10^{-1}]$ and $m_i \in \{256$$, 512$$, 1024\}$;
		
		\STATE train \textit{SimplE} with $H\!P_i = \{ \eta_i, \lambda_i, m_i, d \}$, and evaluate the validation MRR; 
		\label{step:before-run}
		
		\ENDFOR
		
		\STATE  
		select the best hyper-parameter setting $\bar{H\!P} \in \{H\!P_i\}_{i = 1}^{10}$; \label{step:before-best} 
	 
		\STATE // \textbf{stage 2}: \textit{search scoring function} 
		\STATE 
		using hyper-parameter setting $\bar{H\!P}$,
		obtain the set $\mathcal I$ of top-$I$ structures from Algorithm~\ref{alg:greedy} or Algorithm~\ref{alg:evolution};
		\label{step:search}
		
		\STATE
		// \textbf{stage 3}: \textit{fine-tune the obtained scoring function}
		\FOR{$j = 1, \dots, 50$
		($j = 1, \dots, 10$
for YAGO3-10)}
		\label{step:after}
			\STATE randomly select a structure $\bm{A}_j \in \mathcal{I}$;
			\STATE randomly select $\eta_j \in [0, 1]$, $\lambda_j \in [10^{-5}, 10^{-1}]$, 
			$m_j \in \{256, 512, 1024\}$,
			and $d_j \in \{256, 512, 1024, 2048\}$;
			
			\STATE train the KG learning model with 
			structure $\bm{A}_j$ and hyper-parameter setting ${H\!P}_j = \{ \eta_j, \lambda_j, m_j, d_j \}$

		\ENDFOR \label{step:after-endfor}
		
		\STATE 
		select the best structure $\{ \bm{A}^*, {H\!P}^* \}
		\in \{ \bm{A}_j, {H\!P}_j \}_{j = 1}^{50}$. 
		 
	\end{algorithmic}
\end{algorithm}

\subsection{Knowledge Graph (KG) Completion}
\label{ssec:KGC}

In this section, we perform experiments 
on KG completion as introduced in Section~\ref{sec:app1}.  
we use the full multi-class log-loss~\cite{lacroix2018canonical},
which is more robust and
has better performance than negative sampling \cite{lacroix2018canonical,zhang2020autosf}.

\subsubsection{Setup}
\label{sssec:kgcsetup}

\parabegin{Datasets.}
Experiments are performed on
the following popular benchmark 
datasets:
WN18, FB15k, WN18RR, FB15k237, YAGO3-10,
ogbl-biokg and ogbl-wikikg2
(Table~\ref{tab:dataset}).
WN18 and
FB15k 
are introduced in~\cite{bordes2013translating}.
WN18 is
a subset of the lexical database WordNet~\cite{miller1995wordnet},
while FB15k is
a subset of the Freebase KG~\cite{bollacker2008freebase} for human knowledge.
WN18RR~\cite{dettmers2017convolutional} and FB15k237~\cite{toutanova2015observed}
are obtained
by removing the near-duplicates and inverse-duplicate relations
from
WN18 and FB15k.
YAGO3-10 is
created by~\cite{dettmers2017convolutional}, and
is a subset of 
the semantic KG
YAGO~\cite{suchanek2007yago},
which
unifies WordNet and Wikipedia. 
The
ogbl-biokg and ogbl-wikikg2 datasets are from
the open graph benchmark (OGB) \cite{hu2020open}, which
contains
realistic and large-scale datasets for graph learning.
The ogbl-biokg dataset is a biological KG
describing interactions among proteins, drugs, side effects and functions.
The ogbl-wikikg2 dataset is extracted from the Wikidata knowledge base \cite{vrandevcic2014wikidata}
describing relations among entities in Wikipedia.

\begin{table}[ht]
	\centering
	\vspace{-7px}
	\caption{Statistics of the KG completion datasets.}
	\vspace{-10px}
	\label{tab:dataset}
	\setlength\tabcolsep{4pt}
	\begin{tabular}{c|ccccc}
		\toprule
		& & &  \multicolumn{3}{c}{number of samples}\\ 
		data set                 & \#entity & \#relation &  training & validation &
		testing  \\ \midrule
		WN18~\cite{bordes2013translating}     &  40,943  &     18     &  141,442  &  5,000  & 5,000    \\
		FB15k~\cite{bordes2013translating}    &  14,951  &   1,345    &  484,142  & 50,000  & 59,071 \\
		WN18RR~\cite{dettmers2017convolutional} &  40,943  &     11     &  86,835   &  3,034  & 3,134   \\
		FB15k237~\cite{toutanova2015observed}   &  14,541  &    237     &  272,115  & 17,535  & 20,466  \\
		YAGO3-10~\cite{mahdisoltani2013yago3}   & 123,188  &     37     & 1,079,040 &  5,000  & 5,000    \\ 
		\hline
		ogbl-biokg & 94k & 51 & 4,763k  & 163k  & 163k \\
		ogbl-wikikg2 & 2500k  & 535  & 16,109k &  429k  & 598k \\ 
		\bottomrule	
	\end{tabular}
\vspace{-7px}
\end{table}

\begin{table*}[ht]
	\caption{Testing performance of MRR, H@1 and H@10 on KG completion.
		The best model is highlighted in bold and the second best is underlined.
		``--'' means that results are not reported in those papers or their code on that
		data/metric is not available.
		CompGCN  uses the entire KG in each iteration and  so runs out of memory  on the
		larger data sets of WN18, FB15k and YAGO3-10.}
	\label{tb:comparison}
	\centering
	\setlength\tabcolsep{4pt}
	\vspace{-10px}
	\begin{tabular}{cc|ccc|ccc|ccc|ccc|ccc}
		\toprule
		& &                \multicolumn{3}{c|}{WN18}                &               \multicolumn{3}{c|}{FB15k}                &               \multicolumn{3}{c|}{WN18RR}               &              \multicolumn{3}{c|}{FB15k237}              &              \multicolumn{3}{c}{YAGO3-10}               \\ 
		\multicolumn{2}{c|}{model}                  &        MRR        &       H@1        &       H@10       &        MRR        &       H@1        &       H@10       &        MRR        &       H@1        &       H@10       &        MRR        &       H@1        &       H@10       &        MRR        &       H@1        &       H@10       \\ \midrule
		(TDM)  
		&    TransH     &       0.521       &       ---        &       94.5       &       0.452       &       ---        &       76.6       &       0.186       &       ---        &       45.1       &       0.233       &       ---        &       40.1       &        ---        &       ---        &       ---        \\
		&       RotatE       &       0.949       &       94.4       &       95.9       &       0.797       &       74.6       &       88.4       & {0.476} &       42.8       &  \underline{57.1}   &       0.338       &       24.1       &       53.3       &        0.488        &       39.6        &       66.3        \\ 
		&	PairE   &   --- & --- & --- &  0.811  &  76.5   &  89.6  &  --- &   --- &    ---  &  0.351  &   25.6  &   54.4   &   ---   &   ---   &    --- \\
		\midrule
		(NNM)  
		& ConvE   &       0.942       &       93.5       &      {95.5}      &       0.745       &       67.0       &       87.3       &       0.46        &       39.        &      {48.}       &      {0.316}      &       23.9       &       49.1       &       0.52        &       45.        &      {66.}       \\
		&       RSN        &       0.94        &       92.2       &       95.3       &        ---        &       ---        &       ---        &        ---        &       ---        &       ---        &       0.28        &       20.2       &       45.3       &        ---        &       ---        &       ---        \\ 
		&       Interstellar        &       ---        &       ---       &       ---       &        ---        &       ---        &       ---        &        0.48        &       44.0        &       54.8       &       0.32        &       23.3       &       50.8       &        0.51        &      42.4        &       66.4        \\ 
		& CompGCN   &  --- & --- & --- & --- & --- & --- & 0.479 & 44.3 & 54.6 & 0.355 & 26.4 & 53.5 & --- & --- & --- \\ \midrule
		(BLM)  &   TuckER     &  \textbf{0.953}   &  \textbf{94.9}   &       95.8       &       0.795       &       74.1       &       89.2       &       0.470       & {44.3} &       52.6       & {0.358} & {26.6} &       54.4       &        ---        &       ---        &       ---        \\
		&                DistMult                 &       0.821       &       71.7       &       95.2       &       0.775       &       71.4       &       87.2       &       0.443       &       40.4       &       50.7       &      {0.352}      &       25.9       &       54.6       &       0.552       &       47.1       &       68.9       \\
		&                SimplE/CP                &       0.950       &       94.5       & \underline{95.9} &       0.826       & {79.4} &      {90.1}      &      {0.462}      &       42.4       &       55.1       &      {0.350}      &       26.0       &       54.4       &       0.565       & {49.1} &      {71.0}      \\ 
		&                HolE/ComplEx                 &       0.951       &       94.5       &       95.7       &  {0.831} &       79.6       & {90.5} &      {0.471}      &       43.0       &       55.1       &       0.345       &       25.3       &       54.1       & {0.563} & {49.0} &       70.7       \\
		&                 Analogy                 &       0.950       &       94.6       &       95.7       &       0.816       &       78.0       &   {89.8} &      {0.467}      &       42.9       &      {55.4}      &       0.348       &       25.6       & {54.7} &       0.557       &       48.5       & {70.4} \\
		&    QuatE      &       0.950       &       94.5       &       95.9       &       0.782       &       71.1       &       90.0       & {0.488} &       43.8       &  \textbf{58.2}   &       0.348       &       24.8       &       55.0       &        0.556       &       47.4        &       70.4        \\  \midrule
		\multicolumn{2}{c|}{AutoBLM}           & \underline{0.952} & \underline{94.7} &  \textbf{96.1}   &  \underline{0.853}   &  \underline{82.1}   &  \underline{91.0}   &  \underline{0.490}   &  \underline{45.1}   & {56.7} &  \underline{0.360}   &  \underline{26.7}   &  \underline{55.2}   &  \underline{0.571}   &  \underline{50.1}   &  \textbf{71.5}   \\ 
		\multicolumn{2}{c|}{AutoBLM+}       &     \underline{0.952}   &    \underline{94.7}     &    \textbf{96.1}     &    \textbf{0.861}  &  \textbf{83.2}  &   \textbf{91.3}   &  \textbf{0.492}       & 	\textbf{45.2}	   &  	{56.7}	   &  	\textbf{0.364}	   &    \textbf{27.0}	 &   	\textbf{55.3}   &  \textbf{0.577}  &   \textbf{50.2}    &  \textbf{71.5}  \\ \bottomrule
	\end{tabular}
\vspace{-5px}
\end{table*}

\begin{figure*}[ht]
	\centering
	{\includegraphics[height=3.5cm]{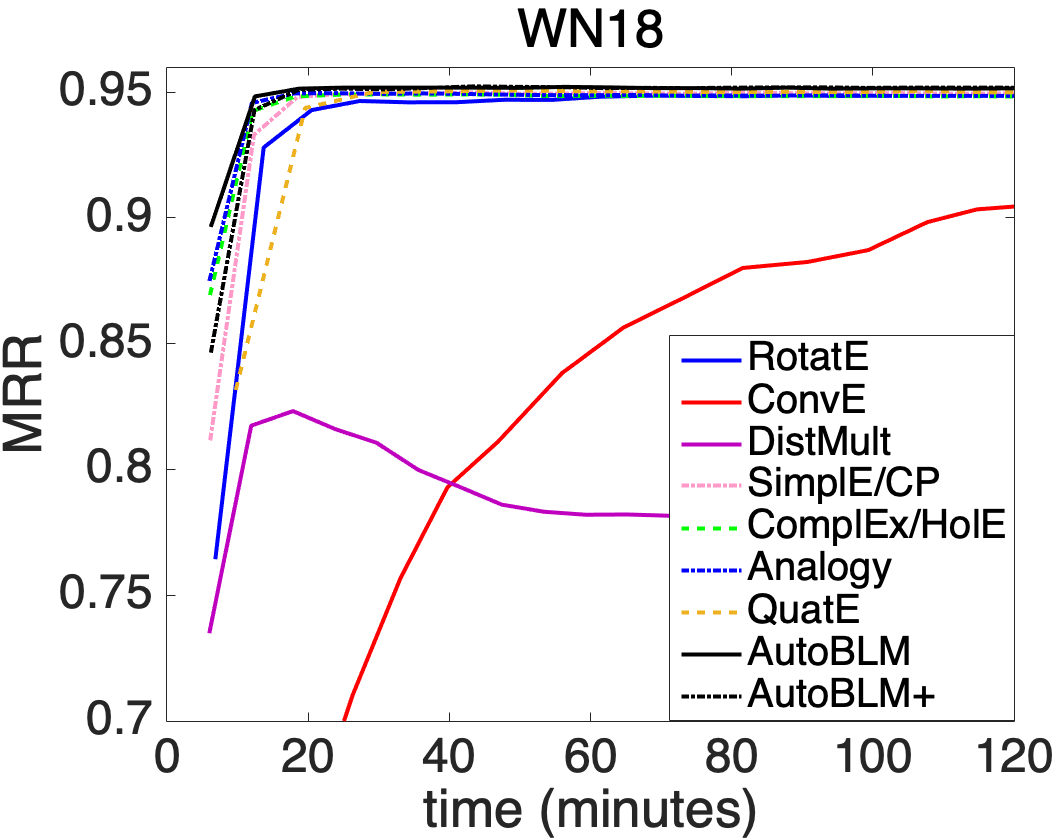}}\qquad
	{\includegraphics[height=3.5cm]{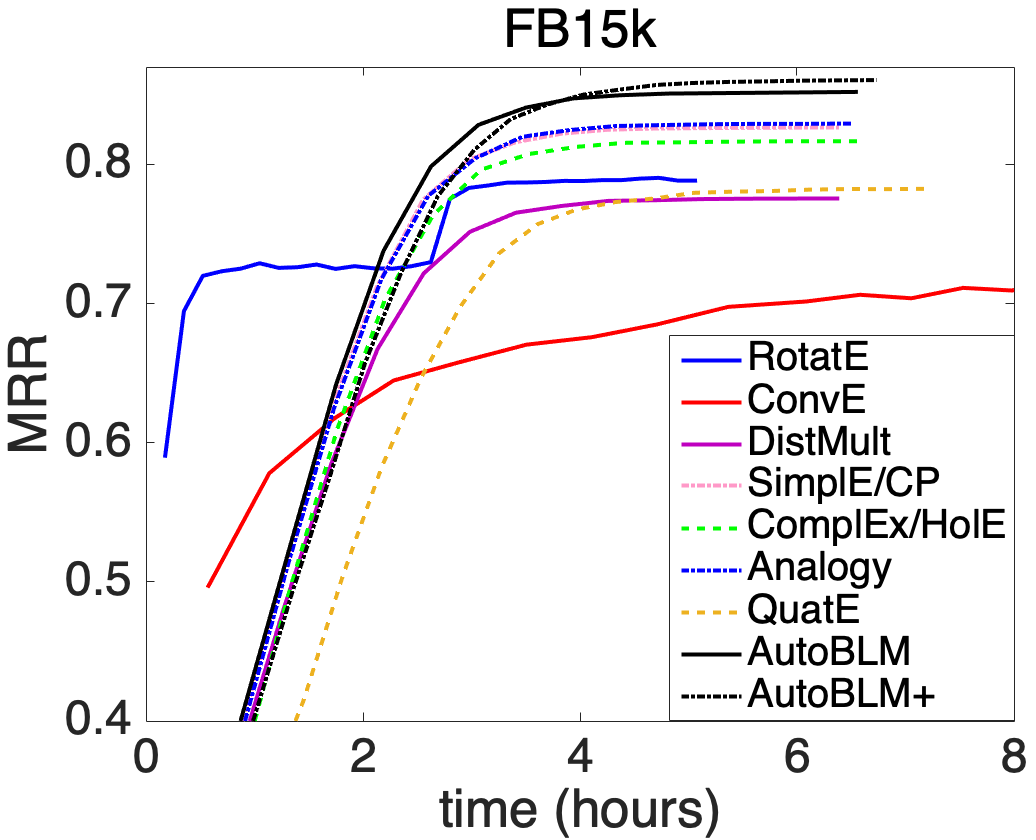}}\qquad
	{\includegraphics[height=3.5cm]{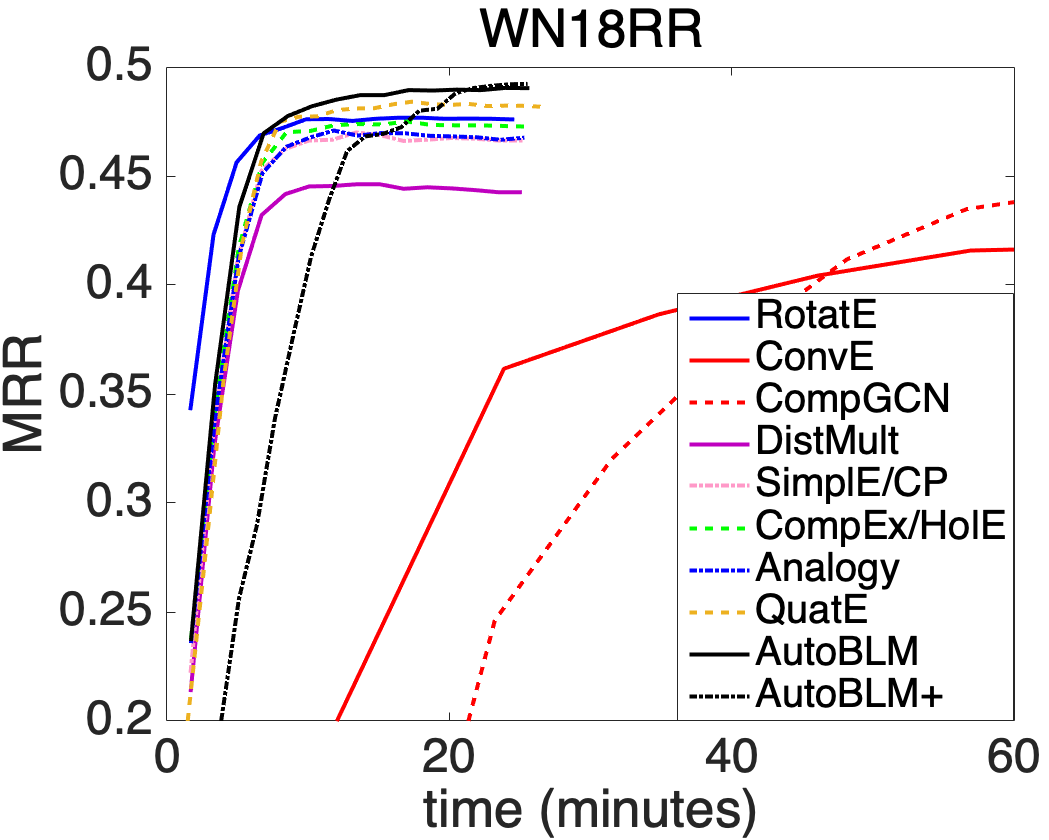}}
	
	{\includegraphics[height=3.5cm]{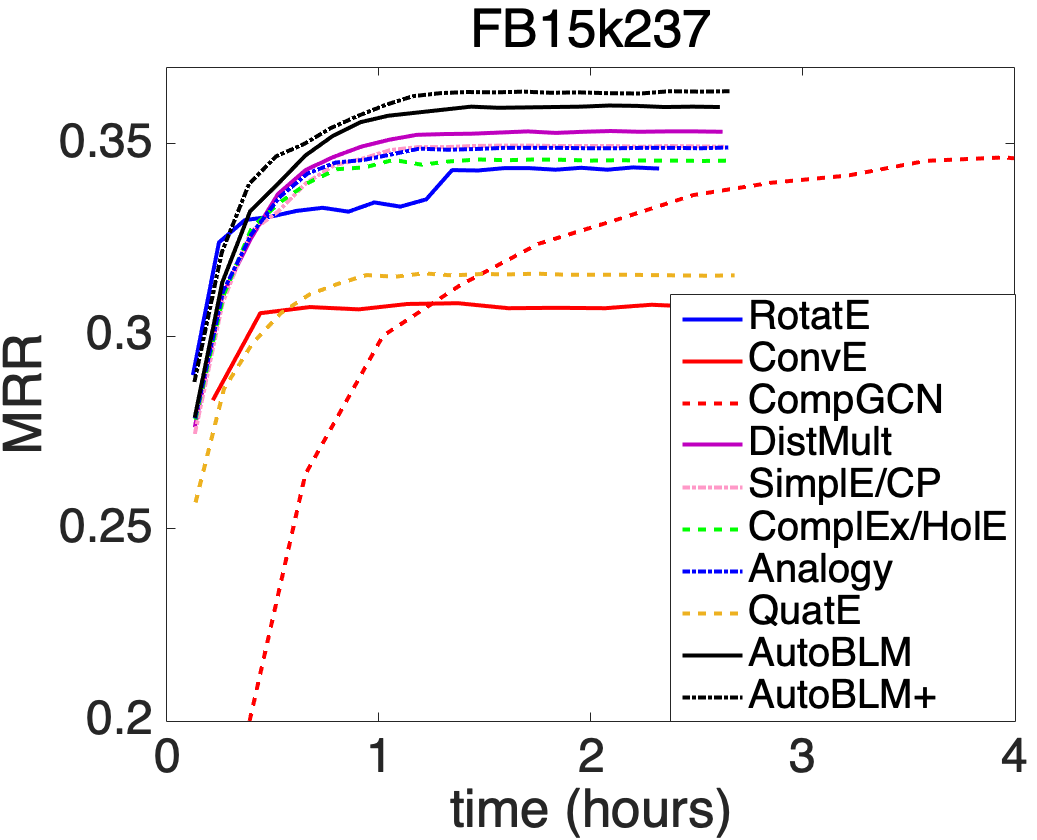}}\qquad
	{\includegraphics[height=3.5cm]{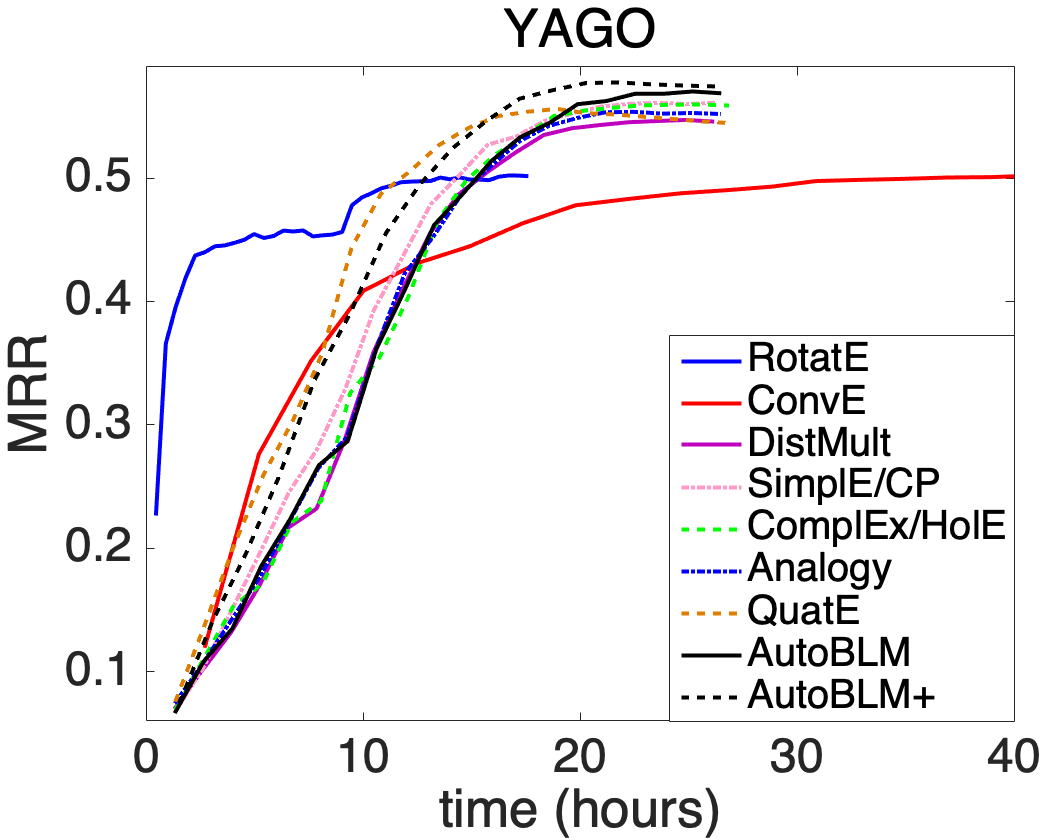}}
	
	\vspace{-10px}
	\caption{Convergence of the testing MRR versus running time 
		on the KG completion task.}
	\vspace{-10px}
	\label{fig:curve}
\end{figure*}

\parabegin{Baselines.}
For AutoBLM and AutoBLM+, 
we 
select the structure for evaluation 
from the set returned by 
Algorithm~\ref{alg:greedy}
or \ref{alg:evolution}
based on the 
MRR performance on
the validation set.

For WN18, FB15k, WN18RR, FB15k237, YAGO3-10,
AutoBLM and AutoBLM+
are compared
with the following popular
 human-designed 
KG embedding models\footnote{Obtained from
\url{https://github.com/thunlp/OpenKE} and \url{https://github.com/Sujit-O/pykg2vec}}:
(i) TDM, including
TransH~\cite{wang2014knowledge}, RotatE~\cite{sun2019rotate}
and PairE~\cite{chao2021pairre};
(ii)
NNM, including
ConvE~\cite{dettmers2017convolutional},
 RSN~\cite{guo2019learning}
 and CompGCN~\cite{vashishth2019composition};
(iii)
BLM, including
TuckER~\cite{balavzevic2019tucker},
Quat~\cite{zhang2019quaternion},
DistMult~\cite{yang2014embedding},
ComplEx~\cite{trouillon2017knowledge},
HolE~\cite{nickel2016holographic},
Analogy~\cite{liu2017analogical}
SimplE~\cite{kazemi2018simple},
and
CP~\cite{lacroix2018canonical}.
We do not
compare
with NASE~\cite{kou2020nase} as
its code is not publicly available.

For ogbl-biokg and ogbl-wikikg2 \cite{hu2020open}, we 
compare with
the models reported in the
OGB leaderboard\footnote{\url{https://ogb.stanford.edu/docs/leader_linkprop/}},
namely,
TransE~\cite{bordes2013translating},
		RotatE,
		PairE,
		DistMult, and
		ComplEx.


\parabegin{Performance Measures.}
The learned $f_{\bm{A}}(h,r,t)$ 
is evaluated in the context
of link prediction.  
Following 
~\cite{yang2014embedding,trouillon2017knowledge,liu2017analogical,kazemi2018simple,dettmers2017convolutional,wang2017knowledge},
for each triple $(h,r,t)$,
we first take $(?,r,t)$ as the query and obtain the filtered rank on the head 
\begin{equation}
\text{rank}_h = \left|\left\{e\in\mathcal E: 
\begin{matrix}
\big(f(e,r,t)\geq f(h,r,t)\big)\wedge \\
\big((e,r,t)\notin \mathcal S_{\text{tra}}\cup\mathcal{S}_{\text{val}}\cup\mathcal S_{\text{tst}}\big)
\end{matrix}
\right\} \right|+1,
\label{eq:headrank}
\end{equation}
where
$\mathcal S_{\text{tra}},
\mathcal{S}_{\text{val}},
\mathcal S_{\text{tst}}$ are the training,
validation, and
test 
sets, respectively.
Next
we take  $(h,r,?)$ 
as the query 
and obtain the filtered rank on the tail 
\begin{equation}
\text{rank}_t
\! = \! \left|\left\{e\in\mathcal E: 
\begin{matrix}
\big(f(h,r,e)\geq f(h,r,t)\big)\wedge \\
\big((h,r,e)\notin \mathcal S_{\text{tra}} \cup \mathcal{S}_{\text{val}}\cup\mathcal S_{\text{tst}}\big)
\end{matrix}
\right\} \right|
\! + \! 1.
\label{eq:tailrank}
\end{equation}
The following metrics
are computed from both the head and tail ranks on all triples:
(i) Mean reciprocal ranking (MRR):
\[\text{MRR} = \frac{1}{2|\mathcal S|}\sum_{(h,r,t)\in \mathcal S}\!\!\Big(\frac{1}{\text{rank}_h}+\frac{1}{\text{rank}_t}\Big);\]
and
(ii) 
H@$k$: 
ratio of ranks no larger than $k$, i.e., 
\[\text{H}@k = \frac{1}{2|\mathcal S|}\sum_{(h,r,t)\in \mathcal S}\!\!\Big(\mathbb I(\text{rank}_h\leq k)+\mathbb I(\text{rank}_t\leq k)\Big),\]
where $\mathbb I(a)=1$ if $a$ is true, otherwise $0$. 
The larger the MRR or H@$k$, the better is the embedding.
Other metrics 
for the completion task 
\cite{wang2019evaluating,tabacof2019probability}
can also be adopted here.

For ogbl-biokg and ogbl-wikikg2 \cite{hu2020open},
we only use the MRR as
the H@$k$ is not reported by the baselines in the OGB leaderboard.

\begin{figure*}[ht]
	\centering
	\subfigure[WN18.]
	{\includegraphics[width=0.31\columnwidth]{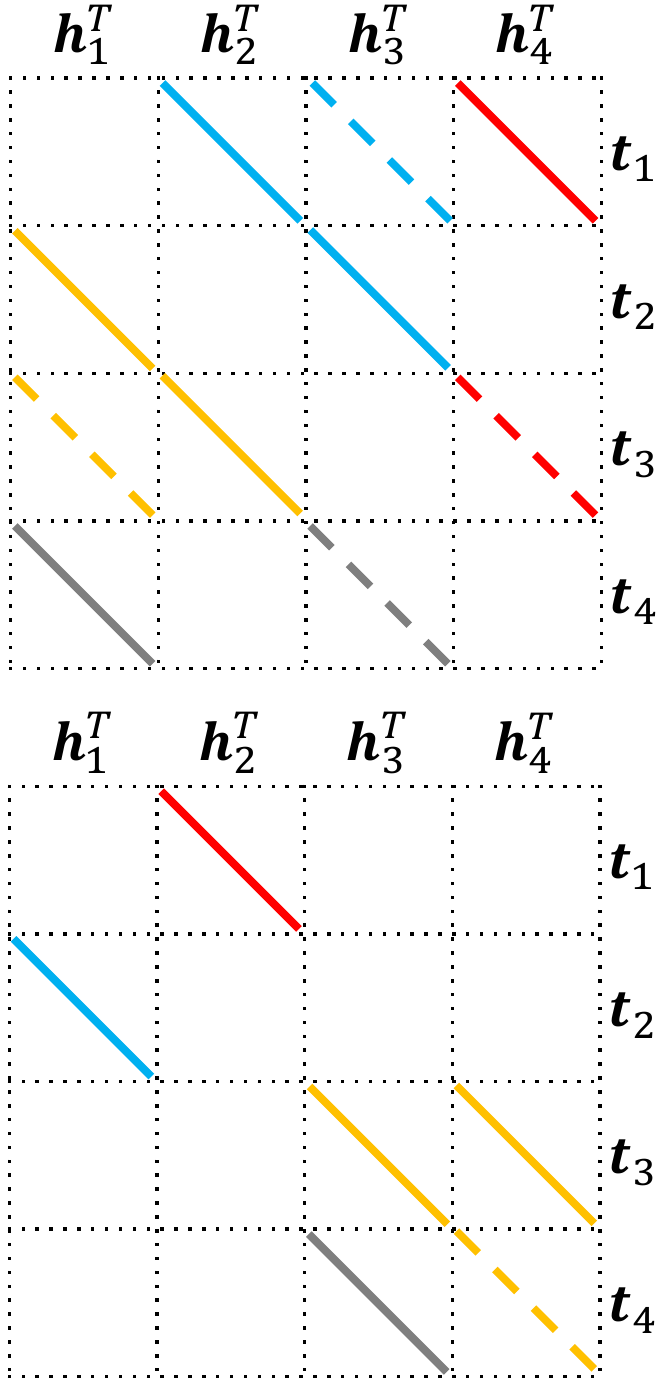}
	\label{fig:wn18}}
	\qquad
	\subfigure[FB15k.]
	{\includegraphics[width=0.31\columnwidth]{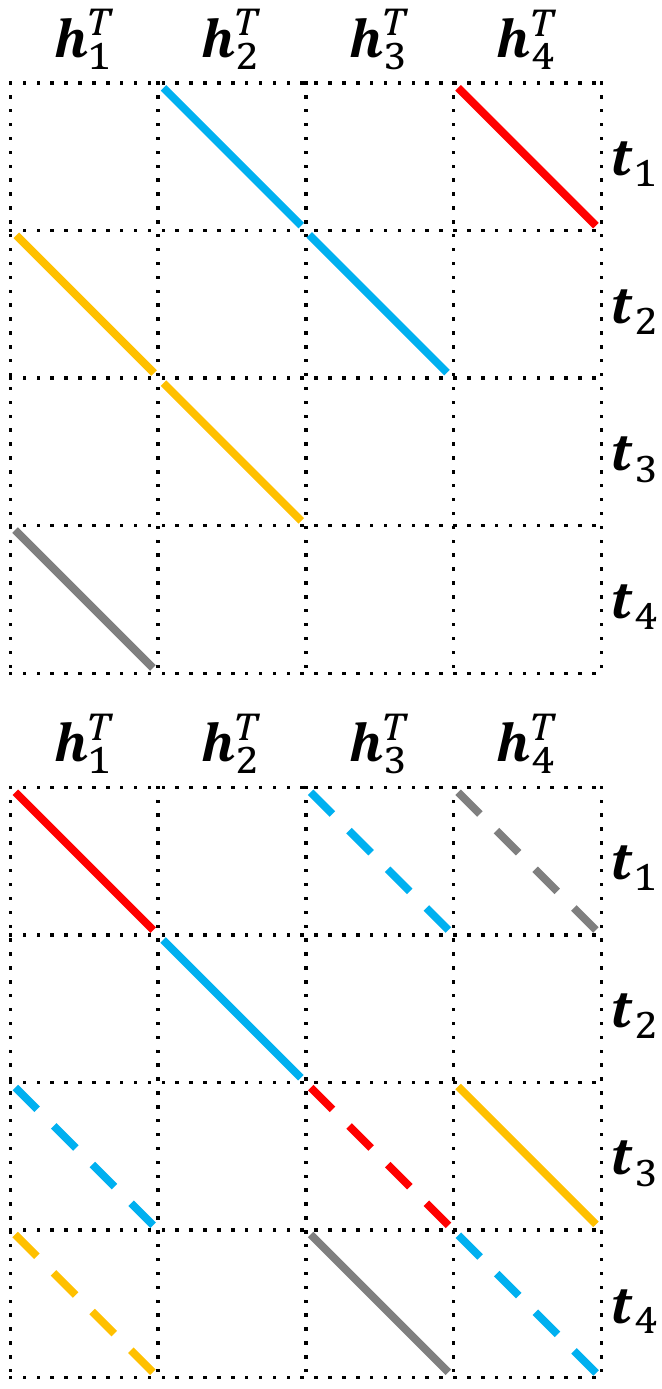}
	\label{fig:fb15k}}
	\qquad
	\subfigure[WN18RR. \label{fig:model1}]
	{\includegraphics[width=0.31\columnwidth]{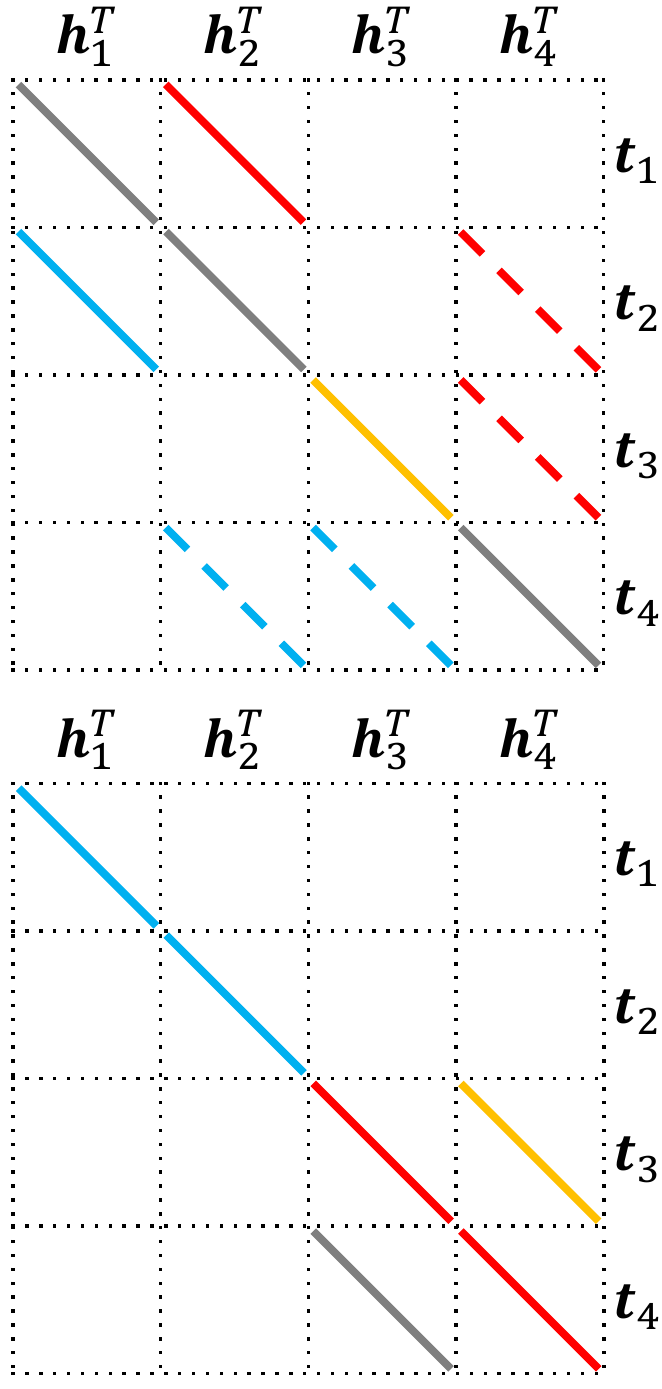}
	\label{fig:wn18rr}}
	\qquad
	\subfigure[FB15k237.]
	{\includegraphics[width=0.31\columnwidth]{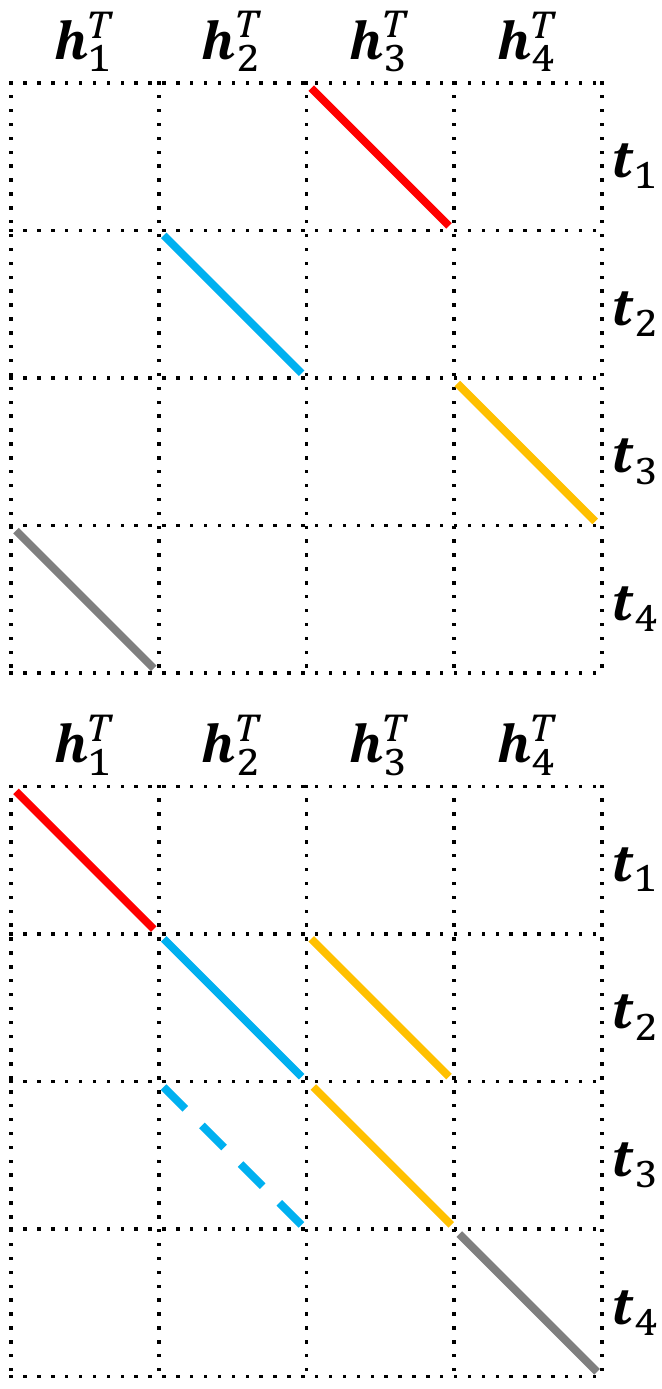}
	\label{fig:fb15k237}}
	\qquad
	\subfigure[YAGO3-10. \label{fig:model2}]
	{\includegraphics[width=0.31\columnwidth]{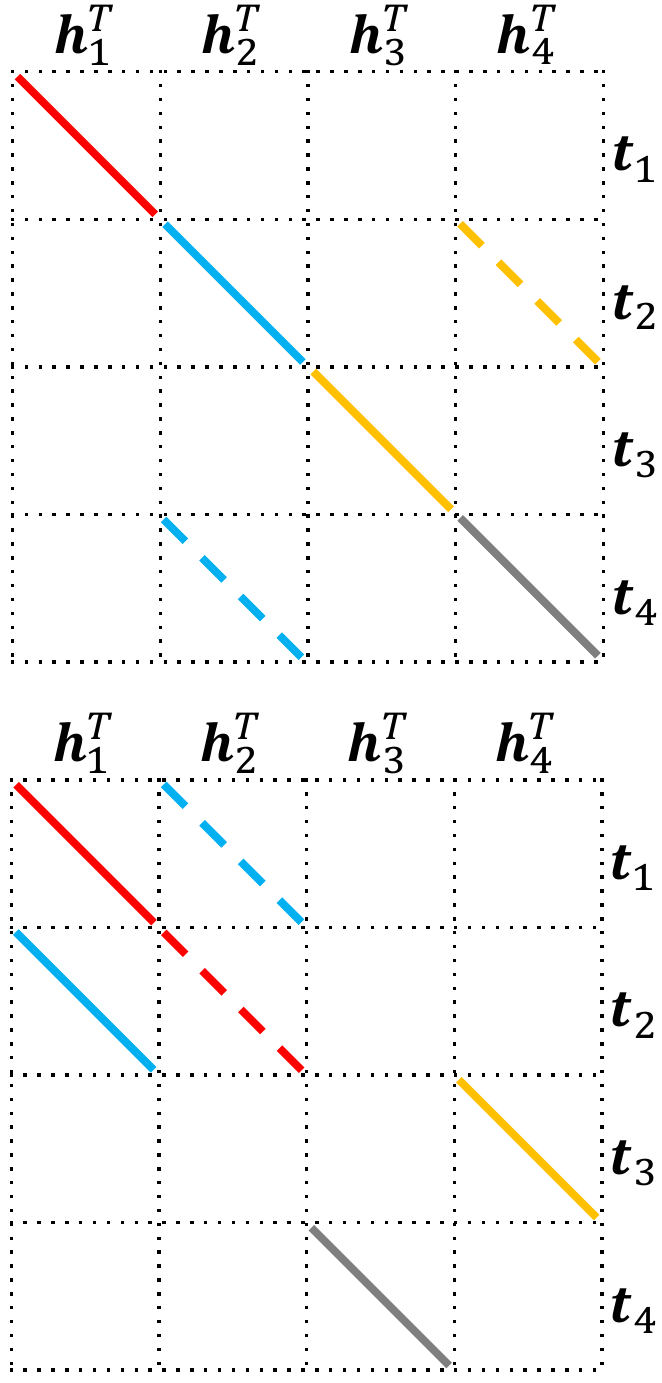}
	\label{fig:yago}}
	
	\vspace{-10px}
	\caption{Graphical illustration of the BLMs obtained by 
		AutoBLM (top) and
		AutoBLM+ (bottom) on the KG
		completion task (Section~\ref{exp:kgc:performance}).
		Different colors correspond to 
		different parts of $[\bm r_1$(red), $\bm r_2$(blue), $\bm r_3$(yellow), $\bm r_4$(gray)$]$.
		Solid lines mean positive values, while 
		dashed lines mean negative values. 
		The empty parts have value zero.}
	\label{fig:searchedsf}
	\vspace{-10px}
\end{figure*}

\parabegin{Hyper-parameters.}
The search algorithms 
have the following hyper-parameters:
(i) $N$: number of candidates generated after filtering;
(ii) $P$: number of scoring functions selected by the predictor; 
(iii) $I$: number of top structures selected in 
Algorithm~\ref{alg:greedy}
(step~\ref{step:top-struct}),
or the
number of structures survived in $\mathcal I$ in Algorithm~\ref{alg:evolution};
and
(iv) $b_0$: number of nonzero elements in the initial set.
Unless otherwise specified,
we use 
$N=128$, 
$P=8$, 
$I=8$
and $b_0=K$.
For the evolutionary algorithm,
the mutation and crossover operations are selected with
equal probabilities. 
When mutation is selected,
the value of each entry has a mutation probability of $p_m=2/K^2$.
A budget is used to terminate the algorithm. This is set to 256 structures
on WN18, FB15k, WN18RR, FB15k-237,
128 on YAGO3-10,
64 on ogbl-biokg,
and 32 on ogbl-wikikg2.

We follow~\cite{lacroix2018canonical,trouillon2017knowledge} 
to use Adagrad~\cite{duchi2011adaptive} as optimizer.
The Adagrad
hyper-parameters 
are selected
from the following
ranges:
learning rate
$\eta$ in $[0, 1]$, 
$\ell_2$-penalty $\lambda$ in $[10^{-5}, 10^{-1}]$,
batch size $m$ in $\{256, 512, 1024\}$,
and dimension $d$ in $\{64, 256, 512, 1024, 2048\}$.

\subsubsection{Results
on WN18, FB15k, WN18RR, FB15k237, YAGO3-10}
\label{exp:kgc:performance}

{\bf Performance.}
Table~\ref{tb:comparison}
shows
the testing results on
WN18, FB15k, WN18RR, FB15k237, and YAGO3-10.
As can be seen,
there is no clear winner
among the baselines.
On the other hand,
AutoBLM
performs consistently well.
It outperforms the baselines on FB15k, WN18RR, FB15k237 and YAGO3-10,
and is the first runner-up on WN18.
AutoBLM+
further improves AutoBLM on FB15k, WN18RR, FB15k237 and YAGO3-10.

\noindent
{\bf 
Learning curves.}
Figure~\ref{fig:curve}
shows the learning curves of 
representative models in each type of scoring functions,
including: RotatE in TDM;
ConvE and CompGCN in NNM; and 
DistMult, SimplE/CP, ComplEx/HolE, Analogy, QuatE 
and the proposed AutoBLM/AutoBLM+
in BLM.
As can be seen,
NNMs 
are much slower and inferior than 
BLMs.
On the other hand, 
AutoBLM+
has 
better performance 
and comparable time as the other BLMs.

\begin{table}[t]
	\centering
	\vspace{-5px}
	\caption{Testing MRR on applying the BLMs obtained
		from a source dataset (row) to a target dataset (column). 
		Bold numbers indicate the best performance each dataset for the models searched 
		by AutoBLM and AutoBLM+ respectively.}
	\label{tb:sf-dependent}
	\setlength\tabcolsep{1.5pt}
	\vspace{-10px}
	\begin{tabular}{cc|ccccc}
		\toprule
		& &	WN18	& 	FB15k	& 	WN18RR	&	FB15k237	& 	YAGO3-10		\\
		\midrule
		\multirow{5}{*}{AutoBLM} & WN18		& 	\textbf{0.952}			& 0.841	&	0.473	&	0.349	&	0.561	\\
		& FB15k		& 	0.950	&	\textbf{0.853}   &	0.470	&	0.350	&	0.563	\\
		& WN18RR	& 	0.951	&	 0.833	&	\textbf{0.490}	&	0.345	&	0.568	\\
		& FB15k237	& 	0.894	&	0.781	&	0.462	&	 \textbf{0.360}	&	0.565	\\
		& YAGO3-10	& 	0.885	&	0.835	&	0.466	& 0.352    &	\textbf{0.571}	\\
		\midrule
		\multirow{5}{*}{AutoBLM+} & WN18		& 	\textbf{0.952}	 	&	0.848	&	0.482	&	0.350	&	0.564	\\
		& FB15k		& 	0.951	&	\textbf{0.861 }  &		0.479	&	0.352	&	0.563	\\
		& WN18RR	& 		0.947	&	0.841	&	\textbf{0.492}	&	0.347	&	0.551	\\
		& FB15k237	& 	0.860	&	0.821	&	0.463	&	 \textbf{0.364}	&	0.546	\\
		& YAGO3-10	& 	0.951	&	0.833	&	0.469	&	0.345	&	\textbf{0.577}	\\
		\bottomrule
	\end{tabular}	
	\vspace{-8px}
\end{table}

\noindent
{\bf Data-dependent BLM structure.}
Figure~\ref{fig:searchedsf} 
shows the BLMs
obtained by 
AutoBLM and AutoBLM+. As can be seen,
they are different from 
the human-designed BLMs in Figure~\ref{fig:graphsf}
and are also different from each other.
To demonstrate that these data-dependent structures also have different accuracies on the same dataset,
we take the 
BLM obtained by AutoBLM (or AutoBLM+) on a source dataset and then 
evaluate it
on a different target dataset.
Table~\ref{tb:sf-dependent} shows the testing MRRs obtained (the trends
for H@1 and H@10  are
similar).
As can be seen,
the different BLMs perform differently on the same dataset, 
again confirming the
need for data-dependent structures.

\subsubsection{Results
on 
ogbl-biokg and ogbl-wikikg2}
\label{exp:kgc:ogb}


Table~\ref{tb:comparison:ogb} shows the testing MRRs of
the baselines (as reported in the OGB leaderboard),
the BLMs obtained by AutoBLM and AutoBLM+.
As can be seen,
AutoBLM and 
AutoBLM+ achieve significant gains on the testing MRR 
on both datasets, 
even though
fewer model parameters are needed for AutoBLM+.
The searched structures are provided in Figure~\ref{fig:structure:ogb}
in Appendix~\ref{app:figures}.

\begin{table}[ht]
	\vspace{-7px}
	\caption{Testing MRR and number of parameters on ogbl-biokg and ogbl-wikikg2.
		The best performance is indicated in boldface.}
	\label{tb:comparison:ogb}
	\centering
		\setlength\tabcolsep{8pt}
	\vspace{-10px}
	\begin{tabular}{c|cc|cc}
		\toprule
		& \multicolumn{2}{c|}{ogbl-biokg} & \multicolumn{2}{c}{ogbl-wikikg2} \\
		model    & MRR & \# params & MRR  & \# params \\
		\midrule
		TransE   & 0.745      & 188M      & 0.426         & 1251M     \\
		RotatE   & 0.799  & 188M      & 0.433        & 1250M     \\
		PairE    	& 0.816     & 188M      & 0.521         & 500M      \\
		DistMult & 0.804      & 188M      & 0.373        & 1250M     \\
		ComplEx  & 0.810      & 188M      & 0.403         & 1250M     \\
		\midrule
		AutoBLM  &  {0.828}    &  188M   &   {0.532}  &  500M \\
		AutoBLM+ & \textbf{0.831}     & 94M       & \textbf{0.546}    & 500M     \\
		\bottomrule
	\end{tabular}
\vspace{-8px}
\end{table}

\subsubsection{Ablation Study 1: Search Algorithm Selection}
\label{exp:alg:compare}

First,
we 
study the following 
search algorithm
choices.
\begin{enumerate}[label=(\roman*)]
\item Random, which
samples each element 
of $\bm{A}$ 
	independently and uniformly 
	from $\{0, \pm1,\dots, \pm K\}$;
	
\item {Bayes}, which 
	selects each element of $\bm{A}$ 
	from $\{0, \pm1,\dots, \pm K\}$
	by
performing hyperparameter optimization using the Tree
	Parzen estimator \cite{bergstra2011algorithms} and Gaussian mixture model
	(GMM);
	
	\item Reinforce,
	which
	generates the $K^2$ elements in $\bm{A}$
	by using a LSTM~\cite{hochreiter1997long} 
	recurrently 
	as in NAS-Net~\cite{zoph2017neural}.
	The LSTM is optimized with REINFORCE~\cite{williams1992simple};
	
	\item AutoBLM (no Filter, no Predictor, $b_0\!\!=\!\!1$) with initial $b_0=1$;
	
	\item AutoBLM+ (no Filter, no Predictor, $b_0\!\!=\!\!1$) with initial $b_0=1$.
\end{enumerate}

For a fair comparison, we do not use the filter and performance
predictor
in the proposed 
	AutoBLM and
	AutoBLM+ here.
All structures selected by each of the above algorithms are trained and evaluated
with the same hyper-parameter settings in step~\ref{step:before-best} of Algorithm~\ref{alg:full}.
Each algorithm evaluates 
a total
of 256 structures.

\begin{figure}[ht]
	\vspace{-3px}
	\centering
	\includegraphics[height=3.4cm]{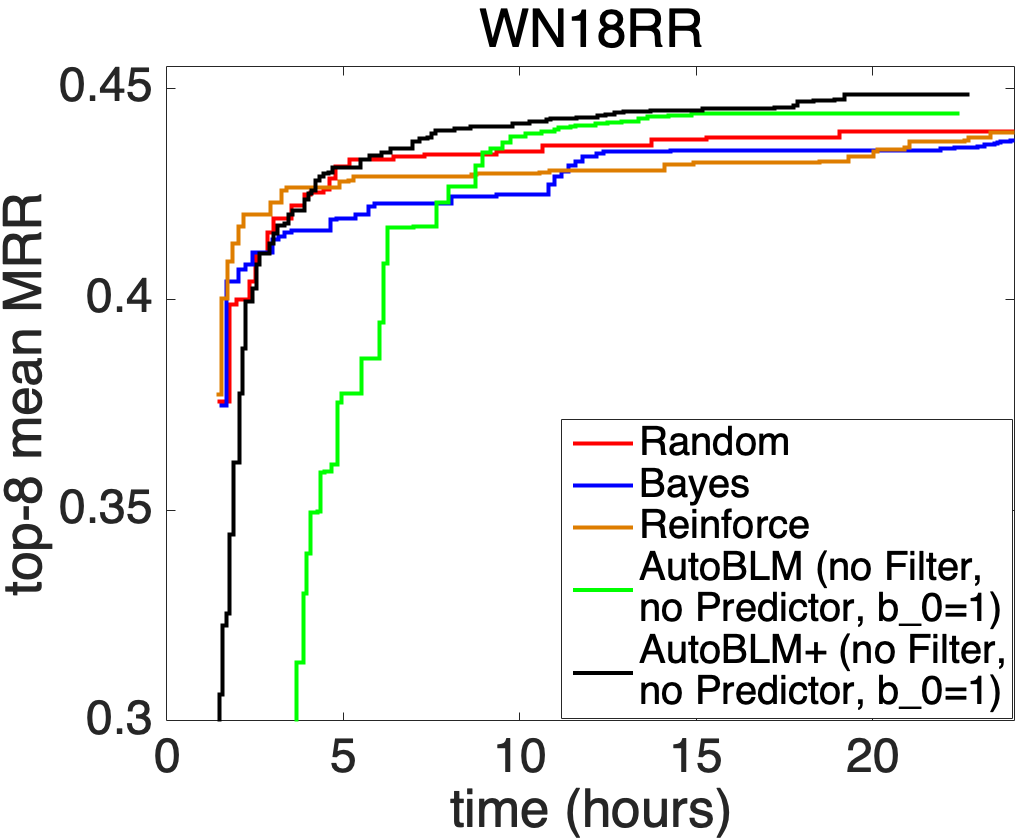}\hfill
	\includegraphics[height=3.4cm]{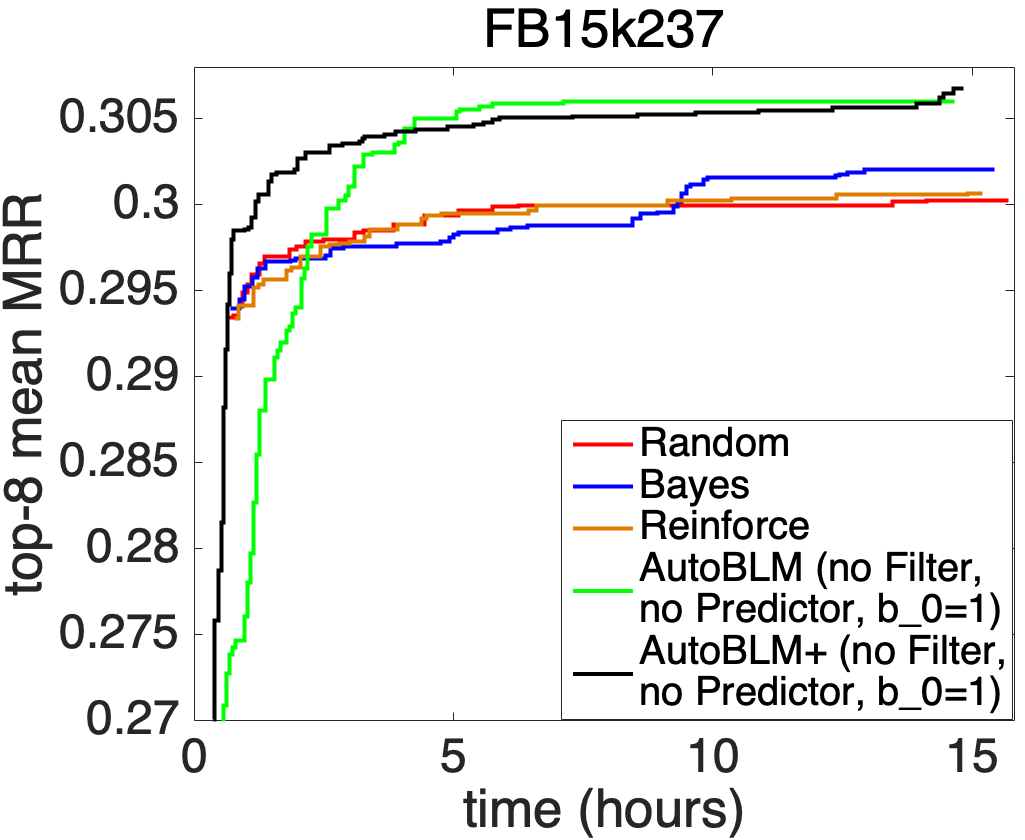}
	\vspace{-10px}
	\caption{Comparison of different search algorithms.}
	\label{fig:automl}
	\vspace{-4px}
\end{figure}

Figure~\ref{fig:automl}
shows the mean 
validation MRR of the top 
$I=8$ structures
w.r.t. clock time during the search process.
As can be seen,
AutoBLM (no Filter, no Predictor, $b_0\!\!=\!\!1$) and 
AutoBLM+ (no Filter, no Predictor, $b_0\!\!=\!\!1$) outperform the rest at the later stages.
They have
poor initial performance as they start with structures having few nonzero elements,
which can be degenerate.
This will be further demonstrated in the next section.

\subsubsection{Ablation Study 2: 
Effectiveness of  the
Filter}
\label{exp:filter}

Structures with more nonzero elements are more likely to satisfy the
two conditions in 
Proposition~\ref{pr:degenerate}, and 
thus less likely to be
degenerate.
Hence,
the filter is expected to be particularly useful
when there are few nonzero elements 
in the structure.
In this experiment,
we demonstrate this by
comparing
AutoBLM/AutoBLM+ with and without
the filter.
The performance predictor is always enabled.

\begin{figure}[ht]
	\centering
	\includegraphics[height=3.4cm]{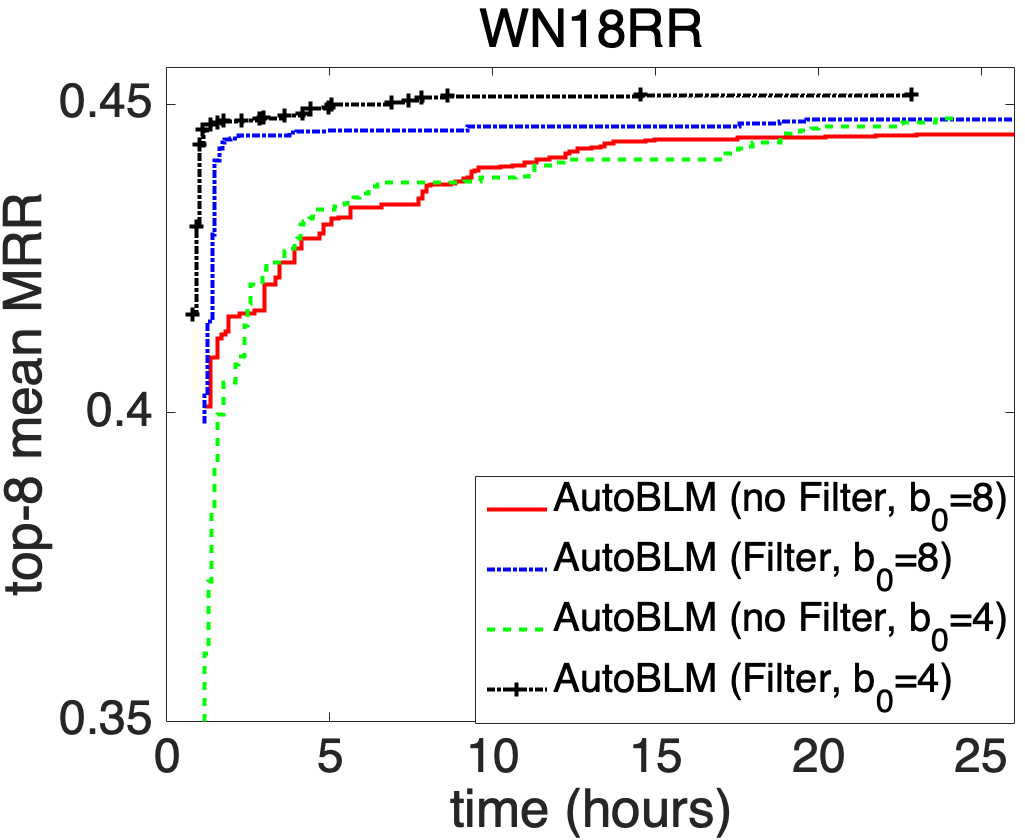}\hfill
	\includegraphics[height=3.4cm]{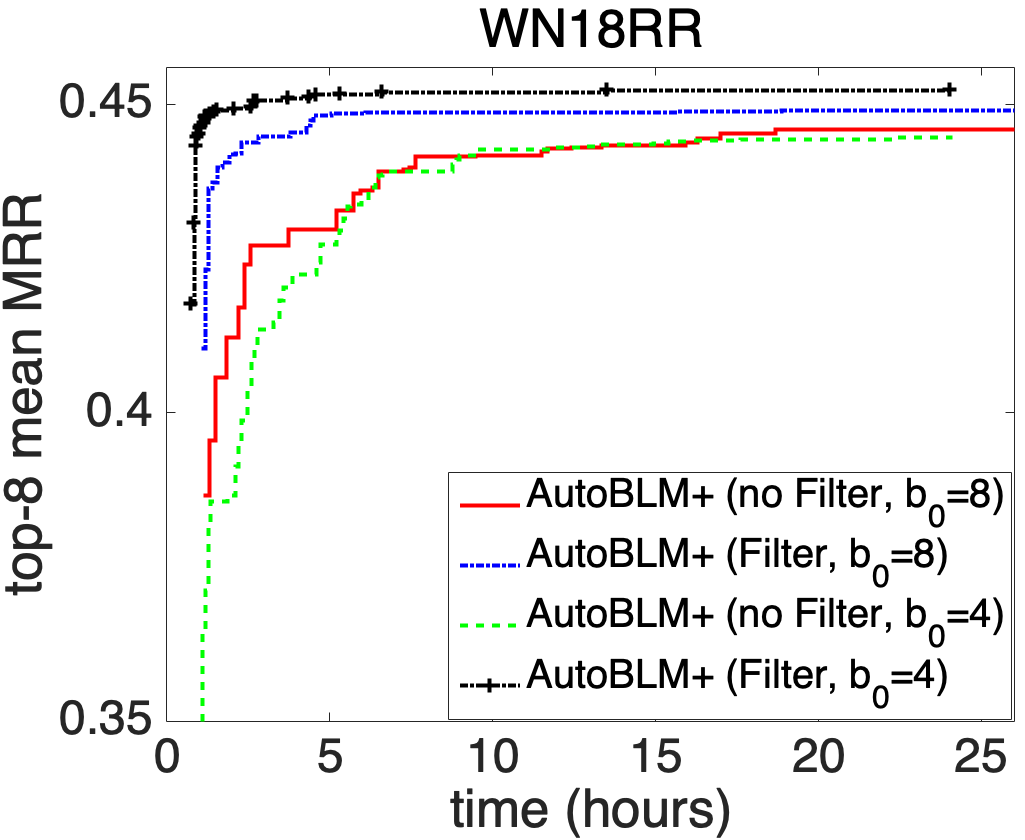}
	
	\includegraphics[height=3.4cm]{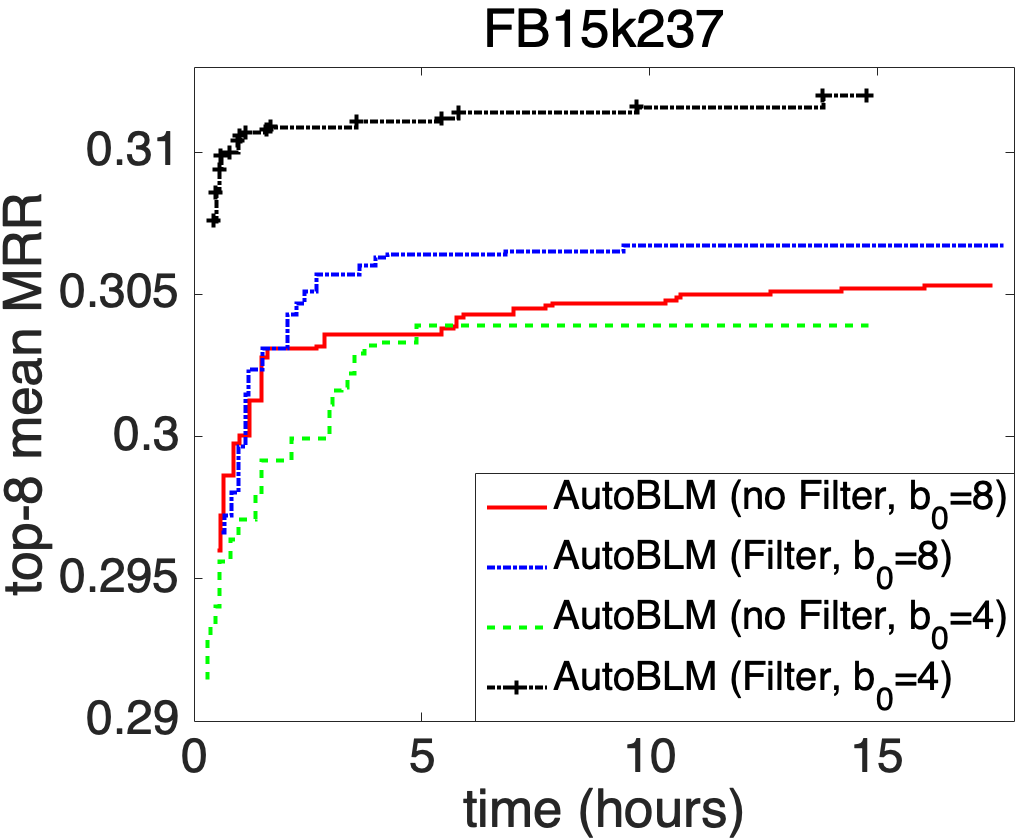}\hfill
	\includegraphics[height=3.4cm]{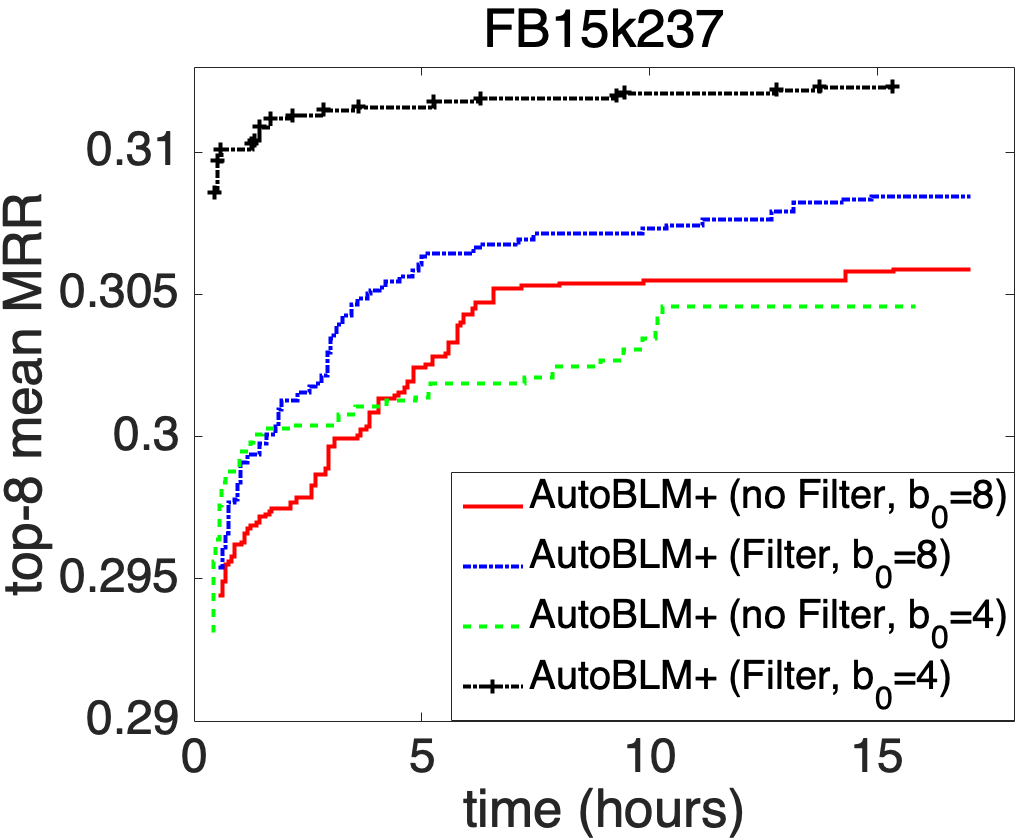}
	\vspace{-10px}
	\caption{Comparison of the effect of filter.}
	\vspace{-10px}
	\label{fig:filter}
\end{figure}

Figure~\ref{fig:filter}
shows the mean 
validation 
MRR 
of the top $I = 8$ structures
w.r.t. clock time.
As expected,
when the filter is not used, using a larger $b_0$ 
will be more likely to have non-degenerate
structures  and thus
better performance, especially at the initial stages.
When the filter is used, the performance of both 
$b_0$ 
settings are improved. In particular,
with $b_0=4$,
the initial search space is simpler and leads to
better performance.

\subsubsection{Ablation Study 3: Performance Predictor}
\label{exp:predictor}

In this experiment, we
compare
the following 
AutoBLM/AutoBLM+
variants: (i)
AutoBLM (no-predictor) and AutoBLM+ (no-predictor), which simply
randomly select $P$ structures for evaluation (in step~17 of
Algorithm~\ref{alg:greedy} and step 16 of
Algorithm~\ref{alg:evolution}, respectively);
(ii)
AutoBLM (Predictor+SRF) and AutoBLM+ (Predictor+SRF),
using the proposed SRF (in Section~\ref{sssec:predictor}) as input features to the performance predictor;
and (iii) AutoBLM (Predictor+1hot) and AutoBLM+ (Predictor+1hot),
which 
map each of
the $K^2$ entries in $\bm{A}$ (with values in $\{0,\pm1,\dots,\pm K\}$)
to 
a simple ($2K+1$)-dimensional
one-hot vector, and then 
use these as features to the performance predictor.
The resultant feature vector is thus $K^2(2K+1)$-dimensional, which is
much longer
than the $K(K+1)$-dimensional
SRF 
representation.

\begin{figure}[ht]
	\centering
	\vspace{-5px}
	\includegraphics[height=3.4cm]{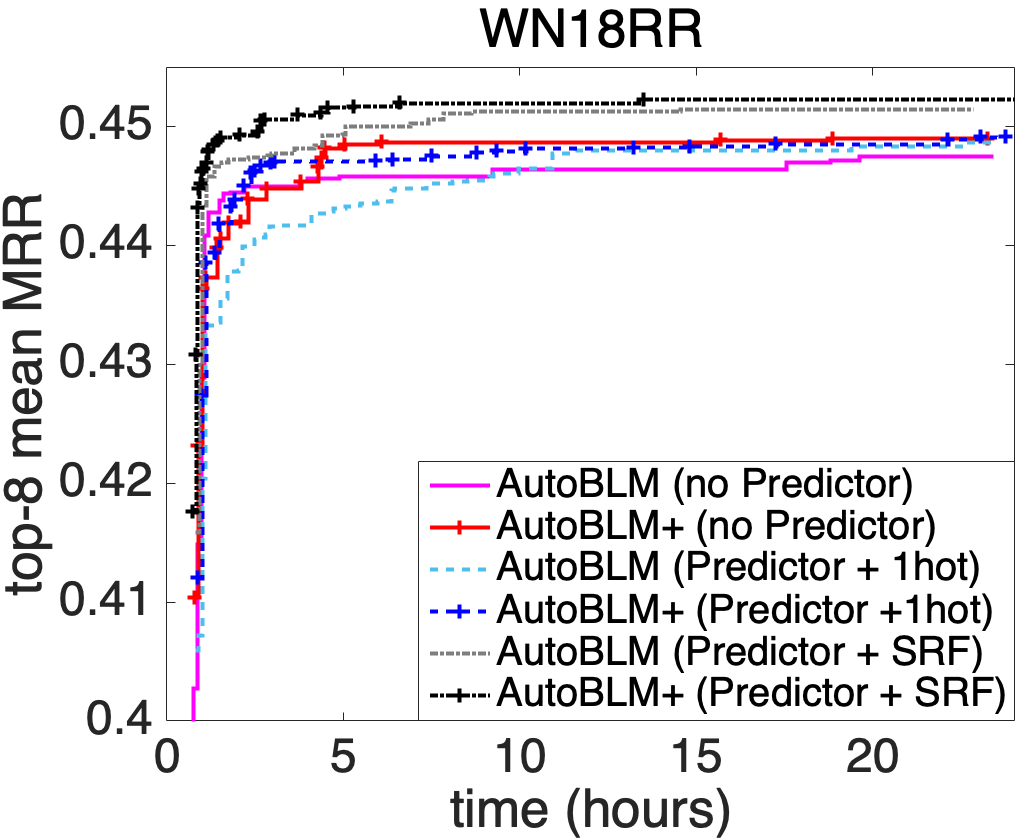}\hfill
	\includegraphics[height=3.4cm]{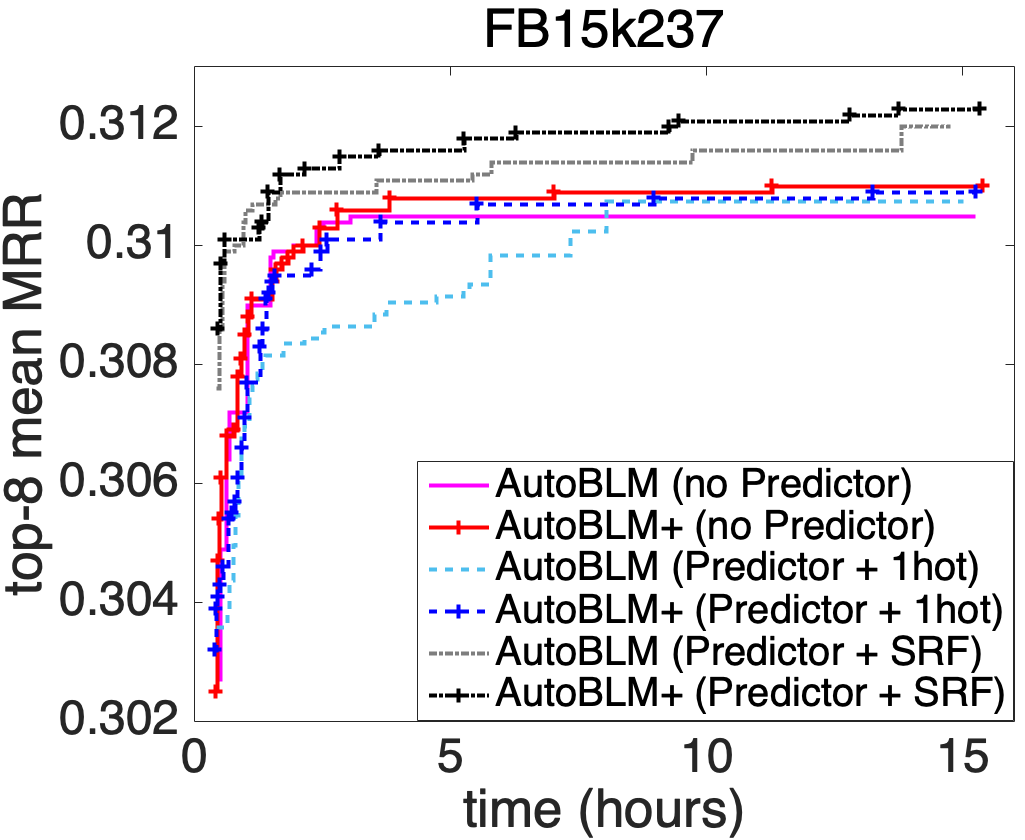}
	\vspace{-10px}
	\caption{Effectiveness of the performance predictor.}
	\vspace{-2px}
	\label{fig:predictor}
\end{figure}

Figure~\ref{fig:predictor} shows the mean validation MRR of the top $I=8$
structures w.r.t. clock time.
As can be seen,
the use of performance predictor improves the  results over
AutoBLM (no-Predictor) and AutoBLM+ (no-Predictor).
The SRF features also perform better than the one-hot features,
as the one-hot features are higher-dimensional and more difficult to learn.
Besides,
we observe that  
AutoBLM+ performs better than 
AutoBLM,
 as
it can 
more flexibly
explore  the search space.
Thus,
in the remaining ablation studies,
we will only focus on 
AutoBLM+.

\subsubsection{Ablation Study 4: Varying $K$}
\label{sec:exp:varyK}

As $K$ increases,
the search space, which has a size of $(2K+1)^{K^2}$
(Section~\ref{ssec:searSFs}), increases dramatically.
Moreover, the SRF also needs to enumerate a lot more 
($K^{2K+1}$) vectors in $\mathcal C$.
In this experiment,
we demonstrate the dependence on $K$ by running
AutoBLM+ with
$K=3,4,5$.
To ensure that $d$ is divisible by $K$,
we set $d=60$.
Figure~\ref{fig:ks} shows
the top-8 mean MRR performance on the validation set of the searched models
versus clock time.
As can be seen, the best performance 
attained by different $K$'s
are similar.
However,
$K=5$ runs slower.

\begin{figure}[ht]
	\centering
	\vspace{-3px}
	\includegraphics[height=3.4cm]{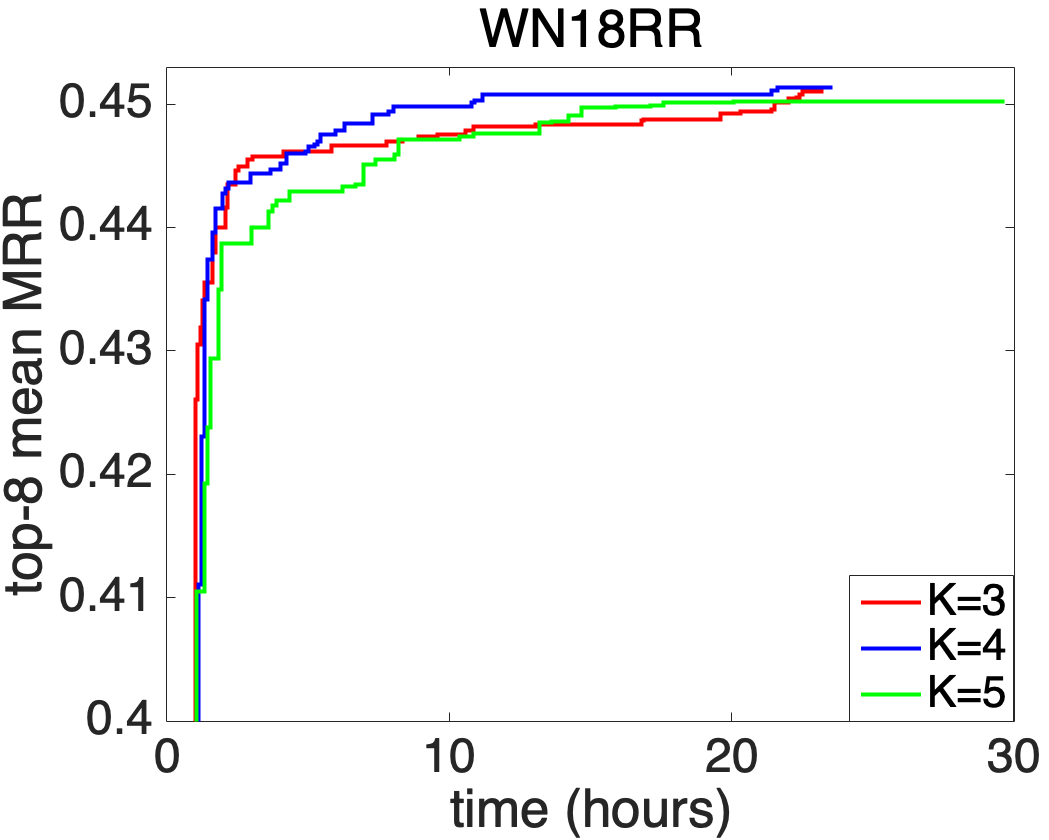}
	\hfill
	\includegraphics[height=3.4cm]{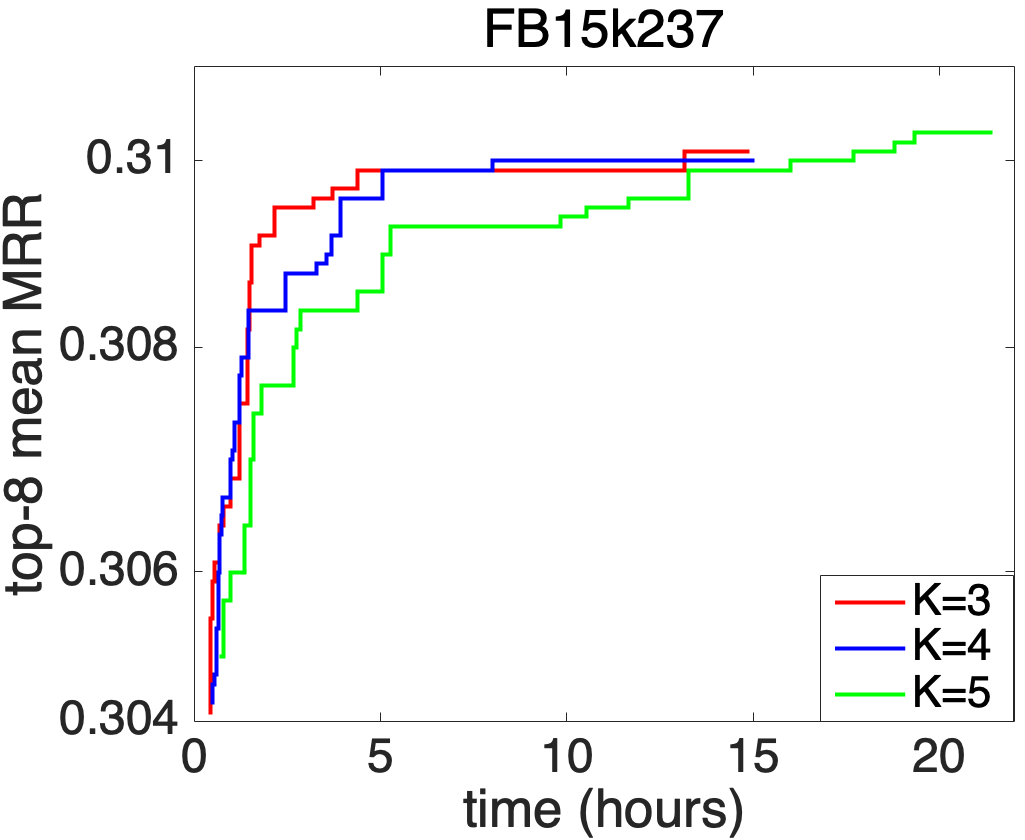}
	
	\vspace{-10px}
	\caption{Comparison of different $K$ values.}
	\vspace{-5px}
	\label{fig:ks}
\end{figure}

Table~\ref{tab:Ktime} shows
the running time of the
filter, performance predictor (with SRF features),
training and evaluation in Algorithm~\ref{alg:evolution}
with different $K$'s.
As can be seen,
the costs of filter and performance predictor increase a lot with $K$,
while the model training and evaluation time are relatively stable for different $K$'s.

\begin{table}[ht]
	\centering
	\vspace{-5px}
	\caption{Running time (in minutes) of different components in Algorithm~\ref{alg:evolution}.}
	\label{tab:Ktime}
	\vspace{-10px}
	\setlength\tabcolsep{4pt}
	\begin{tabular}{c|c|cccc}
		\toprule
		dataset &  $K$  &  filter  & performance predictor  & train  & evaluate \\
		\midrule
		\multirow{3}{*}{WN18RR} & 3  & 0.04 & 1 & 1217  & 152 \\
		& 4  & 1.4 &  23 &  1231 & 156 \\
		& 5  & 90 & 276 & 1252 & 161 \\  
		\midrule
		\multirow{3}{*}{FB15k237} & 3  & 0.04& 1 & 714 & 178  \\
		& 4  & 1.5 & 22 & 721 & 181 \\
		& 5  & 91 & 283 &  728 & 186 \\  
		\bottomrule
	\end{tabular}
\vspace{-10px}
\end{table}

\subsubsection{Ablation Study 5: Analysis of Parameter Sharing}
\label{sec:exp:ps}

As mentioned in 
Section~\ref{sssec:predictor},
parameter sharing may not reliably predict the model performance.
To demonstrate this,
we empirically compare the parameter-sharing approach, which 
shares parameter $\bm P=\{\bm E, \bm R\}$ 
(where $\bm E\in\mathbb R^{d\times |\mathcal E|}$ is the entity embedding matrix and 
$\bm R\in\mathbb R^{d\times |\mathcal R|}$ is the
relation embedding matrix
in Section~\ref{ssec:kg})
and the stand-alone approach, which trains each model separately.
For parameter sharing,
we randomly sample a $\bm{A}$ 
in each training  iteration
from the set of top candidate structures 
($\mathcal H^b$ in Algorithm \ref{alg:greedy} or $\mathcal H$ in Algorithm \ref{alg:evolution}),
and then update parameter $\bm P$.
After one training epoch,
the sampled structures 
are evaluated. 
After 500 training epochs,
the top-100 $\bm{A}$'s 
are output.
For the stand-alone approach,
the 100 $\bm{A}$'s 
are separately trained and evaluated.

\begin{figure}[ht]
	\centering
	\vspace{-5px}
	\includegraphics[height=3.4cm]{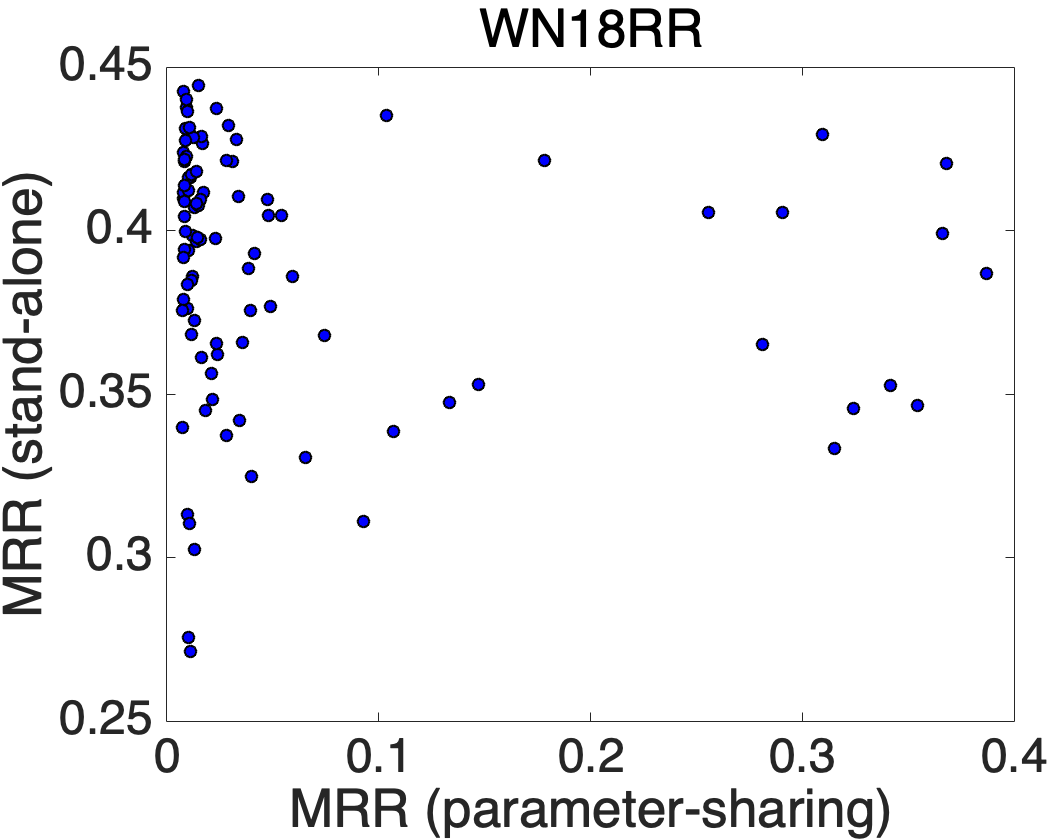}
	\hfill
	\includegraphics[height=3.4cm]{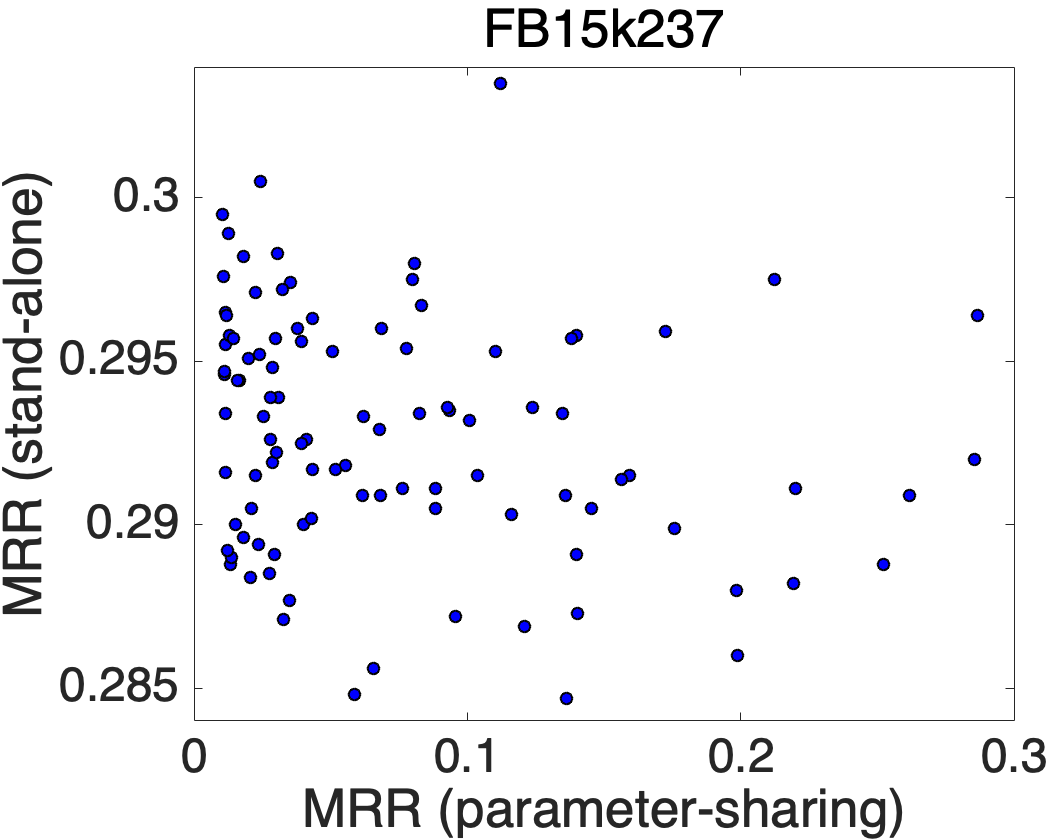}
	
	\vspace{-10px}
	\caption{
		MRRs of structures as estimated by the parameter-sharing approach and stand-alone
		approach.}
	\label{fig:share}
	\vspace{-6px}
\end{figure}

Figure~\ref{fig:share} shows the 
MRR estimated by parameter-sharing versus 
the true MRR obtained by individual model training.
As can be seen,
structures that have high
estimated MRRs 
(by parameter sharing)
do not truly have 
high MRRs.
Indeed, the Spearman's rank correlation coefficient\footnote{\url{https://en.wikipedia.org/wiki/Spearman\%27s_rank_correlation_coefficient}}
between the two sets of
MRRs is negative
($-0.2686$ 
on WN18RR  
and $-0.2451$ 
on FB15k237).
This demonstrates that 
the one-shot approach, though faster,
cannot find good structures.

\subsection{Multi-Hop Query}
\label{sec:exp:hop}

In this section, we perform experiment on 
multi-hop query
as introduced in Section~\ref{sec:app2}.  
The 
entity and relation
embeddings 
are optimized
by maximizing the scores on positive queries 
and minimizing the scores on negative queries, which 
are generated by replacing $e_L$ with an incorrect entity.
On evaluation,
we rank the scores of queries  $(e_0, r_1\circ r_2 \circ \dots \circ r_L, e_L)$ of all $e_L\in\mathcal E$
to obtain the ranking of ground truth entities.

\subsubsection{Setup}

Following~\cite{ren2019query2box},
we use the FB15k and FB15k237 datasets
in Table~\ref{tab:dataset}.
Evaluation is based  on two-hop (2p) and three-hop 
(3p)
queries.
Interested readers are referred to~\cite{ren2019query2box} 
for a more detailed description on query generation.
For FB15k,
there are 273,710 queries in the training set,
8,000 non-overlapping queries in the validation and testing sets.
For FB15k237,
there are 143,689 training queries,
and 5,000 queries for validation and testing.
The setting of the search algorithms' hyper-parameters are 
the same as in Section~\ref{ssec:KGC}.
For the learning hyper-parameters,
we search the dimension $d\in\{32,64\}$,
and the other hyper-parameters are the same as those in Section~\ref{ssec:KGC}.
We use the MRR performance on the validation set
to search for structures as well as hyper-parameters.
For performance evaluation,  
we follow 
\cite{hamilton2018embedding,ren2019query2box},
and use 
the testing Hit@3 and MRR.

We compare with the following baselines:
(i) TransE-Comp~\cite{guu2015traversing}
(based on
TransE); (ii)
Diag-Comp~\cite{guu2015traversing}
(based on DistMult); 
(iii) GQE~\cite{hamilton2018embedding}, which
uses a 
$d\times d$ trainable 
matrix $\bm R_{(r)}$ 
for composition,
and can be regarded as a composition based on RESCAL~\cite{nickel2011three}; and
(iv)
Q2B~\cite{ren2019query2box},
which is a recently proposed box embedding method.

\subsubsection{Results}

Results are
shown in Table~\ref{tab:exp:query}.
As can be seen, among the baselines,
TransE-Comp, Diag-Comp and GQE are inferior
to Q2B.
This shows that the general scoring functions
cannot be directly applied to model the complex interactions
in multi-hop queries.
On the other hand,
AutoBLM and
AutoBLM+ have better performance as
they can adapt to the different tasks with different matrices $g_K(\bm{A}, \bm r)$.
The obtained structures can be found in Appendix \ref{app:figures}.

\begin{table}[ht]
	\vspace{-5px}
	\caption{Testing performance of H@3 and MRR on multi-hop query task. Results of $\star$'s are copied from \cite{ren2019query2box}.}
	\label{tab:exp:query}
	\vspace{-10px}
	\centering
	\setlength\tabcolsep{2.5pt}
	\renewcommand{\arraystretch}{1.1}
	\begin{tabular}{c|cc|cc|cc|cc}
		\toprule
		& \multicolumn{4}{c|}{FB15K}                    & \multicolumn{4}{c}{FB15K237}                       \\ 
		& \multicolumn{2}{c|}{2p} & \multicolumn{2}{c|}{3p} & \multicolumn{2}{c|}{2p}	
		& \multicolumn{2}{c}{3p} \\ 
		& H@3        & MRR       &   H@3       &  MRR     &  H@3        & MRR        &  H@3        &  MRR         \\ \midrule 
		TransE-Comp~\cite{guu2015traversing}	&  27.3     &   .264    & 15.8 &  .153     &  19.4 &  .177   & 14.0 & .134  \\ 
		Diag-Comp~\cite{guu2015traversing}   &   32.2   &	  .309  &  27.5   & .266  &   19.1   &  .187   & 15.5  &   .147         \\  
		GQE~\cite{hamilton2018embedding}$\star$                      & 34.6       & .320      & 25.0      &  .222   & 21.3       &  .193     & 15.5      &  .145          \\ 
		Q2B~\cite{ren2019query2box}$\star$     & 41.3       & .373      & 30.3     &    .274     & 24.0     &    .225     & 18.6      &  .173        \\ \midrule
		AutoBLM			& 41.5	& .402	&  29.1 &	.283  &	 23.6	&  .232	 &	  18.2	&  .180	\\  
		AutoBLM+        &   \textbf{43.2}   &    \textbf{.415}  &   \textbf{30.7}   &    \textbf{.293}   &     \textbf{24.9}     &      \textbf{.248}   &  \textbf{19.9}     &  	\textbf{.196}    \\ \bottomrule
	\end{tabular}
\vspace{-10px}
\end{table}

\subsection{Entity Classification}
\label{sec:exp:cls}

In this section, we perform experiment on 
entity classification
as introduced in 
Section~\ref{sssec:entclass}.

\subsubsection{Setup}

After aggregation for $L$ layers,
representation $\bm e^L$
at the last layer 
is transformed by a multi-layer perception (MLP) to
$\bm e_i^o = MLP(\bm e_i^L)\in\mathbb R^C$,
where
$d$ 
is 
the intermediate layer dimension,
and 
is the number of classes.
The parameters,
including embeddings of entities, relations,
$\bm W_0^\ell$, $\bm W^{\ell}$'s and the MLP,
are optimized by minimizing the cross-entropy loss 
on the labeled entities:
$\mathcal L = -\sum_{i\in\mathcal B}\sum_{c=1}^C y_{ic}\ln e^o_{ic}$,
where $\mathcal B$ is the set of labeled entities,
$y_{ic} \in \{0,1\}$ indicates whether the $i$th entity belongs to class $c$,
and $e^o_{ic}$ is the $c$th dimension of $\bm e_i^o$.

Three graph datasets 
are used 
(Table~\ref{tab:stat:entclass}):
AIFB, an affiliation graph;
MUTAG, 
a bioinformatics graph;
and BGS, 
a geological graph.
More details can be found in~\cite{ristoski2016rdf2vec}.
All entities do not have attributes.
The entities' and relations'
trainable embeddings
are used 
as input to the GCN.

\begin{table}[ht]
	\centering
	\vspace{-5px}
	\caption{Data sets used in entity classification.
	Sparsity is computed as $\text{\#edges}/(\text{\#entity}^2\cdot \text{\#relation})$.}
	\label{tab:stat:entclass}
	\vspace{-10px}
	\setlength\tabcolsep{2.5pt}
	\begin{tabular}{c|ccccccc}
		\toprule
		dataset & \#entity     & \#relation & \#edges   & \#train & \#test & \#classes & sparsity \\ \midrule
		AIFB    & 8,285   & 45  & 29,043  &  140     & 36     & 4     &  9.4e-6\\
		MUTAG   & 23,644  & 23  & 74,227  &  272     &   68   & 2   &  5.7e-6  \\
		BGS     & 333,845 & 103 & 916,199 &   117    &  29    & 2     & 8.0e-8 \\ \bottomrule
	\end{tabular}
\vspace{-5px}
\end{table}

The following five models
are compared:
(i) GCN~\cite{kipf2016semi},
with $\phi(\bm e_j^{\ell}, \bm r^{\ell}) = \bm e_j^\ell$,
does not leverage relations of the edges;
(ii) 
R-GCN~\cite{schlichtkrull2018modeling}, with 
$\phi(\bm e_j^{\ell}, \bm r^{\ell}) = \bm R_{(r)}^{\ell}\bm e_j^{\ell}$;
(iii) CompGCN~\cite{vashishth2019composition} with 
$\phi(\bm e_j^{\ell}, \bm r^\ell) = \bm e_j^{\ell}$ (-/*/$\star$) $\bm r^{\ell}$,
in which the operator (subtraction/multiplication/circular correlation as
discussed in Section~\ref{sssec:entclass})
is chosen based on 
5-fold cross-validation;
(iv) AutoBLM; and (v) AutoBLM+.
oth 
AutoBLM and AutoBLM+
use the searched structure $\bm{A}$ to form 
$\phi(\bm e_j^{\ell}, \bm r^{\ell}) =
g_K(\bm{A}, \bm r^\ell)\bm e_j^{\ell}$.

Setting of the hyper-parameters  are the same as in
Section~\ref{ssec:KGC}.
As for the learning hyper-parameters,
we search the embedding dimension $d$ 
from $\{12,20,32,48\}$, 
learning rate from $[10^{-5}, 10^{-1}]$ with Adam as the optimizer~\cite{kingma2014adam}.
For the GCN structure,
the hidden size is the same as the embedding dimension,
the dropout rate 
for each layer is from $[0,0.5]$.
We search for 50 hyper-parameter settings
for each dataset based on the 5-fold classification accuracy.

For performance evaluation, we use 
the testing accuracy.
Each model runs 5 times, and then the average testing accuracy reported.

\subsubsection{Results}

Table~\ref{tab:classification}
shows the average testing accuracies.
Among the baselines, R-GCN is slightly better than CompGCN on the AIFB dataset,
but worse on the other two sparser datasets.
By searching the composition operators,
AutoBLM and AutoBLM+  outperform all the baseline methods.
AutoBLM+ is better than AutoBLM 
since it can find better structures 
with the same budget 
by the evolutionary algorithm.
The structures obtained are in Appendix \ref{app:figures}.

\begin{table}[ht]
	\centering
	\vspace{-5px}
	\caption{Classification accuracies (in \%) on entity classification task. Values marked ``*'' are copied from~\cite{vashishth2019composition}.}
	\label{tab:classification}
	\vspace{-10px}
	\begin{tabular}{c|ccc}
		\toprule
		 dataset   &        AIFB         &        MUTAG        &         BGS         \\ \midrule
 GCN     &   86.67    &   68.83     &   73.79     \\
R-GCN    &   92.78      &   74.12    &   82.97    \\
CompGCN   &  90.6$^*$   &  85.3$^*$   &   84.14    \\
AutoBLM  &   95.55 	&   85.00   &  84.83   \\
AutoBLM+ & \bf{96.66} & \bf{85.88} & \bf{86.17} \\ \bottomrule
	\end{tabular}
\vspace{-12px}
\end{table}

\section{Conclusion}
\label{sec:conclusion}

In this paper, we propose AutoBLM and AutoBLM+, 
the algorithms to automatically design and discover better scoring functions for KG learning.
By analyzing the limitations of existing scoring functions,
we setup the problem as searching relational matrix for BLMs.
In AutoBLM,
we use a progressive search algorithm
which is enhanced by a filter and a predictor with domain-specific knowledge,
to search in such a space.
Due to the limitation of progressive search,
we further design an evolutionary algorithm,
enhanced by the same filter and predictor,
called AutoBLM+.
AutoBLM and AutoBLM+ can efficiently design scoring functions
that outperform existing ones on tasks
including 
KG completion,
multi-hop query
and entity classification from the large search space.
Comparing AutoBLM with AutoBLM+,
 AutoBLM+ can
 design better scoring functions
with the same budget.

\section*{Acknowledgment}
This work was supported by the National Key Research and Development Plan under Grant 2021YFE0205700, the Chinese National Natural Science Foundation Projects 61961160704, and the Science and Technology Development Fund of Macau Project 0070/2020/AMJ.

\bibliographystyle{plain}
\bibliography{bib}

\begin{IEEEbiography}[{\includegraphics[width = 1\textwidth]{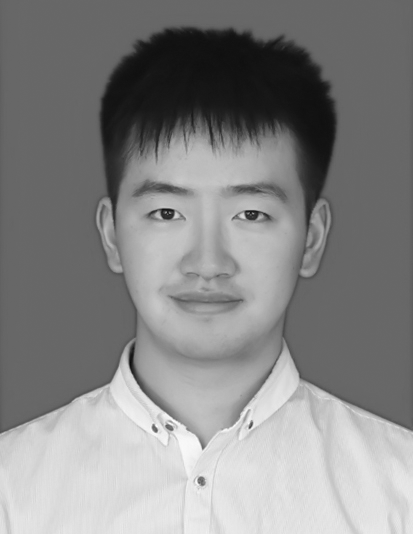}}]{Yongqi Zhang}
	(Member, IEEE)
is a senior researcher in 4Paradigm.
He obtained his Ph.D. degree at the Department of Computer Science and Engineering of Hong Kong University of Science and Technology (HKUST) in 2020 and received his bachelor degree at Shanghai Jiao Tong University (SJTU) in 2015.
He has published five top-tier conference/journal papers as first-author, including NeurIPS, ACL, WebConf, ICDE, VLDB-J.
His research interests focus on knowledge graph embedding,
automated machine learning and graoh learning.
He was a Program Committee for AAAI 2020-2022, IJCAI 2020-2022, CIKM 2021, KDD 2022, ICML 2022, and a reviewer for TKDE and NEUNET.
\end{IEEEbiography}

\begin{IEEEbiography}[{\includegraphics[width = 1\textwidth]{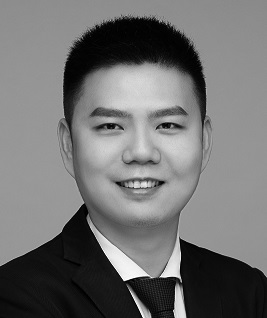}}]{Quanming Yao} (Member, IEEE)
is a tenure-track assistant professor at Department of Electronic Engineering, Tsinghua University. Before that, he spent three years from a researcher to a senior scientist in 4Paradigm INC, where he set up and led the company's machine learning research team.  He is a receipt of Wunwen Jun Prize of Excellence Youth of Artificial Intelligence (issued by CAAI), the runner up of Ph.D. Research Excellence Award (School of Engineering, HKUST), and a winner of Google Fellowship (in machine learning).
Currently,
his main research topics are Automated Machine Learning (AutoML) and neural architecture search (NAS).
He was an Area Chair for ICLR 2022, IJCAI 2021 and ACML 2021; 
Senior Program Committee for IJCAI 2020 and AAAI 2020-2021; and a guest editor of IEEE TPAMI AutoML special issue in 2019.
\end{IEEEbiography}


\begin{IEEEbiography}[{\includegraphics[width = 1\textwidth]{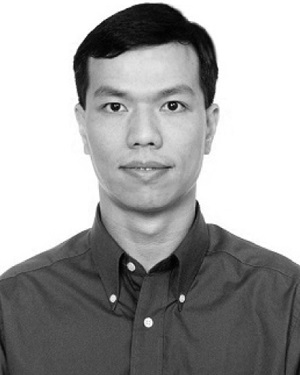}}]{James T. Kwok} (Fellow, IEEE)
	received the Ph.D. degree in computer science from The Hong Kong University of Science and Technology in 1996. 
	He is a Professor with the Department of Computer Science and Engineering, Hong Kong
	University of Science and Technology. His research interests include machine
	learning, deep learning, and artificial intelligence. He received the
	IEEE Outstanding 2004 Paper Award and the Second Class Award in Natural Sciences by
	the Ministry of Education, China, in 2008. 
	He is serving as an Associate Editor for the IEEE Transactions on Neural Networks and Learning Systems, Neural Networks, Neurocomputing, Artificial Intelligence Journal, International Journal of Data Science and Analytics, Editorial Board Member of Machine Learning,
	Board Member, and Vice President for Publications of the Asia Pacific Neural Network
	Society. He also served/is serving as Senior Area Chairs / Area Chairs of major machine learning /
	AI conferences including NIPS, ICML, ICLR, IJCAI, AAAI and ECML.
\end{IEEEbiography}

\cleardoublepage
\appendices

\section{Proofs}
\label{app:proof}

Denote $N=|\mathcal E|$. 
For vectors and matrix, we use the uppercase  italic bold letters, such as 
$\bm G,\bm E$, to denote matrices,
use uppercase normal bold letters, such as $\mathbf G$, to denote tensors,
lowercase bold letters, such as $\bm r$ to denote vectors,
and normal characters to indicate scalers, such as $K$.
$i, j$ are generally used to indicate index.
$\bm E_i, \bm r_i, i=1\dots K$, still a matrix or vector, are the $i$-th block of $K$-chunk split,
while $r_i, i=1\dots d$ is a scalar in the $i$-th dimension of vector $\bm r$.
$\bm E_{i,:}$ indicates the $i$-th row of matrix $\bm E$,
and $\bm E_{:,i}$ indicates the $i$-th column.
For $g_K(\bm{A}, \bm r)\in\mathbb R^{d\times d}$, 
we use $[g_K(\bm{A}, \bm r)]_{i,j}$ with $i,j=1\dots K$ to indicate the $(i,j)$-th block,
while $\{g_K(\bm{A}, \bm r)\}_{i,j}$ with $i, j = 1 \dots d$ 
to indicate the element in the $i$-th row and $j$-th column of $g_K(\bm{A}, \bm r)$.
$\lceil x\rceil$ means the smallest integer that is equal or larger than $x$.

\subsection{Proposition~\ref{pr:expBLMs}}
\label{app:expBLMs}

Here,
we first show some useful lemmas
in Appendix~\ref{app:lem1},
then we prove

\subsubsection{Auxiliary Lemmas}
\label{app:lem1}

Recall that 
\begin{align*} 
\mathcal{C} 
\equiv 
\{ \bm{r} & \in\mathbb R^K \,|\, 
\bm r\neq \bm 0, 
\notag
\\ 
&r_i\in
\{0,\pm1,\dots,\pm K\},
i=1,\dots, K \}.
\end{align*} 
We firstly show that 
any symmetric matrix can be factorized as a bilinear form if 
$\exists \; \hat{\bm{r}}\in \mathcal C, g_{K}(\bm{A},\hat{\bm{r}})^\top=g_{K}(\bm{A}, \hat{\bm{r}})$ 
in Lemma~\ref{lm:sym};
any skew-symmetric matrix can be factorized as a bilinear form if
$\exists \; \breve{\bm{r}}\in \mathcal C, g_{K}(\bm{A},   \breve{\bm{r}})^\top=-g_{K}(\bm{A}, \breve{\bm{r}})$
in Lemma~\ref{lm:ssym}.

\begin{lemma}
\label{lm:sym}
If $\exists \; \hat{\bm{r}}\in\mathcal C, g_{K}(\bm{A},\hat{\bm{r}})^\top=g_{K}(\bm{A}, \hat{\bm{r}})$,
there exist $\acute{\bm{E}}\in\mathbb R^{KN\times N}$ and  $\acute{\bm{r}}\in\mathbb R^{KN}$
such that any symmetric matrix $\hat{\bm{G}} = {\acute{\bm{E}}}^\top g_K(\bm{A}, \acute{\bm r})\acute{\bm{E}}$.
\end{lemma}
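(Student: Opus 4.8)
The plan is to reduce the factorization of an arbitrary symmetric $\hat{\bm G}\in\mathbb R^{N\times N}$ to its spectral decomposition together with the given symmetric ``template'' $\bm S\equiv g_K(\bm A,\hat{\bm r})$. Note that for $\hat{\bm r}\in\mathcal C\subset\mathbb R^K$ each chunk $\hat{\bm r}_m$ is a single scalar $\hat r_m$, so $\diag{\hat{\bm r}_m}$ is $1\times1$ and $g_K(\bm A,\hat{\bm r})\in\mathbb R^{K\times K}$ is exactly the matrix with entries $S_{ij}=\text{sign}(A_{ij})\,\hat r_{|A_{ij}|}$, which is symmetric by hypothesis. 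First I would write the real spectral decomposition $\hat{\bm G}=\bm U\bm\Lambda\bm U^\top$ with $\bm U$ orthogonal and $\bm\Lambda=\diag{\bm\lambda}$, stack $\acute{\bm E}=[\bm E_1^\top,\dots,\bm E_K^\top]^\top$ from $N\times N$ blocks $\bm E_i$, and split $\acute{\bm r}=[\acute{\bm r}_1;\dots;\acute{\bm r}_K]$ into $K$ chunks in $\mathbb R^N$, so that expanding the block product gives
\[
\acute{\bm E}^\top g_K(\bm A,\acute{\bm r})\,\acute{\bm E}
=\sum_{i,j=1}^{K}\text{sign}(A_{ij})\,\bm E_i^\top\diag{\acute{\bm r}_{|A_{ij}|}}\bm E_j .
\]

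The key idea is to force every block to share the same eigenbasis, so that the whole relation structure collapses into a single scalar. I would set $\bm E_i=e_i\,\bm U^\top$ for a vector $\bm e=[e_1,\dots,e_K]^\top\in\mathbb R^K$ to be fixed, and tie all chunks of $\acute{\bm r}$ to a single vector $\bm\mu\in\mathbb R^N$ via $\acute{\bm r}_k=\hat r_k\,\bm\mu$. Then $\diag{\acute{\bm r}_{|A_{ij}|}}=\hat r_{|A_{ij}|}\diag{\bm\mu}$, and substituting yields
\begin{align*}
\acute{\bm E}^\top g_K(\bm A,\acute{\bm r})\,\acute{\bm E}
&=\sum_{i,j=1}^{K}\text{sign}(A_{ij})\,\hat r_{|A_{ij}|}\,e_ie_j\,\bm U\diag{\bm\mu}\bm U^\top\\
&=\big(\bm e^\top\bm S\bm e\big)\,\bm U\diag{\bm\mu}\bm U^\top .
\end{align*}
Choosing $\bm\mu=\bm\lambda/(\bm e^\top\bm S\bm e)$ then produces exactly $\bm U\bm\Lambda\bm U^\top=\hat{\bm G}$, and the resulting $\acute{\bm E}\in\mathbb R^{KN\times N}$, $\acute{\bm r}\in\mathbb R^{KN}$ have the claimed sizes.

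The one step that genuinely needs justification, and the hard part of the argument, is the existence of a vector $\bm e$ with $c\equiv\bm e^\top\bm S\bm e\neq 0$, since otherwise the division defining $\bm\mu$ is illegal. This is where the hypothesis enters: the witness $\hat{\bm r}$ is nonzero and, for the non-degenerate $\bm A$ of interest, every index in $\{1,\dots,K\}$ occurs as some $|A_{ij}|$, so $\bm S=g_K(\bm A,\hat{\bm r})$ is a \emph{nonzero} real symmetric matrix. A nonzero real symmetric matrix cannot have an identically-zero quadratic form, so some $\bm e$ gives $c\neq 0$; taking $\bm\mu=\bm\lambda/c$ closes the argument and automatically absorbs the sign of $c$. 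Finally I would remark that the skew-symmetric companion (Lemma~\ref{lm:ssym}) follows by the identical scheme, replacing the spectral decomposition by the orthogonal canonical form of a skew-symmetric matrix (with $2\times2$ blocks $\lambda\left[\begin{smallmatrix}0&1\\-1&0\end{smallmatrix}\right]$) and using the skew template $\breve{\bm r}$ in place of $\hat{\bm r}$.
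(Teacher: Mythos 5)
Your proof of the lemma itself is correct, but it takes a genuinely different route from the paper's. The paper argues by a case analysis on the witness matrix $g_K(\bm{A},\hat{\bm{r}})$: if it has a nonzero diagonal entry at $(a_1,a_1)$, it places the orthogonal factor $\bm P$ of $\hat{\bm{G}}=\bm P^\top\bm\Lambda\bm P$ in block $a_1$ of $\acute{\bm{E}}$ and zeros elsewhere; otherwise it picks a nonzero off-diagonal pair $(a_2,a_3)$, places $\bm P$ in both blocks, and splits $\bm\Lambda$ in half between the corresponding relation chunks. Your ansatz $\bm E_i=e_i\bm U^\top$, $\acute{\bm r}_k=\hat r_k\bm\mu$ subsumes both cases: the paper's case 1 is your $\bm e$ equal to the $a_1$-th standard basis vector (giving $c=S_{a_1a_1}$) and its case 2 is $\bm e$ the sum of the $a_2$-th and $a_3$-th (giving $c=2S_{a_2a_3}$), while your polarization argument for the existence of some $\bm e$ with $\bm e^\top\bm S\bm e\neq 0$ removes the case split entirely. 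Two caveats. First, both proofs tacitly require $g_K(\bm{A},\hat{\bm{r}})\neq\bm 0$ (the zero matrix is vacuously symmetric but factorizes nothing); the paper's case 2 silently assumes this when it asserts a nonzero off-diagonal entry exists, and you import it through non-degeneracy of $\bm{A}$, which is not formally a hypothesis of the lemma --- so you are no worse off than the paper, merely more explicit about the assumption. Second, your closing claim that the skew-symmetric companion ``follows by the identical scheme'' does not hold: for skew-symmetric $\bm S$ the quadratic form $\bm e^\top\bm S\bm e$ vanishes identically, so the single-direction ansatz $\bm E_i=e_i\bm V^\top$ collapses the whole expression to zero, and the orthogonal canonical form of a skew matrix is block-diagonal rather than diagonal, so the reduction to a single $\diag{\bm\mu}$ also fails. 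There one genuinely needs two distinct block-directions, which is exactly what the paper's choice of $\bm I$ in one block and $\breve{\bm{G}}$ itself in another provides.
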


\begin{proof}
For any real symmetric matrix $\hat{\bm{G}}\in\mathbb R^{N\times N}$,
it can be decomposed as~\cite{horn2012matrix}
\begin{equation}
\hat{\bm{G}} 
= 
\bm P^\top \bm{\Lambda} \bm P,
\label{eq:lemma1:gsym}
\end{equation}
where $\bm P\in\mathbb R^{N\times N}$ is an orthogonal matrix 
and $\bm{\Lambda} \in\mathbb R^{N\times N}$ 
is a diagonal matrix with $N$ elements.
If $\exists \; \hat{\bm{r}}\in\mathcal C, g_{K}(\bm{A},\hat{\bm{r}})^\top=g_{K}(\bm{A}, \hat{\bm{r}})$,
we discuss with two cases: 
1) there exist some non-zero elements in the diagonal;
2) all the diagonal elements are zero.

To begin with,
we evenly split
the relation embedding into $K$ parts as
$\acute{\bm r} = [\acute{\bm r}_1;\acute{\bm r}_2; \dots; \acute{\bm r}_K]\in\mathbb R^{KN}$,
and
the entity embedding matrix into $K$ blocks as
$\acute{\bm{E}} = [\acute{\bm{E}}_1; \acute{\bm{E}}_2; \dots; \acute{\bm{E}}_K]\in\mathbb R^{KN\times N}$.
Then for the two cases:

\begin{enumerate}[leftmargin=*]
	\item If there exist non-zero elements in the diagonal,
	i.e. $\exists i\in\{1\dots K\}:\;\{g_K(\bm A, \hat{\bm r})\}_{i,i}\neq 0$,
	we denote any one of the index in the diagonal as $(a_1, a_1)$ with $a_1\in\{1\dots K\}$.
	Then, 
	we have
	\begin{equation*}
	\{g_K(\bm A, \hat{\bm r})\}_{a_1,a_1} = \text{sign}(A_{a_1,a_1})\cdot \hat{\bm r}[|A_{a_1,a_1}|]\neq 0,
	\end{equation*}
	and $\text{sign}(A_{a_1,a_1})\neq 0$.
	Next,
	we assign $\acute{\bm{E}}$ with
	\begin{equation}
		\acute{\bm{E}}_i = 
		\begin{cases}
		\bm P &  i=a_1 \\
		\bm 0 & \text{otherwise}
		\end{cases}.
		\label{eq:lemma1:E1}
	\end{equation}
	And $\acute{\bm{r}}$ with
	\begin{equation}
		\acute{\bm r}_i = 
		\begin{cases}
		\text{sign}(A_{a_1,a_1})\cdot\text{vec}(\bm \Lambda)  &  i=|A_{a_1, a_1}| \\
		\bm 0 & \text{otherwise}
		\end{cases},
		\label{eq:lemma1:r1}
	\end{equation}
	where $\text{vec}(\bm \Lambda)$ returns the diagonal elements in the diagonal matrix $\bm \Lambda$
	and 
	\begin{align}
		&[g_K(\bm{A}, \acute{\bm{r}})]_{a_1,a_1}  
		\notag 
		\\
		=& 
		\text{sign}(A_{a_1,a_1})\cdot \text{diag}\big( \text{sign}(A_{a_1,a_1})\cdot\text{vec}(\bm \Lambda)\big) 
		\notag
		\\
		=& 
		\big(\text{sign}(A_{a_1,a_1})\big)^2 \cdot \bm \Lambda = 
		\bm \Lambda.
		\label{eq:lemma1:g1}
	\end{align}
	
	Based on \eqref{eq:lemma1:gsym}, \eqref{eq:lemma1:E1} and \eqref{eq:lemma1:g1},
	we have 
	\begin{align*}
	{\acute{\bm{E}}}^\top g_K(\bm{A}, \acute{\bm{r}})\acute{\bm{E}} 
	=&  
	\sum\nolimits_{i,j}^{K, K} {\acute{\bm{E}}_i}^\top [g_K(\bm{A}, \acute{\bm{r}})]_{i,j} \acute{\bm{E}}_j
	\\
	=& 
	\bm 0 + 
	{\acute{\bm{E}}_{a_1}}^\top [g_K(\bm{A}, \acute{\bm{r}})]_{a_1,a_1} \acute{\bm{E}}_{a_1}, 
	\\
	=&
	\bm 0 +
	\bm P^\top \bm \Lambda \bm P
	=
	\hat{\bm{G}}.
	\end{align*}
	
	\item If all the diagonal elements are zero,
	there must be some non-zero elements in the non-diagonal indices,
	i.e. 
	$\exists (i,j)\in\{1\dots K\} \times \{1\dots K\}:\;\{g_K(\bm A, \hat{\bm r})\}_{i,j}\neq 0 \wedge
	i\neq j$.
	We denote any one of the index in the non-diagonal indices as $(a_2, a_3)$
	with $a_2, a_3\in \{1\dots K\}$ and $a_2\neq a_3$.
	Then we have
	\begin{align*}
	\{g_K(\bm A, \hat{\bm r})\}_{a_2,a_3} &= \{g_K(\bm A, \hat{\bm r})\}_{a_3,a_2} \\
	&=\text{sign}(A_{a_2,a_3})\cdot \hat{\bm r}[|A_{a_2,a_3}|]\neq 0,
	\end{align*}
	and $\text{sign}(A_{a_2,a_3})\neq 0$.
	
	Similarly,
	we assign 
	$\acute{\bm{E}}$ with
	\begin{equation}
	\acute{\bm{E}}_i = 
	\begin{cases}
	\bm P &  i=a_2 \text{ or } a_3\\
	\bm 0 & \text{otherwise}
	\end{cases}.
	\label{eq:lemma1:E2}
	\end{equation}
	And $\acute{\bm{r}}$ with
	\begin{equation}
	\acute{\bm r}_i = 
	\begin{cases}
	\frac{\text{sign}(A_{a_2,a_3})}{2}\cdot\text{vec}(\bm \Lambda)  &  i=|A_{a_2, a_3}| \\
	\frac{\text{sign}(A_{a_3,a_2})}{2}\cdot\text{vec}(\bm \Lambda)  &  i=|A_{a_3, a_2}| \\
	\bm 0 & \text{otherwise}
	\end{cases},
	\label{eq:lemma1:r2}
	\end{equation}
	which leads to
	\begin{align}
	&[g_K(\bm{A}, \acute{\bm{r}})]_{a_2,a_3}  = [g_K(\bm{A}, \acute{\bm{r}})]_{a_3,a_2}
	\notag 
	\\
	=& 
	{\text{sign}(A_{a_2,a_3})}\cdot \text{diag}\big( \frac{\text{sign}(A_{a_2,a_3})}{2}\cdot\text{vec}(\bm \Lambda)\big) 
	\notag
	\\
	=& 
	\frac{1}{2} \bm \Lambda.
	\label{eq:lemma1:g2}
	\end{align}
	Based on \eqref{eq:lemma1:gsym}, \eqref{eq:lemma1:E2} and \eqref{eq:lemma1:g2},
	we have
	\begin{align*}
	&{\acute{\bm{E}}}^\top g_K(\bm{A}, \acute{\bm{r}})\acute{\bm{E}} 
	=  
	\sum\nolimits_{i,j}^{K, K} {\acute{\bm{E}}_i}^\top [g_K(\bm{A}, \acute{\bm{r}})]_{i,j} \acute{\bm{E}}_j
	\\
	=& 
	\bm 0 + 
	{\acute{\bm{E}}_{a_2}}^\top [g_K(\bm{A}, \acute{\bm{r}})]_{a_2,a_3} \acute{\bm{E}}_{a_3}
	+ {\acute{\bm{E}}_{a_3}}^\top [g_K(\bm{A}, \acute{\bm{r}})]_{a_3,a_2} \acute{\bm{E}}_{a_2}, 
	\\
	=&
	\bm 0 +
	\bm P^\top (\frac{1}{2}\bm \Lambda) \bm P + \bm P^\top (\frac{1}{2}\bm \Lambda) \bm P
	=
	\hat{\bm{G}}.
	\end{align*}
\end{enumerate}
Hence, 
there exist $\acute{\bm{E}}\in\mathbb R^{KN\times N}$ and  $\acute{\bm{r}}\in\mathbb R^{KN}$
such that any symmetric matrix $\hat{\bm{G}} = {\acute{\bm{E}}}^\top g_K(\bm{A}, \acute{\bm r})\acute{\bm{E}}$.
\end{proof}

\begin{lemma}
\label{lm:ssym}
If $\exists \; \breve{\bm{r}}\in \mathcal C, g_{K}(\bm{A},\breve{\bm{r}})^\top=-g_{K}(\bm{A}, \breve{\bm{r}})$,
there exist $\grave{\bm{E}}\in\mathbb R^{KN\times N}$ and  $\grave{\bm{r}}\in\mathbb R^{KN}$
such that any skew-symmetric matrix $\breve{\bm{G}} = {\grave{\bm{E}}}^\top g_K(\bm{A}, \grave{\bm{r}})\grave{\bm{E}} \in \mathbb R^{N\times N}$.
\end{lemma}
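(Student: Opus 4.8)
The plan is to mirror the structure of Lemma~\ref{lm:sym}, while exploiting a simplification special to the skew-symmetric case. First I would record the structural consequence of the hypothesis. Since every block $[g_K(\bm{A},\breve{\bm{r}})]_{ij}=\text{sign}(A_{ij})\,\text{diag}(\breve{\bm{r}}_{|A_{ij}|})$ is a diagonal (hence symmetric) matrix, transposition of $g_K$ merely swaps block indices, so $g_{K}(\bm{A},\breve{\bm{r}})^\top=-g_{K}(\bm{A},\breve{\bm{r}})$ is equivalent to $[g_K(\bm{A},\breve{\bm{r}})]_{ji}=-[g_K(\bm{A},\breve{\bm{r}})]_{ij}$ for all $i,j$. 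Setting $i=j$ forces every diagonal block to vanish. Because $\breve{\bm{r}}\neq\bm 0$ (and $\bm{A}$ is non-degenerate, so $\breve{\bm{r}}\neq\bm 0$ produces a nonzero block), $g_K(\bm{A},\breve{\bm{r}})$ is a nonzero skew-symmetric matrix, hence there is an off-diagonal pair $(a,b)$ with $a\neq b$, $[g_K(\bm{A},\breve{\bm{r}})]_{ab}\neq\bm 0$ (so $\breve{\bm{r}}_{|A_{ab}|}\neq\bm 0$ and $\text{sign}(A_{ab})\neq 0$), and $[g_K(\bm{A},\breve{\bm{r}})]_{ba}=-[g_K(\bm{A},\breve{\bm{r}})]_{ab}\neq\bm 0$.

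Next, given an arbitrary skew-symmetric target $\breve{\bm{G}}\in\mathbb R^{N\times N}$, I would split $\grave{\bm{r}}=[\grave{\bm{r}}_1;\dots;\grave{\bm{r}}_K]$ and $\grave{\bm{E}}=[\grave{\bm{E}}_1;\dots;\grave{\bm{E}}_K]$ into $K$ blocks as in Lemma~\ref{lm:sym}, and build $\grave{\bm{r}}$ so that $[g_K(\bm{A},\grave{\bm{r}})]_{ab}=\bm I_N$ and $[g_K(\bm{A},\grave{\bm{r}})]_{ba}=-\bm I_N$: put $\grave{\bm{r}}_{|A_{ab}|}=\text{sign}(A_{ab})\,\bm 1$ and $\grave{\bm{r}}_{|A_{ba}|}=-\text{sign}(A_{ba})\,\bm 1$ (with $\bm 1\in\mathbb R^N$ the all-ones vector), all other blocks zero. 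When $|A_{ab}|\neq|A_{ba}|$ these touch different blocks and do not conflict; when $|A_{ab}|=|A_{ba}|$, skew-symmetry of $g_K(\bm{A},\breve{\bm{r}})$ on this pair gives $\text{sign}(A_{ba})=-\text{sign}(A_{ab})$, so the two assignments agree. Crucially, and unlike the symmetric case, no spectral or canonical decomposition of $\breve{\bm{G}}$ is needed: I would simply take $\grave{\bm{E}}_a=\bm I_N$ (the $N\times N$ identity), $\grave{\bm{E}}_b=\tfrac12\breve{\bm{G}}$, and $\grave{\bm{E}}_i=\bm 0$ otherwise. The two surviving cross terms then form $\bm M-\bm M^\top$ with $\bm M=\tfrac12\breve{\bm{G}}$, which is automatically skew-symmetric and recovers any skew target.

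The step I expect to be the main obstacle is verifying that the diagonal blocks of $g_K(\bm{A},\grave{\bm{r}})$ do not spoil the computation, since $\grave{\bm{r}}$ differs from the given $\breve{\bm{r}}$ and an index collision could reintroduce a nonzero $[g_K(\bm{A},\grave{\bm{r}})]_{aa}$ or $[g_K(\bm{A},\grave{\bm{r}})]_{bb}$. To rule this out I would argue: if $A_{aa}=0$ the block vanishes trivially; otherwise $[g_K(\bm{A},\breve{\bm{r}})]_{aa}=\bm 0$ forces $\breve{\bm{r}}_{|A_{aa}|}=\bm 0$, and since $\breve{\bm{r}}_{|A_{ab}|}\neq\bm 0$ and $\breve{\bm{r}}_{|A_{ba}|}\neq\bm 0$ this gives $|A_{aa}|\notin\{|A_{ab}|,|A_{ba}|\}$, whence $\grave{\bm{r}}_{|A_{aa}|}=\bm 0$ and $[g_K(\bm{A},\grave{\bm{r}})]_{aa}=\bm 0$; the argument for $bb$ is identical.

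Finally I would expand, keeping only the two cross terms,
\begin{align*}
\grave{\bm{E}}^\top g_K(\bm{A},\grave{\bm{r}})\grave{\bm{E}}
&= \grave{\bm{E}}_a^\top[g_K(\bm{A},\grave{\bm{r}})]_{ab}\grave{\bm{E}}_b
 + \grave{\bm{E}}_b^\top[g_K(\bm{A},\grave{\bm{r}})]_{ba}\grave{\bm{E}}_a \\
&= \bm I_N\big(\tfrac12\breve{\bm{G}}\big) - \big(\tfrac12\breve{\bm{G}}\big)^\top\bm I_N
 = \tfrac12\breve{\bm{G}}-\tfrac12\breve{\bm{G}}^\top = \breve{\bm{G}},
\end{align*}
using $\breve{\bm{G}}^\top=-\breve{\bm{G}}$. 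This exhibits the required $\grave{\bm{E}}\in\mathbb R^{KN\times N}$ and $\grave{\bm{r}}\in\mathbb R^{KN}$ with $\breve{\bm{G}}={\grave{\bm{E}}}^\top g_K(\bm{A},\grave{\bm{r}})\grave{\bm{E}}$.
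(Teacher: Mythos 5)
Your proof is correct and follows essentially the same construction as the paper: pick a nonzero off-diagonal block pair $(a,b)$ of $g_K(\bm{A},\breve{\bm{r}})$, set the relation chunks so that $[g_K(\bm{A},\grave{\bm{r}})]_{ab}=-[g_K(\bm{A},\grave{\bm{r}})]_{ba}$ is a multiple of the identity, take $\grave{\bm{E}}_a=\bm I$ and $\grave{\bm{E}}_b\propto\breve{\bm{G}}$, and use $\breve{\bm{G}}^\top=-\breve{\bm{G}}$ to recover $\breve{\bm{G}}$ from the two cross terms (the paper merely places the factor $\tfrac{1}{2}$ in $\grave{\bm{r}}$ rather than in $\grave{\bm{E}}_b$). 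You are in fact slightly more careful than the paper's own write-up, which glosses over both the sign consistency of the relation-chunk assignment when $|A_{ab}|=|A_{ba}|$ and the verification that the diagonal blocks $[g_K(\bm{A},\grave{\bm{r}})]_{aa}$ and $[g_K(\bm{A},\grave{\bm{r}})]_{bb}$ vanish.
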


\begin{proof}
First,
if $\exists \; \breve{\bm{r}}\in\mathcal C, g_{K}(\bm{A},\breve{\bm{r}})^\top=-g_{K}(\bm{A}, \breve{\bm{r}})$,
all the elements in the diagonal should be zero,
i.e. $\{g_K(\bm{A}, \breve{\bm r})\}_{i,j}= 0$, with $i = j$.
Therefore,
there must be some non-zero elements in the non-diagonal indices,
i.e.,
$\exists (i,j)\in\{1\dots K\}^2:\;\{g_K(\bm A, \breve{\bm r})\}_{i,j}\neq 0 \wedge
i\neq j$.
We denote any one of the index in the non-diagonal indices as
$(b_1, b_2)$ with $b_1, b_2\in\{1\dots K\}$ and $b_1\neq b_2$.
Then we have
\begin{align*}
\{g_K(\bm A, \breve{\bm r})\}_{b_1,b_2} &= -\{g_K(\bm A, \breve{\bm r})\}_{b_2,b_1} \\
&=\text{sign}(A_{b_1,b_2})\cdot \breve{\bm r}[|A_{b_1,b_2}|]\neq 0,
\end{align*}
and $\text{sign}(A_{b_1,b_2})\neq 0$.
Next,
we assign 
$\acute{\bm{E}}$ with
\begin{equation}
\grave{\bm{E}}_i = 
\begin{cases}
\bm I &  i=b_1 \\
\breve{\bm G} & i=b_2 \\
\bm 0 & \text{otherwise}
\end{cases}.
\label{eq:lemma2:E}
\end{equation}
And $\acute{\bm{r}}$ with
\begin{equation}
\grave{\bm r}_i = 
\begin{cases}
\frac{\text{sign}(A_{b_1,b_2})}{2}\cdot\bm 1  &  i=|A_{b_1, b_2}| \\
\frac{\text{sign}(A_{b_2,b_1})}{2}\cdot\bm 1  &  i=|A_{b_2, b_1}| \\
\bm 0 & \text{otherwise}
\end{cases},
\label{eq:lemma2:r}
\end{equation}
which leads to
\begin{align}
&[g_K(\bm{A}, \grave{\bm{r}})]_{b_1,b_2}  = -[g_K(\bm{A}, \grave{\bm{r}})]_{b_2,b_1}
\notag 
\\
=& 
{\text{sign}(A_{b_1,b_2})}\cdot \text{diag}\big( \frac{\text{sign}(A_{b_1,b_2})}{2}\cdot{\bm 1}\big) 
\notag
\\
=& 
\frac{1}{2} \bm I.
\label{eq:lemma2:g}
\end{align}

Since $\breve{\bm G}$ is skew-symmetric,
we have
\begin{equation}
\breve{\bm G}^\top = - \breve{\bm G}.
\label{eq:lemma2:gssym}
\end{equation}
Based on \eqref{eq:lemma2:E}, \eqref{eq:lemma2:g} and \eqref{eq:lemma2:gssym}
we have
\begin{align*}
&{\grave{\bm{E}}}^\top g_K(\bm{A}, \grave{\bm{r}})\grave{\bm{E}} 
=  
\sum\nolimits_{i,j}^{K, K} {\grave{\bm{E}}_i}^\top [g_K(\bm{A}, \grave{\bm{r}})]_{i,j} \grave{\bm{E}}_j
\\
=& 
\bm 0 + 
{\grave{\bm{E}}_{b_1}}^\top [g_K(\bm{A}, \grave{\bm{r}})]_{b_1,b_2} \grave{\bm{E}}_{b_2}
+ {\grave{\bm{E}}_{b_2}}^\top [g_K(\bm{A}, \grave{\bm{r}})]_{b_2,b_1} \grave{\bm{E}}_{b_1}, 
\\
=&
\bm 0 +
\bm I^\top (\frac{1}{2}\bm I) \breve{\bm G} + \breve{\bm G}^\top (-\frac{1}{2}\bm I) \bm I \\
=&
\bm 0 + \frac{1}{2}\breve{\bm G} - \frac{1}{2}(-\breve{\bm G})
=
\breve{\bm{G}}.
\end{align*}

Hence,
there exist $\grave{\bm{E}}\in\mathbb R^{KN\times N}$ and  $\grave{\bm{r}}\in\mathbb R^{KN}$
such that any symmetric matrix $\breve{\bm{G}} = {\grave{\bm{E}}}^\top g_K(\bm{A}, \grave{\bm r})\grave{\bm{E}}$.
\end{proof}

Based on Lemma~\ref{lm:sym} and~\ref{lm:ssym},
we prove the following lemma for any real-valued square matrices.

\begin{lemma}
	\label{lm:matrix}
	If
	1) $\exists \; \hat{\bm{r}}\in\mathcal C,
	g_{K}(\bm{A},\hat{\bm{r}})^\top=g_{K}(\bm{A}, \hat{\bm{r}})$),
	and
	2) $\exists \; \breve{\bm{r}}\in\mathcal C,
	g_{K}(\bm{A},\breve{\bm{r}})^\top = -g_{K}(\bm{A},\breve{\bm{r}})$),
	then
	there exist $\bm E\in\mathbb R^{2KN\times N}$ and $\bm r\in\mathbb R^{2KN}$
	that any matrix $\bm G\in\mathbb R^{N\times N}$
	can be written as 
	\[G_{ht}= f(h,r,t) = \bm h^\top g_K(\bm{A}, \bm r) \bm t, \]
	where $\bm h = \bm E_{:,h}$, 
	$\bm t = \bm E_{:,t}$.
\end{lemma}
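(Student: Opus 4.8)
The plan is to reduce the general-matrix case to the two special cases already settled by Lemma~\ref{lm:sym} and Lemma~\ref{lm:ssym}. First I would use the standard fact that any real square matrix splits uniquely into a symmetric and a skew-symmetric part,
\begin{equation*}
\bm{G} = \hat{\bm{G}} + \breve{\bm{G}}, \quad \hat{\bm{G}} = \tfrac{1}{2}(\bm{G} + \bm{G}^\top), \quad \breve{\bm{G}} = \tfrac{1}{2}(\bm{G} - \bm{G}^\top).
\end{equation*}
Condition 1 lets me invoke Lemma~\ref{lm:sym} to obtain $\acute{\bm{E}} \in \mathbb{R}^{KN \times N}$ and $\acute{\bm{r}} \in \mathbb{R}^{KN}$ with $\hat{\bm{G}} = {\acute{\bm{E}}}^\top g_K(\bm{A}, \acute{\bm{r}}) \acute{\bm{E}}$, and condition 2 lets me invoke Lemma~\ref{lm:ssym} to obtain $\grave{\bm{E}} \in \mathbb{R}^{KN \times N}$ and $\grave{\bm{r}} \in \mathbb{R}^{KN}$ with $\breve{\bm{G}} = {\grave{\bm{E}}}^\top g_K(\bm{A}, \grave{\bm{r}}) \grave{\bm{E}}$. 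The remaining task is to fuse these two independent factorizations into a single one of the required form over embeddings of doubled dimension $2KN$.

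Second, I would build the combined objects by interleaving the two pieces chunk by chunk. Writing $\acute{\bm{E}} = [\acute{\bm{E}}_1; \dots; \acute{\bm{E}}_K]$ and $\grave{\bm{E}} = [\grave{\bm{E}}_1; \dots; \grave{\bm{E}}_K]$ (each block in $\mathbb{R}^{N \times N}$), and similarly splitting $\acute{\bm{r}}, \grave{\bm{r}}$, I set the $i$-th chunk of the new embedding and relation to be the stacked pairs
\begin{equation*}
\bm{E}_i = \begin{bmatrix} \acute{\bm{E}}_i \\ \grave{\bm{E}}_i \end{bmatrix} \in \mathbb{R}^{2N \times N}, \qquad \bm{r}_i = \begin{bmatrix} \acute{\bm{r}}_i \\ \grave{\bm{r}}_i \end{bmatrix} \in \mathbb{R}^{2N},
\end{equation*}
so that $\bm{E} = [\bm{E}_1; \dots; \bm{E}_K] \in \mathbb{R}^{2KN \times N}$ and $\bm{r} = [\bm{r}_1; \dots; \bm{r}_K] \in \mathbb{R}^{2KN}$. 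The key observation is that $\diag{\bm{r}_{|A_{ij}|}}$ of a stacked vector is itself block-diagonal, so the $(i,j)$-block of $g_K(\bm{A}, \bm{r})$ decouples exactly into the two contributions,
\begin{equation*}
[g_K(\bm{A}, \bm{r})]_{ij} = \begin{bmatrix} [g_K(\bm{A}, \acute{\bm{r}})]_{ij} & \bm{0} \\ \bm{0} & [g_K(\bm{A}, \grave{\bm{r}})]_{ij} \end{bmatrix}.
\end{equation*}

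Finally, with $\bm{h} = \bm{E}_{:,h}$ and $\bm{t} = \bm{E}_{:,t}$ partitioned compatibly, expanding $\bm{h}^\top g_K(\bm{A}, \bm{r}) \bm{t} = \sum_{i,j} \bm{h}_i^\top [g_K(\bm{A}, \bm{r})]_{ij} \bm{t}_j$ and substituting the block-diagonal form makes every cross term between the two stacked copies vanish, leaving the clean split ${\acute{\bm{h}}}^\top g_K(\bm{A}, \acute{\bm{r}}) \acute{\bm{t}} + {\grave{\bm{h}}}^\top g_K(\bm{A}, \grave{\bm{r}}) \grave{\bm{t}} = \hat{G}_{ht} + \breve{G}_{ht} = G_{ht}$. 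I expect the main obstacle to be purely one of bookkeeping: verifying that the chunking used internally by $g_K$ (splitting a $2KN$-vector into $K$ chunks of size $2N$) is aligned with the interleaving, so that each stacked relation chunk $\bm{r}_{|A_{ij}|}$ genuinely yields the direct-sum pairing of the symmetric and skew-symmetric diagonals. Once that alignment is confirmed there is no analytic difficulty—the two lemmas do all the heavy lifting, and the construction simply runs them in parallel inside a single bilinear form.
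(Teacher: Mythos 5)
Your proposal is correct and follows essentially the same route as the paper's proof: split $\bm G$ into its symmetric and skew-symmetric parts, invoke Lemma~\ref{lm:sym} and Lemma~\ref{lm:ssym}, and combine the two factorizations into doubled-dimension embeddings so that $g_K(\bm{A},\bm r)$ becomes a direct sum and the cross terms vanish. The only difference is cosmetic bookkeeping — you concatenate the two halves within each chunk, while the paper interleaves them element-wise (odd/even indices) — and both orderings respect the $K$-chunk structure of $g_K$, so the argument goes through identically.
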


\begin{proof}
In Lemma~\ref{lm:sym} and Lemma~\ref{lm:ssym},
we prove that any symmetric matrix can be factorized as $\hat{\bm{G}} = {\acute{\bm{E}}}^\top g_K(\bm{A}, \acute{\bm r})\acute{\bm{E}}$  
with $\acute{\bm{E}}\in\mathbb R^{KN\times N}$ and  $\acute{\bm{r}}\in\mathbb R^{KN}$
and any skew-symmetric matrix can be factorized as  
$\breve{\bm{G}} = {\grave{\bm{E}}}^\top g_K(\bm{A}, \grave{\bm{r}})\grave{\bm{E}} \in \mathbb R^{N\times N}$
with
$\grave{\bm{E}}\in\mathbb R^{KN\times N}$ and  $\grave{\bm{r}}\in\mathbb R^{KN}$.
In this part,
we show any square matrix $\bm G$
can be split as the sum of a particular $\hat{\bm{G}}$ and a particular $\breve{\bm{G}}$
and it can be factorized
in the bilinear form with the composition of $\acute{\bm{E}}, \grave{\bm{E}}$ and $\acute{\bm{r}}, \grave{\bm{r}}$.

We firstly composite
$\acute{\bm{E}}, \acute{\bm{r}}$ in the proof of Lemma~\ref{lm:sym}
and 
$\grave{\bm{E}}, \grave{\bm{r}}$ in the proof of Lemma~\ref{lm:ssym}
into
$\bm E\in\mathbb R^{2KN\times N}$ and  $\bm r\in\mathbb R^{2KN}$.
The basic idea is to add the symmetric part into odd rows and skew-symmetric part into even rows.
Specifically,
we define the rows of 
$\bm E$ as
\begin{equation}
\bm E_{i,:} = 
\begin{cases}
\acute{\bm{E}}_{\frac{i+1}{2},:} & i\textit{\,mod\,}2=1 \\
\grave{\bm{E}}_{\frac{i}{2},:} & i\textit{\,mod\,}2=0
\end{cases}.
\label{eq:lemma:matrix-E}
\end{equation}
And the relation embedding $\bm r$ is element-wise set as
\begin{equation}
\bm r[i] = 
\begin{cases}
\acute{\bm r}[{\frac{i+1}{2},:}] & i\textit{\,mod\,}2=1 \\
\grave{\bm r}[{\frac{i}{2},:}] & i\textit{\,mod\,}2=0
\label{eq:lemma:matrix-r}
\end{cases}.
\end{equation}
Based on the form of $\bm r$
we can have $g_K(\bm{A}, \bm r)\in\mathbb R^{2KN\times 2KN}$
where each element is formed by corresponding element in $g_K(\bm{A}, \acute{\bm{r}})$
or $g_K(\bm{A}, \grave{\bm{r}})$ as
\begin{align}
& \!\!\! \{g_K(\bm{A}, \bm r)\}_{i,j} 
\notag\\
\!\!\!= 
&\begin{cases}
\{g_K(\bm{A}, \acute{\bm{r}})\}_{\frac{i+1}{2},\frac{j+1}{2}}, & \!\!\!\!i\textit{\,mod\,}2=1 \text{ and } j\textit{\,mod\,}2=1 \\ 
\{g_K(\bm{A}, \grave{\bm{r}})\}_{\frac{i}{2},\frac{j}{2}}, & \!\!\!\!i\textit{\,mod\,}2=0 \text{ and } j\textit{\,mod\,}2=0 \\ 
0 & \!\!\!\!\text{otherwise}.
\end{cases}.
\label{eq:lemma:matrix-g}
\end{align}
Refer to the notation introduced at the beginning of Appendix~\ref{app:proof}, 
$\{g_K(\bm{A}, \bm r)\}_{i,j}$ represents the $(i,j)$-th element in the matrix,
while $[g_K(\bm{A}, \bm r)]_{i,j}$ represents the $(i,j)$-th block in previous parts.
Note that
such a construction of $g_K(\bm{A}, \bm r)$ will not violate the structure matrix $\bm{A}$.
The construction of \eqref{eq:lemma:matrix-E}, \eqref{eq:lemma:matrix-r} 
are graphically illustrated in the left part of Figure~\ref{fig:comp-matrix},
which leads to the construction of \eqref{eq:lemma:matrix-g} in the right part.

\begin{figure}[ht]
	\centering
	\includegraphics[height=3.6cm]{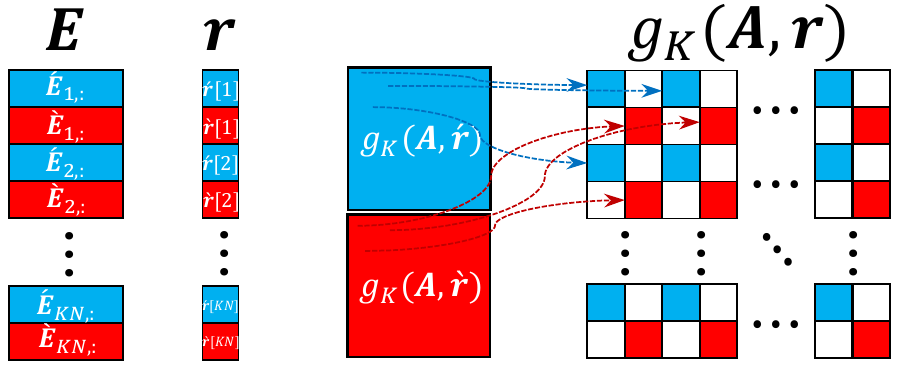}
		\vspace{-5px}
	\caption{Graphical illustration of the composed embeddings. The blue parts are from $\acute{\bm{E}}$, $\acute{\bm{r}}$ or $g_K(\bm{A}, \acute{\bm{r}})$ and the red parts are from $\grave{\bm{E}}$, $\grave{\bm{r}}$ or $g_K(\bm{A}, \grave{\bm{r}})$. The white spaces in $g_K(\bm{A}, \bm r)$ are zero values.}
	\label{fig:comp-matrix}
\end{figure}

Given any matrix $\bm G\in\mathbb R^{N\times N}$,
it can be split into a symmetric part $\hat{\bm{G}}$ and a skew-symmetric part $\breve{\bm{G}}$,
i.e.,
\begin{align}
\bm{G} 
= \underbrace{\frac{\bm G+\bm G^\top}{2}}_{\hat{\bm{G}}} 
+ \underbrace{\frac{\bm G-\bm G^\top}{2}}_{\breve{\bm{G}}}.
\label{eq:lemma:split}
\end{align}
Based on 
\eqref{eq:lemma:matrix-E},
\eqref{eq:lemma:matrix-r},
\eqref{eq:lemma:matrix-g},
and \eqref{eq:lemma:split},
$\forall h, t$,
we have 
\begin{align}
G_{ht} 
= &
[\hat{G}]_{ht} 
\! + \! [\breve{G}]_{ht}  
\! = \! {\acute{\bm{E}}_{:,h}}^\top g_K(\bm{A}, \acute{\bm{r}}) \acute{\bm{E}}_{:,t} 
\! + \! {\grave{\bm{E}}_{:,h}}^\top g_K(\bm{A}, \grave{\bm{r}}) \grave{\bm{E}}_{:,t},
\notag
\\
=& \sum\nolimits_{m,n}^{KN}\acute{E}_{m,h}\{g_K(\bm{A}, \acute{\bm{r}}) \}_{m,n} \acute{E}_{n,t} + 0 \label{eq:lemma:g5}
\\
&+ 0
+ \sum\nolimits_{m,n}^{KN}\grave{E}_{m,h}\{g_K(\bm{A}, \grave{\bm{r}}) \}_{m,n} \grave{E}_{n,t} 
\label{eq:lemma:g6} 
\\
=& \sum\nolimits_{(i,j) \textit{\,mod\,} 2=(1,1)}^{2KN,2KN}E_{i,h}\{g_K(\bm{A}, \bm r) \}_{i,j} E_{j,t} \label{eq:lemma:g1}  \\
&+ \sum\nolimits_{(i,j) \textit{\,mod\,} 2=(1,0)}^{2KN,2KN}E_{i,h}\{g_K(\bm{A}, \bm r) \}_{i,j} E_{j,t} \label{eq:lemma:g2} \\
&+ \sum\nolimits_{(i,j) \textit{\,mod\,} 2=(0,1)}^{2KN,2KN}E_{i,h}\{g_K(\bm{A}, \bm r) \}_{i,j} E_{j,t} \label{eq:lemma:g3} \\
&+ \sum\nolimits_{(i,j) \textit{\,mod\,} 2=(0,0)}^{2KN,2KN}E_{i,h}\{g_K(\bm{A}, \bm r) \}_{i,j} E_{j,t} \label{eq:lemma:g4} \\
=& \sum\nolimits_{i,j}^{2KN,2KN} E_{i,h}\{g_K(\bm{A}, \bm r) \}_{i,j} E_{j,t}  \notag
\\
=& {\bm E_{:,h}}^\top g_K(\bm{A}, \bm r) \bm E_{:,t}
= \bm h^\top g_K(\bm{A}, \bm r) \bm t
=
f(h,r,t), \notag
\end{align}
with $\bm h = \bm E_{:,h}, \bm t = \bm E_{:,t}$ and $\bm r$ in \eqref{eq:lemma:matrix-r}.
\eqref{eq:lemma:g2} and \eqref{eq:lemma:g3} are 0 since $\{ g_K(\bm{A}, \bm r) \}_{i,j}$ is zero when $i$ and $j$ are not simultaneously even or odd.
From \eqref{eq:lemma:g5} to \eqref{eq:lemma:g1}, we let $m=\nicefrac{i+1}{2}$ and $n=\nicefrac{j+1}{2}$
to get the odd part in \eqref{eq:lemma:matrix-E} and \eqref{eq:lemma:matrix-g}.
From \eqref{eq:lemma:g6} to \eqref{eq:lemma:g4}, we let $m=\nicefrac{i}{2}$ and $n=\nicefrac{j}{2}$
to obtain the even part in \eqref{eq:lemma:matrix-E} and \eqref{eq:lemma:matrix-g}.
\end{proof}

Finally,
we show a Lemma
which bridges
3 order tensor $\mathbf{G}$ with 
bilinear scoring function of form \eqref{eq:uni}.
Given any KG with tensor form $\mathbf G\in\mathbb R^{|\mathcal E|\times |\mathcal R|\times |\mathcal E|}$,
we denote $\bm G_r\in\mathbb R^{\mathcal E\times \mathcal E}$
as the $r$-th slice in the lateral of $\mathbf G$, i.e. $\bm G_r = \mathbf G_{\cdot,r, \cdot}$,
corresponding to relation $r$.

\begin{lemma}
\label{lm:tensor}
Given any KG with tensor form $\mathbf G\in\mathbb R^{|\mathcal E|\times |\mathcal R|\times |\mathcal E|}$,
and structure matrix $\bm{A}$.
If all the $\bm G_r$'s can be
independently expressed by a unique entity embedding matrices 
$\dot{\bm E}_r\in\mathbb R^{2KN\times N}$
and relation embedding $\dot{\bm r}\in\mathbb R^{2KN}$,
i.e. $\forall h,t =1\dots N, 
[G_r]_{h,t} = \dot{\bm h}_r^\top g_K(\bm{A}, \dot{\bm r}_r)\dot{\bm t}_r$,
with $\dot{\bm h}_r=[\dot{\bm E}_r]_{\cdot,h}, \dot{\bm t}_r=[\dot{\bm E}_r]_{\cdot, t}$,
then there exist entity embedding  $\bm E\in\mathbb R^{2KN|\mathcal R|\times N}$
and relation embedding  $\bm R\in\mathbb R^{2KN|\mathcal R|\times |\mathcal R|}$
such that
$\forall h,r,t, G_{hrt}= f_{\bm{A}}(h,r,t) = \bm h^\top g_K(\bm{A}, \bm r)\bm t$.
\end{lemma}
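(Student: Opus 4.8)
The plan is to build a single pair of embedding matrices $\bm E$ and $\bm R$ by \emph{packing} the per-relation factorizations $(\dot{\bm E}_r, \dot{\bm r}_r)$ side by side, arranged so that the embedding of relation $r$ ``switches on'' only the sub-block belonging to $r$ inside $g_K(\bm{A}, \cdot)$. Concretely, I would organize the $2KN|\mathcal R|$ coordinates into $K$ chunks of size $2N|\mathcal R|$ each, where the $k$-th chunk stacks the $k$-th chunks of $\dot{\bm E}_1, \dots, \dot{\bm E}_{|\mathcal R|}$ (and analogously for the relation coordinates). This interleaving is the same device used in the proof of Lemma~\ref{lm:matrix}, where the symmetric and skew-symmetric factors were placed in the odd and even rows; here I place the $|\mathcal R|$ relation-specific factors in $|\mathcal R|$ interleaved strides so that the chunk partition invoked by $g_K$ in \eqref{eq:gk} is respected.

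First I would set the entity embedding: for each entity $h$, let $\bm h = \bm E_{:,h}$ be the concatenation (grouped by chunk index) of the $|\mathcal R|$ vectors $\dot{\bm h}_r = [\dot{\bm E}_r]_{\cdot,h}$; this single vector is shared across all relations. Next I would set the relation embedding so that the column $\bm r = \bm R_{:,r}$ carries $\dot{\bm r}_r$ in the stride indexed by $r$ and is zero in every other relation stride. Because $g_K(\bm{A}, \bm r)$ is assembled block-by-block from $\text{diag}(\bm r_{|A_{ij}|})$ and each chunk $\bm r_{|A_{ij}|}$ is supported only on its $r$-th stride, the whole matrix $g_K(\bm{A}, \bm r)$ is supported only on the $(r,r)$ relation sub-blocks; restricted to those it equals exactly $g_K(\bm{A}, \dot{\bm r}_r)$.

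Then I would compute the bilinear form. Writing $\bm h^\top g_K(\bm{A}, \bm r)\bm t$ as a double sum over the $K\times K$ blocks and the relation strides, every term whose stride is $\neq r$ vanishes because the corresponding diagonal entries of $g_K(\bm{A}, \bm r)$ are zero. The surviving terms reassemble into $\dot{\bm h}_r^\top g_K(\bm{A}, \dot{\bm r}_r)\dot{\bm t}_r$, which by hypothesis equals $[G_r]_{h,t} = G_{hrt}$. Since $h,t$ and $r$ were arbitrary, this yields $G_{hrt} = f_{\bm{A}}(h,r,t)$ for all triples, as required.

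The main obstacle is purely the indexing bookkeeping: defining the interleaving precisely enough that the chunk partition used by $g_K$ lines up with the per-relation chunks, and confirming that the cross-relation (stride $\neq r$) contributions are identically zero rather than merely negligible. I expect this to follow from the same stride-counting argument used in \eqref{eq:lemma:g1}--\eqref{eq:lemma:g4} of Lemma~\ref{lm:matrix}, so no new analytic idea is needed beyond careful index matching.
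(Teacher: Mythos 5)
Your proposal is correct and follows essentially the same route as the paper's proof: both pack the per-relation factorizations $(\dot{\bm E}_r,\dot{\bm r}_r)$ into a single pair $(\bm E,\bm R)$ via a relation-indexed striding that respects the $K$-chunk partition used by $g_K$, make the relation embedding of $r$ supported only on its own stride so that $g_K(\bm A,\bm r)$ vanishes off the $(r,r)$ sub-block and equals $g_K(\bm A,\dot{\bm r}_r)$ on it, and collapse the bilinear form to $[G_r]_{h,t}$. The only difference is cosmetic bookkeeping (the paper interleaves coordinates modulo $|\mathcal R|$ with position $\lceil i/|\mathcal R|\rceil$, while you stack the per-relation blocks contiguously within each chunk), which does not change the argument.
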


\begin{proof}
We show that computing $G_{hrt}$
can be independently expressed by $[G_r]_{ht}$ for each relation $r$.
Specifically,
we define the rows of entity embedding in $\bm E$ as
\begin{align}
\bm E_{i,:} = [\dot{\bm E}_{i \textit{\,mod\,} |\mathcal R|}]_{\lceil \frac{i}{|\mathcal R|} \rceil,:}
\label{eq:lemma:tensor-E}
\end{align}
And each element in the relation embedding $\bm r$ as 
\begin{align}
\bm r[i] = 
\begin{cases}
\dot{\bm r}[r]_{\lceil \frac{i}{|\mathcal R|} \rceil} & i \textit{\,mod\,} \mathcal R = r
\\
0 & \text{otherwise},
\end{cases},
\label{eq:lemma:tensor-r}
\end{align}
which leads to the element in $g_K(\bm{A}, \bm r)$  as
\begin{align}
&
\{g_K(\bm{A}, \bm r)\}_{i,j}
\notag  
\\ 
& =
\begin{cases}
\{g_K(\bm{A}, \dot{\bm r}_r)\}_{\lceil \frac{i}{|\mathcal R|} \rceil, \lceil \frac{j}{|\mathcal R|} \rceil} & 
i\textit{\,mod\,}|\mathcal R| = r
\textit{\;and\;}
j\textit{\,mod\,}|\mathcal R| = r
\\
0 & \text{otherwise}
\end{cases}.
\label{eq:lemma:tensor-g}
\end{align}

\begin{figure}[ht]
	\centering
	\includegraphics[height=3.5cm]{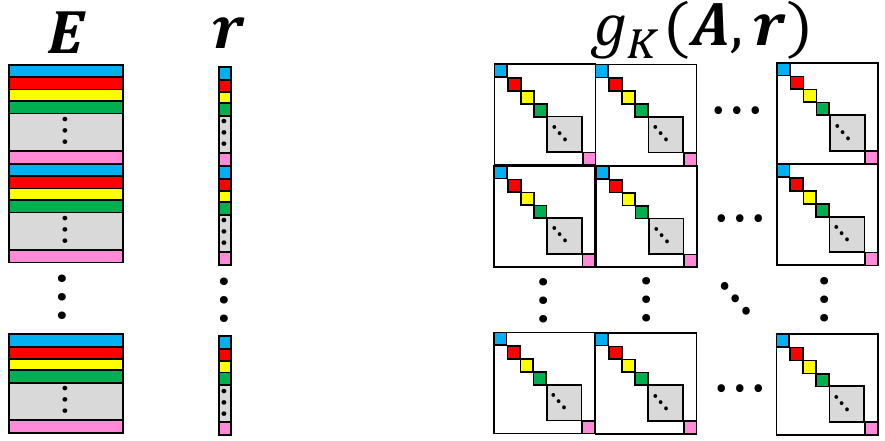}
	\vspace{-5px}
	\caption{Graphical illustration of the composed embeddings. Different colors represent components from different embeddings. White colors mean the zero values and gray colors represent spaces with mixed colors.}
	\label{fig:comp-tensor}
\end{figure}

The construction of \eqref{eq:lemma:tensor-E}, \eqref{eq:lemma:tensor-r} and \eqref{eq:lemma:tensor-g} can be graphically illustrated in Figure~\ref{fig:comp-tensor}.
Under these definitions,
we can get that each element $G_{hrt}$ can be expressed with
\begin{align}
G_{hrt} &=  [G_r]_{h,t} \nonumber \\
&= \sum_{i_r=1}^{2K|\mathcal E|}\sum_{j_r=1}^{2K|\mathcal E|} [E_{r}]_{i_r,h}\{g_K(\bm{A}, \bm r_r)\}_{i_r,j_r} [E_r]_{j_r,t} 
\label{eq:exp:stepsum}\\
&= \sum_{i=1}^{|\mathcal R|\cdot 2K|\mathcal E|}\sum_{j=1}^{|\mathcal R|\cdot 2K|\mathcal E|} E_{ih} \{g_K(\bm{A}, \bm r)\}_{ij} E_{jt}  
\label{eq:exp:allsum} \\
&=\bm h^{\top} g_K(\bm{A}, \bm r) \bm t  \nonumber
\end{align}
The step to get \eqref{eq:exp:stepsum} depends on Lemma~\ref{lm:matrix}.
Eq.~\eqref{eq:exp:stepsum} to \eqref{eq:exp:allsum} depends on \eqref{eq:lemma:tensor-E} and \eqref{eq:lemma:tensor-g}.
\end{proof}

\subsubsection{Proof of Proposition \ref{pr:expBLMs}}

\begin{proof}
Based on Lemma~\ref{lm:matrix} and Lemma~\ref{lm:tensor},
if
\begin{enumerate}[leftmargin=*]
	\item $\exists \hat{\bm{r}} \in \mathcal{C}$ 
	such that 
	$g_{K}(\bm{A},\hat{\bm{r}})$ is symmetric,
	and
	\item $\exists \breve{\bm{r}} \in \mathcal{C}$ 
	such that 
	$g_{K}(\bm{A},\breve{\bm{r}})$ is skew-symmetric,
\end{enumerate}
with 
$\mathcal{C} 
\equiv 
\{ \bm{r} \in\mathbb R^K \,|\, 
\bm r\neq \bm 0, 
r_i\in
\{0,\pm1,\dots, \pm K\},
i=1,\dots, K \},$,
then
given any KG with the tensor form $\mathbf G$,
there exist entity embedding $\bm E\in\mathbb R^{2K|\mathcal E||\mathcal R|\times |\mathcal E|}$
and relation embedding  $\bm R\in\mathbb R^{2K|\mathcal E||\mathcal R|\times |\mathcal R|}$
such that for all $h,r,t$, we have
\[G_{hrt}= f_{\bm{A}}(h,r,t) = \bm h^\top g_K(\bm{A}, \bm r)\bm t,\] 
with $\bm h = \bm E_{:,h}, \bm t = \bm E_{:,t}, \bm r=\bm E_{:,r}$.
Thus,
scoring function
\eqref{eq:uni} is fully expressive once condition 1) and 2) hold.
\end{proof}

\subsection{Proposition~\ref{pr:degenerate}}
\label{app:degenerate}

\subsubsection{Auxiliary Lemmas}

First,
we introduce two lemmas from matrix theory about the rank of Kronecker product
and the solution of equation group.

\begin{lemma}[\cite{horn2012matrix}]
\label{lem:kron}
Denote $\otimes$ as the Kronecker product,
given two matrices $\bm X, \bm Y$, we have
$\text{rank}(\bm X\otimes \bm Y) = \text{rank}(\bm X)\cdot \text{rank}(\bm Y)$.
\end{lemma}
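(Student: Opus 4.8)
The plan is to reduce the computation to the diagonal case by invoking the singular value decomposition (SVD) together with the mixed-product property of the Kronecker product, $(\bm P \otimes \bm Q)(\bm R \otimes \bm S) = (\bm P \bm R) \otimes (\bm Q \bm S)$. First I would write the SVDs $\bm X = \bm U_X \bm{\Sigma}_X \bm V_X^\top$ and $\bm Y = \bm U_Y \bm{\Sigma}_Y \bm V_Y^\top$, where $\bm U_X, \bm V_X, \bm U_Y, \bm V_Y$ are orthogonal and $\bm{\Sigma}_X, \bm{\Sigma}_Y$ are rectangular-diagonal matrices whose numbers of nonzero diagonal entries equal $\text{rank}(\bm X)$ and $\text{rank}(\bm Y)$, respectively.

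Applying the mixed-product property twice, I would obtain the factorization
\begin{equation*}
\bm X \otimes \bm Y = (\bm U_X \otimes \bm U_Y)\,(\bm{\Sigma}_X \otimes \bm{\Sigma}_Y)\,(\bm V_X \otimes \bm V_Y)^\top.
\end{equation*}
The two outer factors are orthogonal, hence invertible: using the same property, $(\bm U_X \otimes \bm U_Y)^\top (\bm U_X \otimes \bm U_Y) = (\bm U_X^\top \bm U_X) \otimes (\bm U_Y^\top \bm U_Y) = \bm I \otimes \bm I = \bm I$, and likewise for $\bm V_X \otimes \bm V_Y$. Since left- and right-multiplication by invertible matrices preserves rank, it follows that $\text{rank}(\bm X \otimes \bm Y) = \text{rank}(\bm{\Sigma}_X \otimes \bm{\Sigma}_Y)$.

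It then remains to count the rank of $\bm{\Sigma}_X \otimes \bm{\Sigma}_Y$, which is again a rectangular-diagonal matrix: its diagonal entries are exactly the products $\sigma^X_i \sigma^Y_j$ of the singular values, and such a product is nonzero precisely when both $\sigma^X_i$ and $\sigma^Y_j$ are nonzero. Hence the number of nonzero diagonal entries is $\text{rank}(\bm X)\cdot\text{rank}(\bm Y)$, which gives the claim.

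The result is standard (indeed it is the cited Horn--Johnson fact), so I do not expect a genuine obstacle. The only two non-routine ingredients are the mixed-product identity and the orthogonality of Kronecker products of orthogonal matrices, each of which is a one-line verification. The reason I favor the SVD route is that the most direct alternative --- a full-rank factorization $\bm X = \bm B_X \bm C_X$, $\bm Y = \bm B_Y \bm C_Y$ followed by $\bm X \otimes \bm Y = (\bm B_X \otimes \bm B_Y)(\bm C_X \otimes \bm C_Y)$ --- would still require proving that Kronecker products of linearly independent families of vectors remain linearly independent, which is precisely the sticking point that the orthogonality argument sidesteps.
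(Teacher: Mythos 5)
Your proof is correct. Note first that the paper does not actually prove this lemma: it is stated as a citation to Horn and Johnson and used as a black box (specifically, in Appendix B, to compute $\text{rank}(g_K(\bm{A},\bm 1)) = \text{rank}(\text{sign}(\bm{A})\otimes\bm I) = \frac{d}{K}\cdot\text{rank}(\bm{A})$). So there is no in-paper argument to compare against; what you have supplied is a self-contained derivation of the cited fact, and it is the standard one. The SVD-plus-mixed-product route is sound: the factorization $\bm X\otimes\bm Y = (\bm U_X\otimes\bm U_Y)(\bm\Sigma_X\otimes\bm\Sigma_Y)(\bm V_X\otimes\bm V_Y)^\top$ is valid, the outer factors are orthogonal by exactly the computation you give, and rank is invariant under multiplication by invertible matrices. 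One small imprecision: when $\bm\Sigma_X$ and $\bm\Sigma_Y$ are genuinely rectangular, $\bm\Sigma_X\otimes\bm\Sigma_Y$ is not literally rectangular-diagonal (its nonzero entries need not sit on the main diagonal); the correct statement is that it has at most one nonzero entry per row and per column, which still forces its rank to equal the number of nonzero entries $\sigma^X_i\sigma^Y_j$, i.e.\ $\text{rank}(\bm X)\cdot\text{rank}(\bm Y)$. This does not affect your conclusion, and in the paper's only use of the lemma both factors are square, so even the diagonal-only version would suffice. Your closing remark is also apt: the full-rank-factorization alternative would need the fact that Kronecker products of linearly independent families stay independent, which is essentially the same content the orthogonality step delivers for free.
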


\begin{lemma}[\cite{horn2012matrix}]
\label{lem:zero}
Given $\bm{A}\in\mathbb R^{d\times d}$,
there is no non-zero solution $\bm x\neq \bm0\in\mathbb R^d$ for
the equation group $\bm{A} \bm x = \bm 0$,
if and only if $\text{rank}(\bm{A}) = d$.
\end{lemma}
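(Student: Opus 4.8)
The plan is to prove both directions of the equivalence through the column-space viewpoint, which directly ties $\text{rank}(\bm{A})$ to the existence of nontrivial null vectors. First I would write $\bm{A}=[\bm a_1,\dots,\bm a_d]$ in terms of its columns and note that $\bm{A}\bm x=\sum_{i=1}^d x_i\bm a_i$, so that the homogeneous system $\bm{A}\bm x=\bm0$ is exactly a linear-dependence relation among the columns with coefficient vector $\bm x=(x_1,\dots,x_d)^\top$.

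For the ``if'' direction, I would assume $\text{rank}(\bm{A})=d$. Since the column rank equals the number of columns, the $d$ columns $\bm a_1,\dots,\bm a_d$ are linearly independent. Hence $\sum_{i=1}^d x_i\bm a_i=\bm0$ forces $x_1=\dots=x_d=0$, i.e. $\bm x=\bm0$, so the only solution is trivial and there is no non-zero solution.

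For the converse I would argue by contraposition: if $\text{rank}(\bm{A})<d$, then the $d$ columns are linearly dependent, so there exist scalars $x_1,\dots,x_d$, not all zero, with $\sum_{i=1}^d x_i\bm a_i=\bm0$. Setting $\bm x=(x_1,\dots,x_d)^\top\neq\bm0$ yields a non-zero solution of $\bm{A}\bm x=\bm0$, which establishes the contrapositive and hence the ``only if'' direction.

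Since this is a classical fact from matrix theory, there is no genuine obstacle; the only point to state cleanly is the defining equivalence between ``column rank equals the number of columns'' and ``columns linearly independent.'' As a one-line alternative I would invoke the rank--nullity theorem, $\dim(\ker\bm{A})=d-\text{rank}(\bm{A})$: the null space is trivial precisely when its dimension is $0$, i.e. when $\text{rank}(\bm{A})=d$, giving the equivalence immediately.
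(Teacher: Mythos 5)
Your proof is correct. Note, however, that the paper does not prove this lemma at all: it is stated as an imported standard fact, cited directly to the matrix-analysis reference \cite{horn2012matrix}, and used as a black box in the proof of Proposition~\ref{pr:degenerate}. So there is no in-paper argument to compare against; your column-dependence argument (identifying $\bm{A}\bm x=\bm 0$ with a linear-dependence relation $\sum_{i=1}^d x_i \bm a_i = \bm 0$ among the columns) and your one-line alternative via rank--nullity, $\dim(\ker \bm{A}) = d - \text{rank}(\bm{A})$, are both the standard textbook routes, and either one is a complete and valid justification of the statement as used in the paper.
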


Note that the definition of degenerate structure 
$\bm{A}$ is that
	$\exists \bm h\neq 0, \bm h^\top g_K(\bm{A},\bm r)\bm t\neq0, \forall \bm r, \bm t$
	or 
	$\exists \bm r\neq 0, \bm h^\top g_K(\bm{A},\bm r)\bm t= 0, \forall \bm h, \bm t$.
To proof that $\bm{A}$ is not degenerate if and only if 
$\emph{rank}(\bm{A})=K$
and
$\{1,\dots, K\} \subset \{|A_{ij}|: i,j=1\dots K\}$,
we prove its converse-negative proposition
in Lemma~\ref{lem:con-neg}.

\begin{lemma}
	\label{lem:con-neg}
	$\exists \bm h\neq 0, \bm h^\top g_K(\bm{A},\bm r)\bm t\neq0, \forall \bm r, \bm t$
	or 
	$\exists \bm r\neq 0, \bm h^\top g_K(\bm{A},\bm r)\bm t= 0, \forall \bm h, \bm t$.
	if and only if
	$\text{rank}(\bm{A})<K$ or $\exists a\in\{1,\dots, K\}, a\notin \{|A_{ij}|:, i,j=1\dots K\}$.
\end{lemma}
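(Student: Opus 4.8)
The plan is to prove the biconditional by matching its two disjuncts on each side: the first degeneracy alternative (a nonzero $\bm h$ killing the form for every $\bm r,\bm t$) will correspond to $\text{rank}(\bm A)<K$, and the second alternative (a nonzero $\bm r$ killing the form for every $\bm h,\bm t$) will correspond to some index $a\in\{1,\dots,K\}$ being absent from $\{|A_{ij}|:i,j=1,\dots,K\}$. The second alternative is the clean one. Since a bilinear form vanishes for all $\bm h,\bm t$ iff its matrix is $\bm 0$, the alternative is equivalent to $g_K(\bm A,\bm r)=\bm 0$; by the block definition (\ref{eq:gk}) this forces $\bm r_{|A_{ij}|}=\bm 0$ whenever $A_{ij}\neq 0$. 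A nonzero $\bm r$ with this property exists iff some index $a$ never occurs as an $|A_{ij}|$, which settles the value disjunct in both directions at once.

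For the rank disjunct I would use a single convenient evaluation of $\bm r$ together with the two matrix lemmas. Quantifying over all $\bm t$ first reduces the alternative to the existence of $\bm h\neq\bm 0$ with $g_K(\bm A,\bm r)^\top\bm h=\bm 0$ for every $\bm r$. Taking the relation whose $k$-th chunk equals $k\cdot\bm 1$ turns each nonzero block $\text{sign}(A_{ij})\text{diag}(\bm r_{|A_{ij}|})$ into $A_{ij}\bm I_{d/K}$, so that $g_K(\bm A,\bm r)=\bm A\otimes\bm I_{d/K}$. By Lemma~\ref{lem:kron}, $\text{rank}(\bm A^\top\otimes\bm I_{d/K})=\text{rank}(\bm A)\cdot(d/K)$, and by Lemma~\ref{lem:zero} a nonzero $\bm h$ with $(\bm A^\top\otimes\bm I_{d/K})\bm h=\bm 0$ exists iff this rank is below $d$, i.e. iff $\text{rank}(\bm A)<K$. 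This already gives the forward implication: any $\bm h$ witnessing the first degeneracy alternative must in particular annihilate this special $g_K(\bm A,\bm r)$, hence $\text{rank}(\bm A)<K$.

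The hard part will be the reverse implication for the rank disjunct: producing from $\text{rank}(\bm A)<K$ a single $\bm h\neq\bm 0$ that annihilates $g_K(\bm A,\bm r)$ for every $\bm r$, not just for the special evaluation above. Writing $\bm S^{(k)}\in\{0,\pm1\}^{K\times K}$ for the signed indicator of the positions with $|A_{ij}|=k$, one has $g_K(\bm A,\bm r)=\sum_{k=1}^{K}\bm S^{(k)}\otimes\text{diag}(\bm r_k)$ and $\bm A=\sum_{k=1}^{K}k\,\bm S^{(k)}$, so ``$g_K(\bm A,\bm r)^\top\bm h=\bm 0$ for all $\bm r$'' reduces, coordinate by coordinate within the blocks, to $(\bm S^{(k)})^\top\bm v=\bm 0$ for every $k$, where $\bm v\in\mathbb R^{K}$ collects one within-block coordinate of $\bm h$ across the $K$ blocks. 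The main obstacle is therefore to pass from a left null vector of $\bm A^\top=\sum_k k\,(\bm S^{(k)})^\top$ to a common left null vector of all the slices $\bm S^{(k)}$ at once; this is the crux, and the place where the specific $\{0,\pm1,\dots,\pm K\}$ structure of $\bm A$ must be exploited, since a rank drop of the single combination $\bm A$ is a priori weaker than the joint rank drop of the stacked slices $[\bm S^{(1)};\dots;\bm S^{(K)}]$. I would address it by analysing this stacked matrix and relating its kernel to that of $\bm A^\top$ under the hypothesis, after which Lemma~\ref{lem:zero}, combined with the value disjunct, completes the biconditional.
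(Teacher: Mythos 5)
Your decomposition is the same as the paper's: you split the biconditional into the two matching sub-equivalences that the paper proves as Lemma~\ref{lem:case1} and Lemma~\ref{lem:case2}. Your treatment of the value disjunct coincides with the paper's Lemma~\ref{lem:case2}. Your forward implication for the rank disjunct is actually cleaner than the paper's: by evaluating at the relation whose $k$-th chunk is $k\cdot\bm 1$ you get $g_K(\bm{A},\bm r)=\bm{A}\otimes\bm I_{d/K}$ exactly, whereas the paper evaluates at $\bm r=\bm 1$, obtains $\text{sign}(\bm{A})\otimes\bm I$, and then silently identifies $\text{rank}(\text{sign}(\bm{A}))$ with $\text{rank}(\bm{A})$, which is not valid in general (e.g.\ $\bm{A}=[[1,2],[2,1]]$).

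The genuine gap is the reverse implication of the rank disjunct, which you explicitly defer: producing a \emph{single} $\bm h\neq\bm 0$ with $\bm h^\top g_K(\bm{A},\bm r)=\bm 0$ for \emph{every} $\bm r$ from $\text{rank}(\bm{A})<K$. You correctly reduce this to finding a common left null vector of all the slices $\bm S^{(k)}$, and correctly observe that a rank drop of the single combination $\bm{A}=\sum_k k\,\bm S^{(k)}$ is a priori weaker; but the plan of "relating the kernel of $\bm{A}^\top$ to that of the stacked slices" cannot be carried out, because the implication is false. Take $K=2$ and $\bm{A}=\bigl[\begin{smallmatrix}1&-1\\2&-2\end{smallmatrix}\bigr]$: then $\text{rank}(\bm{A})=1<2$ and both values $1,2$ occur, yet $\bm S^{(1)}=\bigl[\begin{smallmatrix}1&-1\\0&0\end{smallmatrix}\bigr]$ and $\bm S^{(2)}=\bigl[\begin{smallmatrix}0&0\\1&-1\end{smallmatrix}\bigr]$ have only the trivial common left null vector, so no nonzero $\bm h$ annihilates $g_K(\bm{A},\bm r)$ for all $\bm r$ (setting $\bm r_1=\bm e_p,\bm r_2=\bm 0$ forces $\bm h_1=\bm 0$, and symmetrically $\bm h_2=\bm 0$), and the second alternative also fails since $g_K(\bm{A},\bm r)=\bm 0$ forces $\bm r=\bm 0$. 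The paper's own proof of this direction conceals the same problem behind a quantifier swap: it shows that \emph{for each} $\bm r$ there exists some $\bm h_{\bm r}\neq\bm 0$ in the left kernel of $g_K(\bm{A},\bm r)$, which is weaker than the single-$\bm h$ statement required by Definition~\ref{def:degenerate}. So your instinct that this is "the crux" is right, but the step you leave open is not merely missing --- it would fail, and the lemma's rank disjunct is an equivalence only in the forward direction you proved.
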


This can be decomposed into two separate parts
in Lemma~\ref{lem:case1} and~\ref{lem:case2}.

\begin{lemma}
\label{lem:case1}
$\exists \bm h\neq 0, \bm h^\top g_K(\bm{A},\bm r)\bm t\neq0, \forall \bm r, \bm t$
if and only if $\text{rank}(\bm{A})<K$.
\end{lemma}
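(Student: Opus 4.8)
The plan is to translate condition~(i) of Definition~\ref{def:degenerate} into a rank condition on $\bm{A}$ and then apply Lemmas~\ref{lem:kron} and~\ref{lem:zero}. First I would observe that, because $\bm h^\top g_K(\bm{A},\bm r)\bm t=0$ is required for every $\bm t$, the condition is equivalent to $\bm h^\top g_K(\bm{A},\bm r)=\bm 0$ for every $\bm r$. Since $g_K(\bm{A},\bm r)$ is linear in $\bm r$, writing $g_K(\bm{A},\bm r)=\sum_{k=1}^{K} r_k\,\bm M_k$, where the $(i,j)$-block of $\bm M_k$ is $\text{sign}(A_{ij})I_m$ if $|A_{ij}|=k$ and $\bm 0$ otherwise (with $m=d/K$), this is a joint left-null condition $\bm h^\top\bm M_k=\bm 0$ for all $k$.

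For the forward direction (head degeneracy $\Rightarrow \text{rank}(\bm{A})<K$) the key device is to specialize the relation embedding to $\bm r^\star$ with $\bm r^\star_k=k\,\bm 1_m$. Then $\bm r^\star_{|A_{ij}|}=|A_{ij}|\bm 1_m$, so the $(i,j)$-block of $g_K(\bm{A},\bm r^\star)$ is $\text{sign}(A_{ij})\,|A_{ij}|\,I_m=A_{ij}I_m$, i.e. $g_K(\bm{A},\bm r^\star)=\bm{A}\otimes I_m$. A head witness $\bm h\neq\bm 0$ satisfies, in particular, $\bm h^\top(\bm{A}\otimes I_m)=\bm 0$, so (applying Lemma~\ref{lem:zero} to $(\bm{A}\otimes I_m)^\top$) we get $\text{rank}(\bm{A}\otimes I_m)<Km$. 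By Lemma~\ref{lem:kron}, $\text{rank}(\bm{A}\otimes I_m)=\text{rank}(\bm{A})\,\text{rank}(I_m)=m\,\text{rank}(\bm{A})$, whence $m\,\text{rank}(\bm{A})<Km$ and $\text{rank}(\bm{A})<K$.

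For the converse I would begin from $\text{rank}(\bm{A})<K$. Lemma~\ref{lem:kron} gives $\text{rank}(\bm{A}\otimes I_m)=m\,\text{rank}(\bm{A})<Km$, so Lemma~\ref{lem:zero} yields a nonzero $\bm h$ with $\bm h^\top(\bm{A}\otimes I_m)=\bm 0$, i.e. $\bm h^\top g_K(\bm{A},\bm r^\star)=\bm 0$. The delicate step—and the one I expect to be the main obstacle—is to strengthen this single-$\bm r^\star$ identity to $\bm h^\top g_K(\bm{A},\bm r)=\bm 0$ for every $\bm r$, as Definition~\ref{def:degenerate}(i) demands: the relation $\bm h^\top(\bm{A}\otimes I_m)=\sum_k k\,\bm h^\top\bm M_k=\bm 0$ only pins down one weighted combination of the $\bm h^\top\bm M_k$, whereas the uniform condition needs each $\bm h^\top\bm M_k=\bm 0$ separately. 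I would try to close this gap by choosing the null vector $\bm h$ so that its block supports align with the sign and absolute-value pattern of $\bm{A}$ column by column, forcing every per-value coefficient $\sum_{i:\,|A_{ij}|=k}\text{sign}(A_{ij})\,\bm h_i$ to vanish; proving that such an aligned null vector always exists when $\text{rank}(\bm{A})<K$ is the crux of the argument.
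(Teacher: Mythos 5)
Your forward direction is correct and is actually tighter than the paper's own argument: by instantiating $\bm r^\star$ with $\bm r^\star_k = k\,\bm 1_m$ you get $g_K(\bm{A},\bm r^\star)=\bm{A}\otimes \bm I_m$ exactly, so Lemmas~\ref{lem:kron} and~\ref{lem:zero} yield $\text{rank}(\bm{A})<K$ immediately. The paper instead sets $\bm r=\bm 1$, which gives $\text{sign}(\bm{A})\otimes\bm I$ and then silently identifies $\text{rank}(\text{sign}(\bm{A}))$ with $\text{rank}(\bm{A})$; your specialization sidesteps that slip. (You also correctly read the ``$\neq 0$'' in the statement as the ``$=0$'' of Definition~\ref{def:degenerate}(i).)

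The converse is where the genuine gap lies, and you have located it precisely but not closed it. Knowing $\text{rank}(\bm{A})<K$ only produces an $\bm h\neq\bm 0$ annihilating the single matrix $\bm{A}\otimes \bm I_m=\sum_k k\,\bm M_k$, whereas degeneracy needs $\bm h^\top \bm M_k=\bm 0$ for every $k$ simultaneously. The paper does not close this gap either: it derives ``for all $\bm r$ there exists $\bm h\neq\bm 0$ with $\bm h^\top g_K(\bm{A},\bm r)=\bm 0$'' and then silently swaps the quantifiers into ``there exists $\bm h$ for all $\bm r$'', which is invalid. Worse, the aligned null vector you hope to construct need not exist: for $K=2$ and $\bm{A}=\left[\begin{smallmatrix}1&1\\2&2\end{smallmatrix}\right]$ we have $\text{rank}(\bm{A})=1<K$, yet $\bm h^\top g_2(\bm{A},\bm r)=\big[\,\bm h_1^\top\diag{\bm r_1}+\bm h_2^\top\diag{\bm r_2},\ \bm h_1^\top\diag{\bm r_1}+\bm h_2^\top\diag{\bm r_2}\,\big]$, and forcing this to vanish for all $\bm r$ (let $\bm r_1$ range over the coordinate vectors with $\bm r_2=\bm 0$, and vice versa) forces $\bm h=\bm 0$. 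So the per-value coefficients $\sum_{i:\,|A_{ij}|=k}\text{sign}(A_{ij})\,\bm h_i$ cannot all be made to vanish by a nonzero $\bm h$ in this case, and the ``only if'' half of the lemma---the crux step of your plan---fails as stated. Your instinct that this is the main obstacle is exactly right; it cannot be repaired by a cleverer choice of null vector, only by changing the hypothesis to a joint left-null condition on the incidence patterns $\bm M_1,\dots,\bm M_K$ rather than a rank condition on $\bm{A}$.
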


\begin{proof}
To begin with, 
we show the 
relationship between the rank of $g_{K}(\bm{A}, \bm r)$
and the rank of $\bm{A}$.
If we assign $\bm r=\bm 1\in\mathbb R^d$,
then the $(i,j)$-th block will be $[\bm g_{K}(\bm{A}, \bm 1)]_{ij} = \text{sign}(A_{ij})\cdot \bm{I}$
with the identity matrix $\bm{I}\in \mathbb R^{\frac{d}{K}\times \frac{d}{K}}$.
Using Kronecker product,
we can write $\bm g_{K}(\bm{A}, \bm 1)$ as a Kronecker product,
\begin{equation}
\bm g_{K}(\bm{A}, \bm 1) = \text{sign}(\bm{A})\otimes \bm{I},
\label{eq:grank1}
\end{equation}
where $\otimes$ here represents the Kronercker product 
and $\text{sign}(\bm{A})$ is a $K\times K$ matrix formed by the signs of elements in  $\bm{A}$.
Then, 
based on Lemma~\ref{lem:kron},
we have 
\begin{align}
\text{rank}\big(\bm g_{K}(\bm{A}, \bm 1)\big) 
= \nicefrac{d}{K} \cdot \text{rank}(\bm{A}),
\label{eq:grank2}
\end{align}
and
$\forall \bm r\in\mathbb R^d$,
$\text{rank}\big(\bm g_{K}(\bm{A}, \bm r)\big) 
\leq \nicefrac{d}{K} \cdot \text{rank}(\bm{A})$.

\begin{itemize}[leftmargin=*]
\item 	
First, we show the \textbf{sufficient condition},
i.e.,
if $\text{rank}(\bm{A})<K$, 
we have
$\exists \bm h\neq 0, \bm h^\top g_K(\bm{A},\bm r)\bm t\neq0, \forall \bm r, \bm t$.

Since $\bm{A}$ is not full rank,
we have $\text{rank}\big(\bm g_{K}(\bm{A}, \bm r)\big) < d$
based on \eqref{eq:grank2}.
Then based on Lemma~\ref{lem:zero},	
for all $\bm r\in\mathbb R^d$
there exists $\bm h\neq \bm 0$
that 
\[\bm g_{K}(\bm{A}, \bm r)^\top \bm h = \bm 0.\]
This leads to $\bm h^\top \bm g_{K}(\bm{A}, \bm r) = \bm 0$.
Thus, 
if $\text{rank}(\bm{A})<K$,
$\exists\bm h\neq \bm 0, \forall \bm r, \bm t, \bm h^\top \bm g_{K}(\bm{A}, \bm r)\bm t= 0$.

\item
Then we show the
\textbf{necessary condition},
i.e.,
if $\exists \bm h\neq 0, \bm h^\top g_K(\bm{A},\bm r)\bm t\neq0, \forall \bm r, \bm t$,
we have
$\text{rank}(\bm{A})<K$.

We assign
$\bm r=\bm 1$,
and a set of $\bm t$ with $(1,0,\dots, 0), (0,1,\dots,0), \dots, (0,0,\dots,1)$.
Then, 
this will lead to the following equation group
\begin{align*}
	\bm h^\top \{\bm g_{K}(\bm{A}, \bm 1)\}_{\cdot,1} &= 0,\\ 
	\bm h^\top \{\bm g_{K}(\bm{A}, \bm 1)\}_{\cdot,2} &= 0,\\
	&\cdots,\\
	\bm h^\top \{\bm g_{K}(\bm{A}, \bm 1)\}_{\cdot,d} &= 0.
\end{align*}

We proof the necessary  condition by contraction here.
Assume $\text{rank}(\bm{A})=K$, then $\text{rank}\big(\bm g_{K}(\bm{A}, \bm 1)^\top\big) =\text{rank}\big(\bm g_{K}(\bm{A}, \bm 1)\big)  =d$
based on \eqref{eq:grank1}.
Then, based on Lemma~\ref{lem:zero},
there is no $\bm h\neq \bm 0$ such that 
the above equation group is satisfied.
Thus, 
the assumption that $\text{rank}(\bm{A})=K$ is wrong.
Therefore,
if $\exists \bm h\neq 0, \bm h^\top g_K(\bm{A},\bm r)\bm t\neq0, \forall \bm r, \bm t$,
we have
$\text{rank}(\bm{A})<K$.
\end{itemize}

Based on the proof of sufficient and necessary conditions,
we have
$\exists \bm h\neq 0, \bm h^\top g_K(\bm{A},\bm r)\bm t\neq0, \forall \bm r, \bm t$
if and only if $\text{rank}(\bm{A})<K$.
\end{proof}

\begin{lemma}
	\label{lem:case2}
	$\exists \bm r\neq 0, \bm h^\top g_K(\bm{A},\bm r)\bm t= 0, \forall \bm h, \bm t$
	if and only if 
	$\exists a\in\{1,\dots, K\}, a\notin \{|A_{ij}|:, i,j=1\dots K\}$.
\end{lemma}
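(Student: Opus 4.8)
The plan is to first convert the quantified identity into a statement about the matrix $g_K(\bm{A}, \bm r)$ alone, and then read off both directions from the block structure in (\ref{eq:gk}). Since $\bm h^\top M \bm t = 0$ for all $\bm h, \bm t$ is equivalent to $M = \bm 0$ (let $\bm h$ and $\bm t$ range over the standard basis vectors to isolate each entry of $M$), the left-hand condition ``$\exists \bm r\neq \bm 0$ such that $\bm h^\top g_K(\bm{A},\bm r)\bm t = 0,\ \forall \bm h, \bm t$'' is equivalent to ``$\exists \bm r\neq \bm 0$ such that $g_K(\bm{A}, \bm r) = \bm 0$''.

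Next I would unfold the block structure. By (\ref{eq:gk}), the $(i,j)$-th block of $g_K(\bm{A}, \bm r)$ is $\text{sign}(A_{ij})\cdot\text{diag}(\bm r_{|A_{ij}|})$. When $A_{ij}=0$ this block is already zero (as $\text{sign}(0)=0$ and $\bm r_0\equiv\bm 0$); when $A_{ij}\neq 0$ we have $\text{sign}(A_{ij})=\pm 1$, so the block equals $\pm\,\text{diag}(\bm r_{|A_{ij}|})$, which vanishes precisely when the chunk $\bm r_{|A_{ij}|}=\bm 0$. Hence $g_K(\bm{A}, \bm r)=\bm 0$ if and only if $\bm r_a=\bm 0$ for every $a$ in the set $S := \{|A_{ij}|: i,j=1,\dots,K\}\cap\{1,\dots,K\}$ of absolute values that actually appear in $\bm{A}$.

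It then remains to decide when such a nonzero $\bm r$ exists. Since the chunks $\bm r_1,\dots,\bm r_K$ can be chosen independently and $\bm r\neq\bm 0$ means at least one chunk is nonzero, the constraint ``$\bm r_a=\bm 0$ for all $a\in S$'' admits a nonzero solution if and only if $S\neq\{1,\dots,K\}$. Indeed, if $S=\{1,\dots,K\}$ then all chunks are forced to be zero and only $\bm r=\bm 0$ satisfies the constraint; conversely, if there is some $a\in\{1,\dots,K\}$ with $a\notin\{|A_{ij}|: i,j=1,\dots,K\}$ (equivalently $a\notin S$, since $a\geq 1$), then setting $\bm r_a$ to any nonzero vector and all other chunks to $\bm 0$ yields a nonzero $\bm r$ with $g_K(\bm{A},\bm r)=\bm 0$. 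Chaining the two reductions gives the stated equivalence.

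I do not expect a serious obstacle here, as the argument is essentially bookkeeping. The only points requiring care are the clean reduction from the bilinear form $\bm h^\top g_K(\bm{A},\bm r)\bm t$ vanishing for all $\bm h,\bm t$ to the matrix $g_K(\bm{A},\bm r)$ itself being zero, and the correct identification of which chunk indices the constraint touches, namely the \emph{absolute} values present in $\bm{A}$ rather than the signed values (the signs are irrelevant because they only scale a block by $\pm 1$).
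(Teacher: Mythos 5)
Your proof is correct and follows essentially the same route as the paper's: both reduce the universally quantified bilinear identity to $g_K(\bm{A},\bm r)=\bm 0$ via standard basis vectors, observe that this forces exactly the chunks $\bm r_{|A_{ij}|}$ with $A_{ij}\neq 0$ to vanish, and conclude that a nonzero $\bm r$ survives precisely when some index $a\in\{1,\dots,K\}$ is absent from $\{|A_{ij}|\}$. Your single chain of equivalences is a slightly tidier packaging of the paper's two-direction argument, but the content is identical.
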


\begin{proof}
	\begin{itemize}[leftmargin=*]
		\item First, we show the sufficient condition, i.e.,
		if $\exists a\in\{1,\dots, K\}, a\notin \{|A_{ij}|:, i,j=1\dots K\}$,
		we have $\exists \bm r\neq 0, \bm h^\top g_K(\bm{A},\bm r)\bm t= 0, \forall \bm h, \bm t$.
		
		Given the $K$-chunk representation of $\bm r$,
		if $\exists$ $a$ $\in$ $\{1,\dots, K\}, a\notin \{|A_{ij}|:, i,j=1\dots K\}$,
		we assign 
			\begin{equation}
		\bm r_i = 
		\begin{cases}
		\bm 1 & i=a \\
		\bm 0 & i\neq a
		\end{cases}.
		\end{equation}
		Then,
		$\bm r_{|A_{ij}|}=\bm 0$ is always true 
		since $|A_{ij}|\neq a$.
		This leads to $g_K(\bm{A}, \bm r) = \bm 0$
		with $[g_K(\bm{A}, \bm r)]_{ij} = \text{sign}(A_{ij})\cdot \diag{(\bm r_{|A_{ij}|})}=\bm 0$.
		As a result,
		$\forall \bm h, \bm t,\bm h^\top g_K(\bm{A},\bm r)\bm t= 0$.
		Therefore,
		if $\exists a\in\{1,\dots, K\}, a\notin \{|A_{ij}|:, i,j=1\dots K\}$,
		we have $\exists \bm r\neq 0, \bm h^\top g_K(\bm{A},\bm r)\bm t= 0, \forall \bm h, \bm t$.
		
		\item Then, we show the \textbf{necessary condition}, i.e.,
		if $\exists \bm r\neq 0, \bm h^\top g_K(\bm{A},\bm r)\bm t= 0, \forall \bm h, \bm t$,
		we have $\exists a\in\{1,\dots, K\}, a\notin \{|A_{ij}|:, i,j=1\dots K\}$.
		
		We can enumerate $\bm h, \bm t$
		as the set of unit vectors with one dimension as $1$
		and the remaining to be $0$.
		Then from $\bm h^\top g_K(\bm{A},\bm r)\bm t= 0$
		we derive $g_K(\bm{A},\bm r)=\bm 0$
		since any element is $0$.
		Specially,
		we have that each block in $g_K(\bm{A},\bm r)$ is
		\[
		[g_K(\bm{A}, \bm r)]_{ij} = \text{sign}(A_{ij})\cdot \diag{(\bm r_{|A_{ij}|})} = \bm 0.
		\]
		If the number of unique values of set $\{ A_{ij} \}$ is $K$,
		we will have $\bm{r} = 0$.
		This is in contrary to the fact that
		$\bm r\neq \bm 0$.
		Thus there must 
		$\exists a\in\{1,\dots, K\}, a\notin \{|A_{ij}|:, i,j=1\dots K\}$.

	\end{itemize}
\noindent
Thus, 
we have $\exists \bm r\neq 0, \bm h^\top g_K(\bm{A},\bm r)\bm t= 0, \forall \bm h, \bm t$
if and only if 
$\exists a\in\{1,\dots, K\}, a\notin \{|A_{ij}|:, i,j=1\dots K\}$.
\end{proof}

By combining Lemma~\ref{lem:case1} and Lemma~\ref{lem:case2},
we can show Lemma~\ref{lem:con-neg} that
$\exists \bm h\neq 0, \bm h^\top g_K(\bm{A},\bm r)\bm t\neq0, \forall \bm r, \bm t$
or 
$\exists \bm r\neq 0, \bm h^\top g_K(\bm{A},\bm r)\bm t= 0, \forall \bm h, \bm t$.
if and only if
$\text{rank}(\bm{A})<K$ or $\exists a\in\{1,\dots, K\}, a\notin \{|A_{ij}|:, i,j=1\dots K\}$.
Since
the original statement is equal to the converse-negative proposition,
Proposition~\ref{pr:degenerate} is proved.

\subsubsection{Proof of Proposition~\ref{pr:degenerate}}

\begin{proof}
Proposition~\ref{pr:degenerate},
is equivalent to the statement that
$\bm{A}$ is degenerate if and only if 
$\text{rank}(\bm{A})<K$
and $\exists a\in\{1,\dots, K\}, a\notin \{|A_{ij}|:, i,j=1\dots K\}$.
From Definition~\ref{def:degenerate},
$\bm{A}$ is degenerate means
\begin{enumerate}[leftmargin=*]
	\item $\exists \bm h\neq 0, \bm h^\top g_K(\bm{A},\bm r)\bm t\neq0, \forall \bm r, \bm t$; or
	\item $\exists \bm r\neq 0, \bm h^\top g_K(\bm{A},\bm r)\bm t= 0, \forall \bm h, \bm t$.
\end{enumerate}
Here, 
Lemma~\ref{lem:case1} proves 1) and Lemma~\ref{lem:case2} proves 2).
Thus,
we can conclude that
$\bm{A}$ is non-degenerate,
if and only if $\text{rank}(\bm{A})=K$
and 
$\{1, \dots, K\} \subset \{|A_{ij}|: i,j=1\dots K\}$.
\end{proof}

\subsection{Proposition~\ref{pr:equiv}}
\label{app:equiv}

We denote the $K$-chunk representation of the embeddings as 
$\bm E = [\bm E_1; \dots; \bm E_K]$
with $\bm E_k\in\mathbb R^{\frac{d}{K}\times |\mathcal E|}, k=1\dots K$
and 
$\bm R = [\bm R_1; \dots; \bm R_K]$
with $\bm R_k\in\mathbb R^{\frac{d}{K}\times |\mathcal E|}, k=1\dots K$.
Besides,
given the permutation matrix $\bm \Pi$,
we denote 
$\pi(i) = j$ 
and $\pi^{-1}(j)=i$
if $\Pi_{ij}=1$ 
for $i,j=1\dots K$.

\subsubsection{Auxiliary Lemmas}

\begin{lemma}
\label{lem:optimal}
If below two conditions hold,
then $\bm{A} \equiv \bm{A}'$.
\begin{itemize}[leftmargin=*]
\item given the optimal embedding 
$\bm E^*$ and $\bm R^*$ for $f_{\bm{A}}(h,r,t)$
there exist $\bm E'$ and $\bm R'$ such that $f_{\bm{A}}(h,r,t) = f_{\bm{A}'}(h,r,t)$ always hold;

\item 
given the optimal embedding 
${\bm E'}^*$ and ${\bm R'}^*$ for $f_{\bm{A}'}(h,r,t)$,
there exist $\bm E$ and $\bm R$ such that $f_{\bm{A}}(h,r,t) = f_{\bm{A}'}(h,r,t)$ always hold.
\end{itemize}
\end{lemma}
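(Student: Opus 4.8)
The plan is to reduce the claim directly to Definition~\ref{def:equiv}, which characterizes $\bm{A}\equiv\bm{A}'$ by the agreement of optimal performance measures on every dataset, i.e.
\[
M\big(F(\bm{P}^*;\bm{A}),\mathcal S\big)=M\big(F({\bm{P}'}^*;\bm{A}'),\mathcal S\big)\quad\text{for all }\mathcal S,
\]
where $\bm{P}^*$ and ${\bm{P}'}^*$ are the maximizers of $M$ for $\bm{A}$ and $\bm{A}'$ respectively. The first observation I would record is that $M$ sees the model only through the collection of scores $\{f(h,r,t)\}$ it assigns to the triples in $\mathcal S$: any two parameter settings that induce the same scoring function as a map on $\mathcal E\times\mathcal R\times\mathcal E$ must therefore receive the same value of $M$ on every $\mathcal S$. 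This is the bridge that converts the two hypotheses, which are phrased in terms of matching scoring functions, into statements about $M$.

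Fixing a dataset $\mathcal S$, I would run a two-sided squeeze. Applying the first hypothesis to the optimal $\bm{P}^*=(\bm{E}^*,\bm{R}^*)$ for $\bm{A}$ yields a parameter setting $\bm{P}'=(\bm{E}',\bm{R}')$ for $\bm{A}'$ with $f_{\bm{A}'}(h,r,t)=f_{\bm{A}}(h,r,t)$ for all $(h,r,t)$, so by the bridge observation $M(F(\bm{P}';\bm{A}'),\mathcal S)=M(F(\bm{P}^*;\bm{A}),\mathcal S)$. Since ${\bm{P}'}^*$ maximizes $M$ over all parameters for $\bm{A}'$, this gives
\[
M\big(F({\bm{P}'}^*;\bm{A}'),\mathcal S\big)\ \ge\ M\big(F(\bm{P}';\bm{A}'),\mathcal S\big)=M\big(F(\bm{P}^*;\bm{A}),\mathcal S\big).
\]
The second hypothesis supplies the reverse inequality by the symmetric argument: starting from the optimal ${\bm{P}'}^*$ for $\bm{A}'$, it produces a $\bm{P}$ for $\bm{A}$ matching the scores, and optimality of $\bm{P}^*$ for $\bm{A}$ then forces $M(F(\bm{P}^*;\bm{A}),\mathcal S)\ge M(F({\bm{P}'}^*;\bm{A}'),\mathcal S)$. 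Combining the two inequalities yields equality, and since $\mathcal S$ was arbitrary the criterion of Definition~\ref{def:equiv} is met.

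I expect the only real subtlety to be the bookkeeping around the $\mathcal S$-dependence of the optimal embeddings: the maximizers $\bm{P}^*$ and ${\bm{P}'}^*$ vary with $\mathcal S$, so each inequality must be established for a fixed $\mathcal S$ and only then quantified universally; the hypotheses conveniently supply a matching $\bm{P}'$ (resp.\ $\bm{P}$) for that same $\mathcal S$, which is exactly what the squeeze needs. A secondary point to verify is that the matched parameters returned by the hypotheses are admissible inputs to $F(\cdot;\bm{A}')$ (resp.\ $F(\cdot;\bm{A})$)---i.e.\ of the correct dimensionality---so that the substituted performance terms are well defined; this is immediate from how the hypotheses are stated but should be noted explicitly. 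No machinery beyond Definition~\ref{def:equiv} and the defining property of the arg-max parameters is required.
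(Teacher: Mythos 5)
Your proposal is correct and matches the paper's own proof essentially step for step: both arguments use the two hypotheses to transfer the optimal parameters across structures, invoke the defining property of the arg-max to obtain the two inequalities $M(F(\bm{P}^*;\bm{A}),\mathcal S)\le M(F({\bm{P}'}^*;\bm{A}'),\mathcal S)$ and its reverse, and conclude equality for every $\mathcal S$. Your explicit remark that $M$ depends on the parameters only through the induced scores is a point the paper leaves implicit, but it does not change the argument.
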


\begin{proof}
	Denote $\bm P^*=\{\bm E^*, \bm R^*\}, \bm P=\{\bm E, \bm R\}$
	and ${\bm P'}^*=\{{\bm E'}^*, {\bm R'}^*\}, \bm P'=\{\bm E', \bm R'\}$	
	Then, 
	from Definition~\ref{def:equiv},
	we have 
	$\bm P^* = \arg\max_{\bm P} M(F(\bm P; \bm{A}), \mathcal S)$
	and 
	${\bm P'}^*$ $=$  $\arg\max_{\bm P'}$  $M(F(\bm P'; \bm{A}'), \mathcal S)$.
	
	If given the optimal embedding 
	$\bm E^*$ and $\bm R^*$ for the scoring function $f_{\bm{A}}(h,r,t)$
	there exist $\bm E'$ and $\bm R'$ such that $f_{\bm{A}}(h,r,t) = f_{\bm{A}'}(h,r,t)$,
	we will have
	\begin{align}
	 M\big(F(\bm P^*; \bm{A}), \mathcal S\big) 
	 & = M\big(F(\bm P'; \bm{A}'),\mathcal S\big),
	 \notag
	 \\ 
	 & \leq M\big(F({\bm P'}^*; \bm{A}'), \mathcal S\big).
	 \label{eq:p-opt}
	\end{align}
	
	Similarly,
	if given the optimal embedding 
	${\bm E'}^*$ and ${\bm R'}^*$ for the scoring function $f_{\bm{A}'}(h,r,t)$
	there exist $\bm E$ and $\bm R$ such that $f_{\bm{A}}(h,r,t) = f_{\bm{A}'}(h,r,t)$,
	we have
	\begin{align}
	M\big(F({\bm P'}^*; \bm{A}'),\mathcal S\big) 
	& = M\big(F(\bm P; \bm{A}),\mathcal S\big),
	\notag
	\\
	& \leq M\big(F({\bm P}^*; \bm{A}), \mathcal S\big).
	\label{eq:p'-opt}
	\end{align}
	Based on \eqref{eq:p-opt} and \eqref{eq:p'-opt},
	we have $M\big(F(\bm P^*; \bm{A}),\mathcal S\big) = M\big(F({\bm P'}^*; \bm{A}'),\mathcal S\big)$,
	namely $\bm{A} \equiv \bm{A}'$.
\end{proof}

\subsubsection{Proof of Proposition~\ref{pr:equiv}}

\begin{proof}
The key point of this proof is that,
there exist corresponding operations on the optimal embedding such that the score of equivalent structures
can always be the same,
i.e.,
$f_{\bm{A}'}(h,r,t) = f_{\bm{A}}(h,r,t)$.
\begin{enumerate}[label=(\roman*),leftmargin=15px]
\item 
We can permute the corresponding chunks in the entity embeddings 
to get the same scores.
If there exists a permutation matrix $\bm \Pi\in\mathbb \{0,1\}^{K\times K}$ that $\bm{A}' = \bm \Pi^{\top} \bm{A}\bm \Pi$,
we will have $A'_{ij} = A_{\pi(i),\pi(j)}$ and $A_{ij}= A'_{\pi^{-1}(i), \pi^{-1}(j)}$
for $i,j=1\dots K$.

Given $\bm E^*, \bm R^*$ as the optimal embeddings trained by $f_{\bm{A}}(h, r, t)$,
we set $\bm E', \bm R'$
with $\bm E'_i = \bm E_{\pi(i)}^*,i=1\dots K, \bm R'=\bm R^*$.
In this way,
we always have 
\begin{align*}
&f_{\bm{A}'}(h,r,t) 
\\
&= \sum\nolimits_{i'=1}^K\!\!\sum\nolimits_{j'=1}^K\!\!\!\! \text{sign}(A'_{i'j'})\langle \bm h_{i'}, \bm r_{|A_{i'j'}|},\bm t_{j'}  \rangle
\\
&=  \sum\nolimits_{i'=1}^K\!\!\sum\nolimits_{j'=1}^K \!\!\!\!\text{sign}(A_{\pi(i'),\pi(j')})\langle\bm h_{\pi(i')}^*, \bm r_{|A_{\pi(i'),\pi(j')}|}^*,\bm t_{\pi(j')}^*\rangle, 
\\
&= \sum\nolimits_{i=1}^K\!\sum\nolimits_{j=1}^K \!\!\text{sign}(A_{ij})\langle\bm h_{i}^*, \bm r_{|A_{ij}|}^*,\bm t_{j}^*\rangle, 
\\
&= f_{\bm{A}}(h,r,t).
\end{align*}
In the third to fourth line, we set $i=\pi(i')$ and $j=\pi(j')$.

Similarly,
given ${\bm E'}^*, {\bm R'}^*$ as the optimal embeddings trained by $f_{\bm{A}'}(h,  r,  t)$,
we can set $\bm E, \bm R$
with  $\bm E_i = {\bm E'}_{\pi^{-1}(i)}^*, i=1\dots K, \bm R={\bm R'}^*$.
Thus, we always have
\begin{align*}
&f_{\bm{A}}(h,r,t) 
\\
&= \sum\nolimits_{i=1}^K\sum\nolimits_{j=1}^K \text{sign}(A_{ij})\langle\bm h_{i}, \bm r_{|A_{ij}|},\bm t_{j}\rangle,
\\
&= \sum\nolimits_{i=1}^K\sum\nolimits_{j=1}^K \text{sign}(A'_{\pi^{-1}(i),\pi^{-1}(j)})
\\
&~~~~~~~~~~~~~~~~~~~~~~~~~~~\cdot\langle{\bm h'}_{\pi^{-1}(i)}^*, {\bm r'}_{|{A'}_{\pi^{-1}(i),\pi^{-1}(j)}|}^*,{\bm t'}_{\pi^{-1}(j)}^*\rangle, 
\\
&= \sum\nolimits_{i'=1}^K\!\sum\nolimits_{j'=1}^K\! \text{sign}(A'_{i'j'})\langle {\bm h'}_{i'}^*, {\bm r'}_{|A_{i'j'}|}^*,{\bm t'}_{j'}^*  \rangle
\\
&= f_{\bm{A}'}(h,r,t).
\end{align*}
In the third to fourth line, we set $i'=\pi^{-1}(i)$ and $j'=\pi^{-1}(j)$.
Finally,
based on Lemma~\ref{lem:optimal},
we have $\bm{A}\equiv \bm{A}'$.

\item 
We can permute the corresponding chunks in relation embedding
to get the same scores.

If there exists a permutation matrix $\bm \Pi\in\{0,1\}^{K\times K}$ that 
$\Phi_{\bm{A}'} = \bm \Pi\Phi_{\bm{A}}$,
we will have 
$|A_{i,j}| = \pi(|A'_{ij}|), |A'_{ij}| = \pi^{-1}(|A_{ij}|)$
and 
$\text{sign}(A'_{ij}) = \text{sign}(A_{ij})$.

Given $\bm E^*, \bm R^*$
as the optimal embeddings trained by $f_{\bm{A}}(h,r,t)$,
we can set $\bm E', \bm R'$ with
$\bm E'=\bm E^*, \bm R'_i = \bm R_{\pi^{-1}(i)}^*, i=1\dots K$.
In this way,
we alway have 	
\begin{align*}
&f_{\bm{A}'}(h,r,t)
\\
&= \sum\nolimits_{i'=1}^K\sum\nolimits_{j'=1}^K \text{sign}(A'_{i'j'})\langle\bm h'_{i'}, \bm r'_{|A'_{i'j'}|},\bm t'_{j'} \rangle, 
\\
&= \sum\nolimits_{i'=1}^K\sum\nolimits_{j'=1}^K \text{sign}(A_{i'j'})\langle\bm h_{i'}^*, \bm r_{\pi^{-1}(|A'_{i'j'}|)}^*,\bm t_{j'}^* \rangle,
\\
&= \sum\nolimits_{i'=1}^K\sum\nolimits_{j'=1}^K \text{sign}(A_{i'j'})\langle\bm h_{i'}^*, \bm r_{|A_{i'j'}|}^*,\bm t_{j'}^* \rangle,
\\
&= f_{\bm{A}}(h,r,t),
\end{align*}
with $|A_{i'j'}| = \pi^{-1}(|A_{i'j'}|)$

Similarly,
given ${\bm E'}^*, {\bm R'}^*$ as the optimal embeddings trained by $f_{\bm{A}'}(h,  r,  t)$,
we can set $\bm E, \bm R$
with  $\bm E = {\bm E'}^*, \bm R_i={\bm R'}_{\pi(i)}^*, i=1\dots K$.
In this way,
we always have
\begin{align*}
&f_{\bm{A}}(h,r,t) 
\\
&= \sum\nolimits_{i=1}^K\sum\nolimits_{j=1}^K \text{sign}(A_{ij})\langle\bm h_{i}, \bm r_{|A_{ij}|},\bm t_{j}\rangle,
\\
&= \sum\nolimits_{i=1}^K\sum\nolimits_{j=1}^K \text{sign}(A'_{ij})
\cdot\langle{\bm h'}_{i}^*, {\bm r}_{\pi(|A'_{ij}|)}^*,{\bm t'}_{j}^*\rangle, 
\\
&= \sum\nolimits_{i=1}^K\sum\nolimits_{j=1}^K \text{sign}(A'_{ij})
\cdot\langle{\bm h'}_{i}^*, {\bm r'}_{|A_{ij}|}^*,{\bm t'}_{j}^*\rangle, 
\\
&= f_{\bm{A}'}(h,r,t),
\end{align*}
with $|A_{i,j}| = \pi(|A'_{ij}|)$.
Finally,
based on Lemma~\ref{lem:optimal},
we have $\bm{A}\equiv \bm{A}'$.

\item 
We can flip the signs of corresponding chunks in relation embedding
to get the same scores.

If there exists a sign vector $\bm s\in \{\pm 1\}^K$
that
$[\bm \Phi_{\bm{A}'}]_{k,\cdot} = s_k \cdot [\bm \Phi_{\bm{A}}]_{k,\cdot}, \forall k=1\dots K$,
we will have
$A'_{ij} = s_k\cdot A_{ij}$ and $A_{ij} = s_k\cdot A'_{ij}$ with $k=|A_{ij}|=|A'_{ij}|$
and $s_k\in\{\pm1\}$.

Given $\bm E^*, \bm R^*$
as the optimal embedding trained by 
$f_{\bm{A}}(h,r,t)$,
we can set $\bm E', \bm R'$ with
$\bm E'=\bm E^*, \bm R'_k = s_k\cdot\bm R_{k}^*, k=1\dots K$.
In this way,
we always have

\begin{figure*}[ht]
	\centering
	\subfigure[ogbl-biokg (AutoBLM)]{\includegraphics[width=0.31\columnwidth]{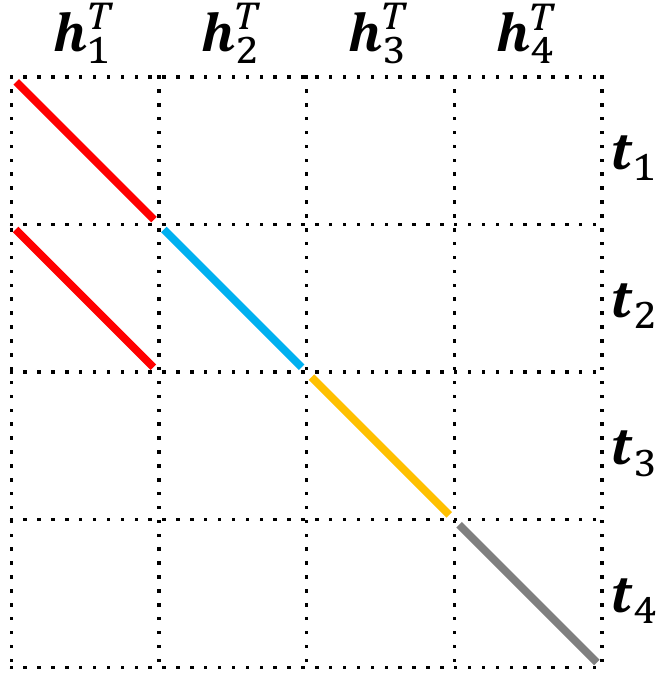}}
	\qquad
	\subfigure[ogbl-wikikg2 (AutoBLM)]{\includegraphics[width=0.31\columnwidth]{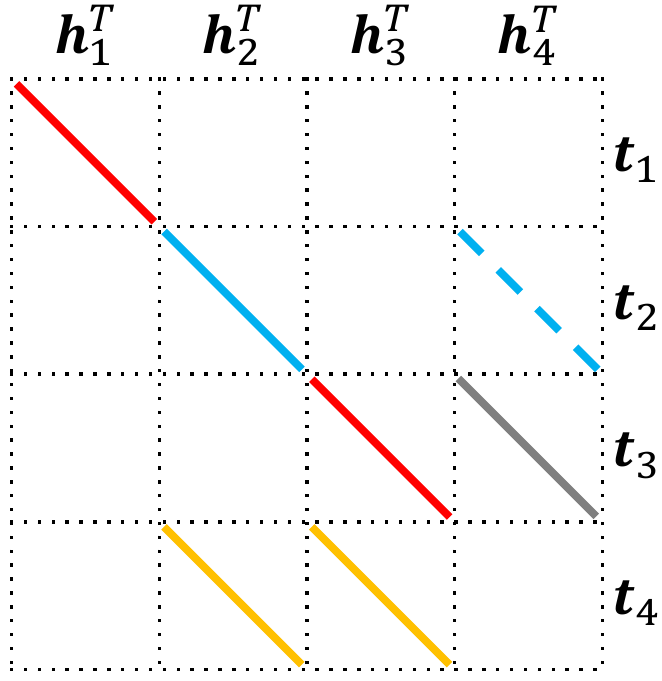}}
	\qquad\qquad
	\subfigure[ogbl-biokg (AutoBLM+)]{\includegraphics[width=0.31\columnwidth]{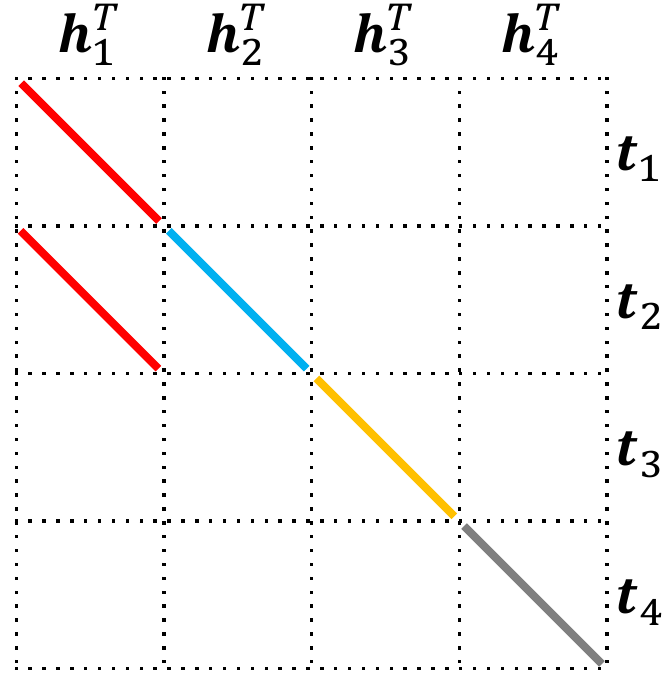}}
	\qquad
	\subfigure[ogbl-wikikg2 (AutoBLM+)]{\includegraphics[width=0.31\columnwidth]{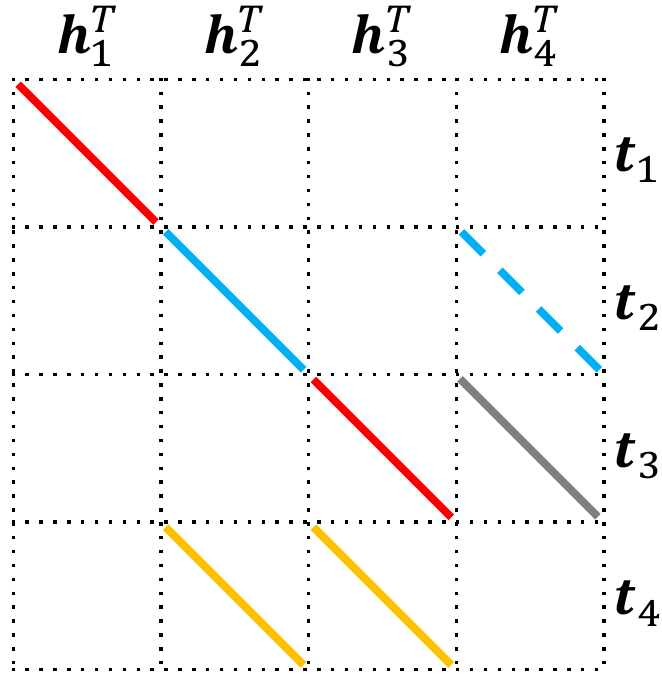}}
	\vspace{-10px}
	\caption{A graphical illustration of $g_4(\bm{A}, \bm r)$ identified by AutoBLM and AutoBLM+ 
		on the large-scale KG completion task with ogbl-biokg and ogbl-wikikg2 datasets.}
	\label{fig:structure:ogb}
\end{figure*}

\begin{figure*}[ht]
	\begin{center}
		{\includegraphics[width=0.31\columnwidth]{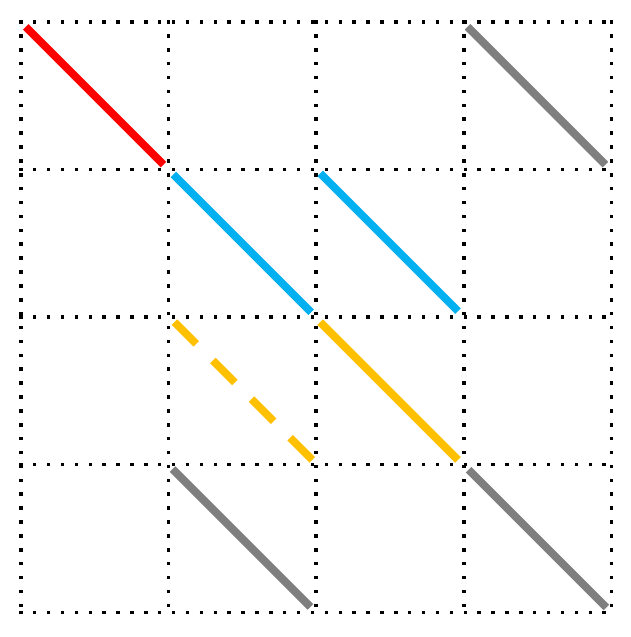}}
		\qquad
		{\includegraphics[width=0.31\columnwidth]{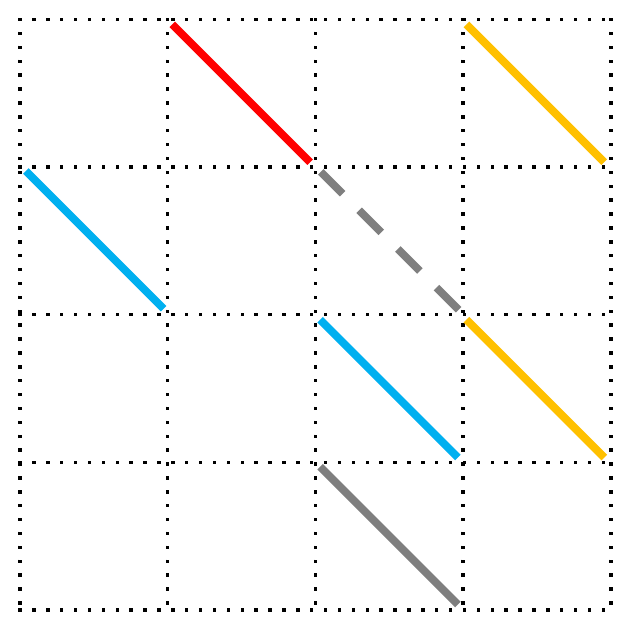}}
		\qquad\qquad
		{\includegraphics[width=0.31\columnwidth]{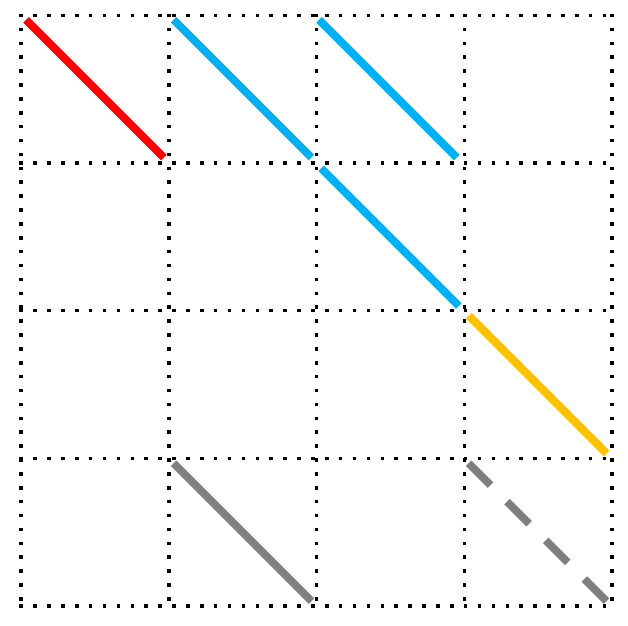}}
		\qquad
		{\includegraphics[width=0.31\columnwidth]{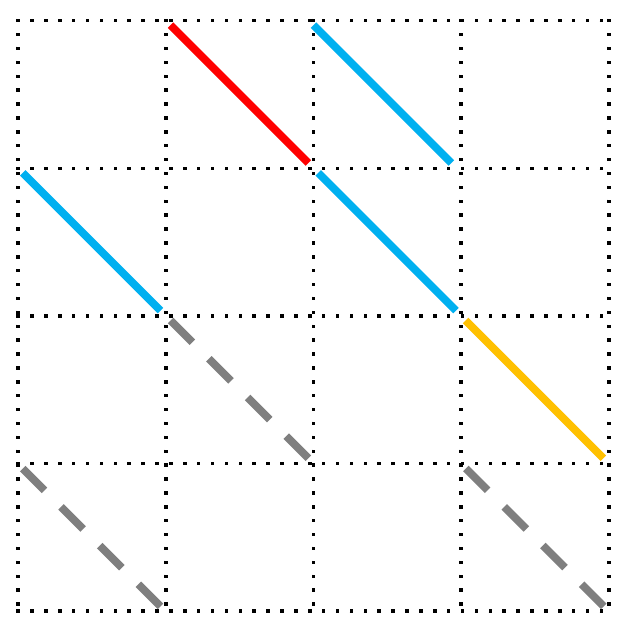}}

		\subfigure[FB15k-2p.]
		{\includegraphics[width=0.31\columnwidth]{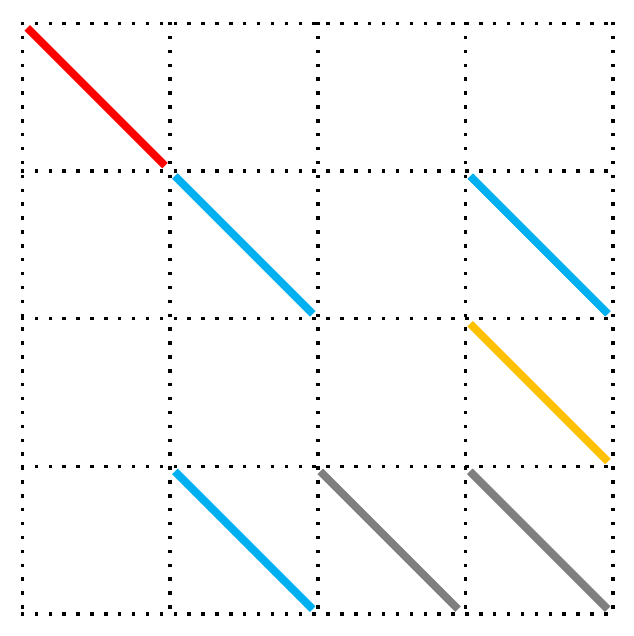}}
		\qquad
		\subfigure[FB15k-3p.]
		{\includegraphics[width=0.31\columnwidth]{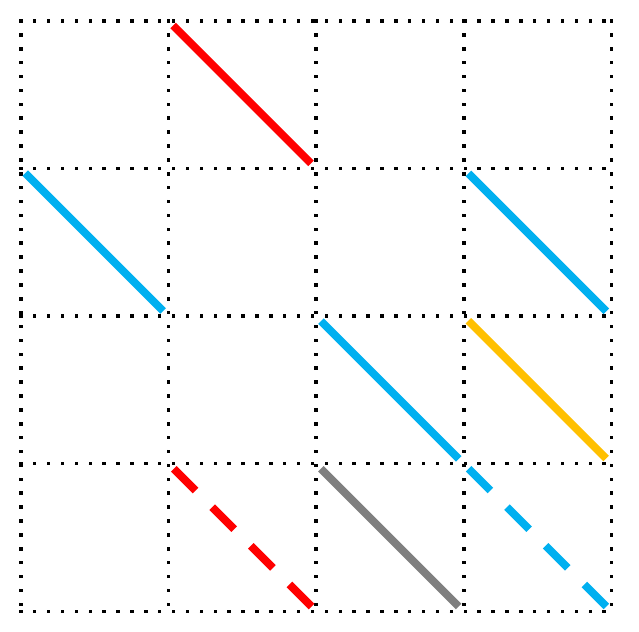}}
		\qquad\qquad
		\subfigure[FB15k237-2p.]
		{\includegraphics[width=0.31\columnwidth]{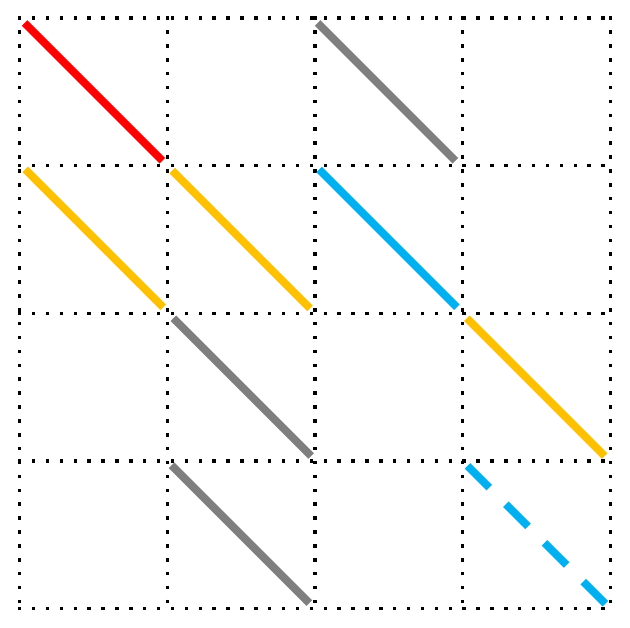}}
		\qquad
		\subfigure[FB15k237-3p.]
		{\includegraphics[width=0.31\columnwidth]{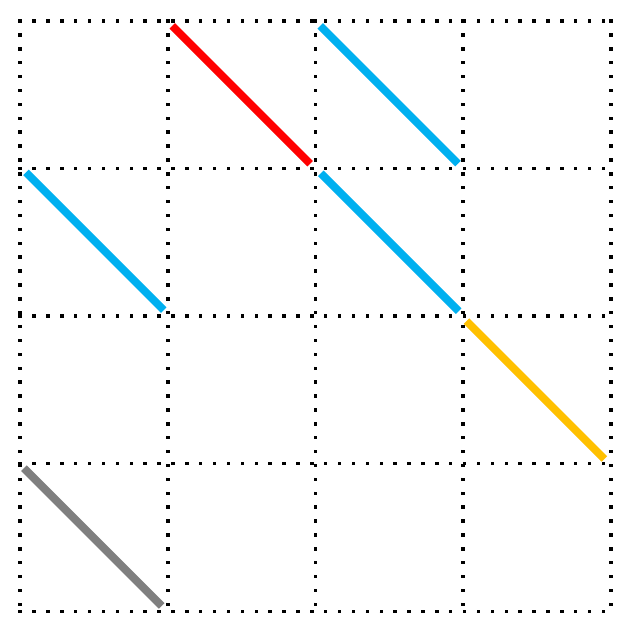}}
		\vspace{-10px}
		\caption{A graphical illustration of $g_4(\bm{A}, \bm r)$ identified by AutoBLM
			(top) and AutoBLM+ (bottom) on the multi-hop query task.}
		\label{fig:s1}
	\end{center}
\end{figure*}

\begin{figure*}[ht]
	\centering
	{\includegraphics[width=0.31\columnwidth]{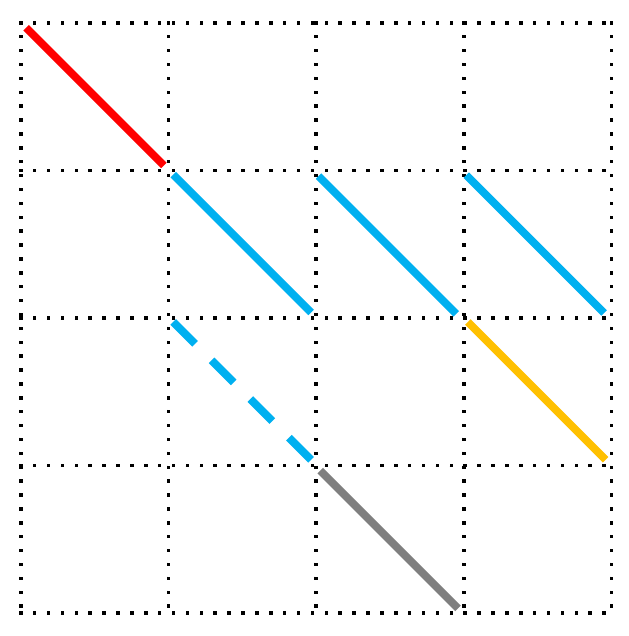}}
	\qquad\qquad
	{\includegraphics[width=0.31\columnwidth]{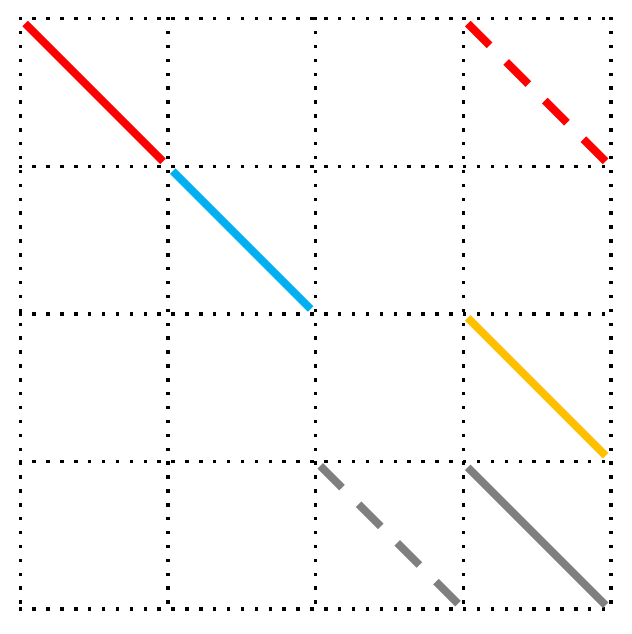}}
	\qquad\qquad
	{\includegraphics[width=0.31\columnwidth]{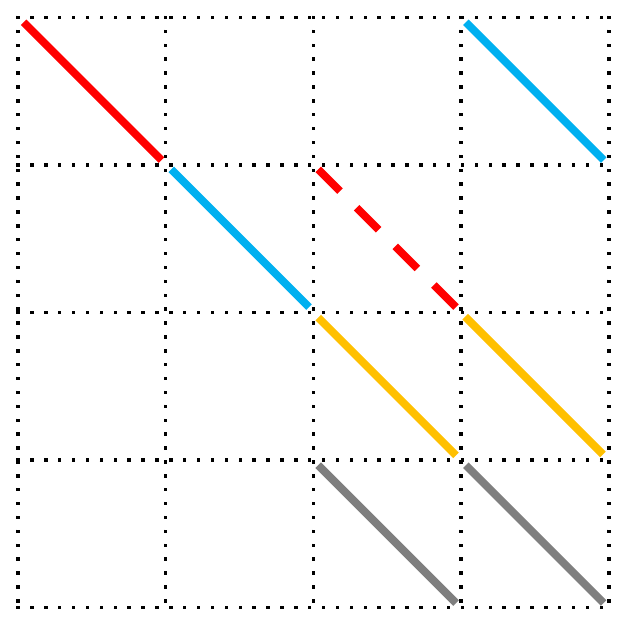}}

	\subfigure[AIFB.]
	{\includegraphics[width=0.31\columnwidth]{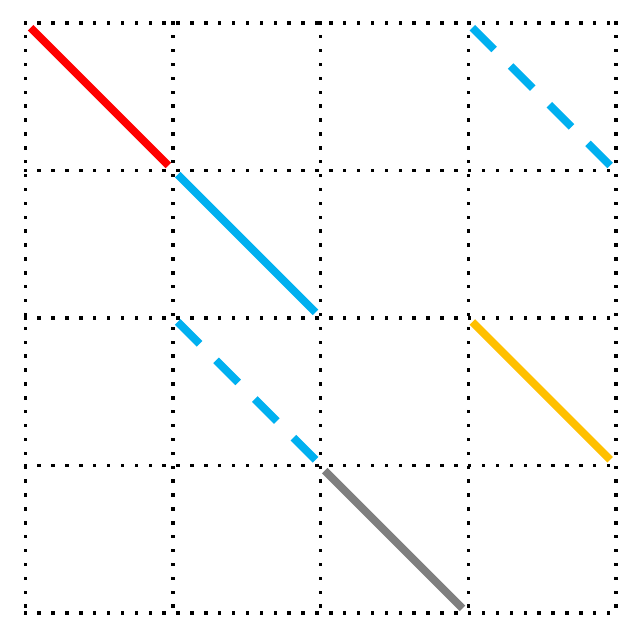}}
	\qquad\qquad
	\subfigure[MUTAG.]
	{\includegraphics[width=0.31\columnwidth]{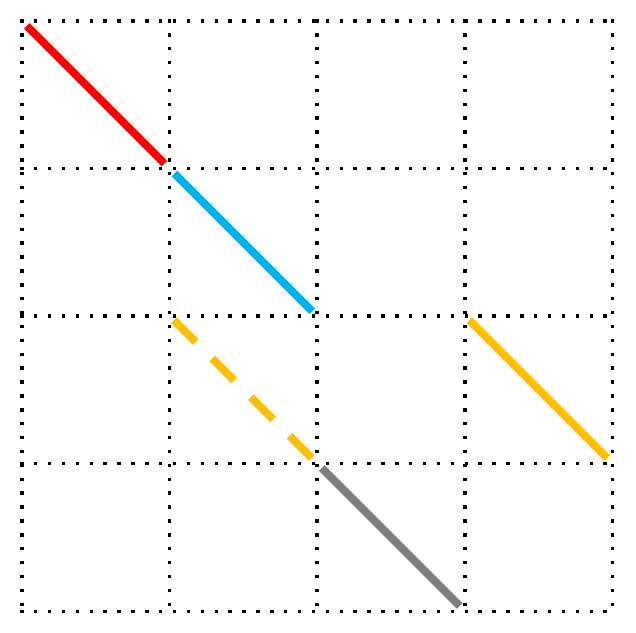}}
	\qquad\qquad
	\subfigure[BGS.]
	{\includegraphics[width=0.31\columnwidth]{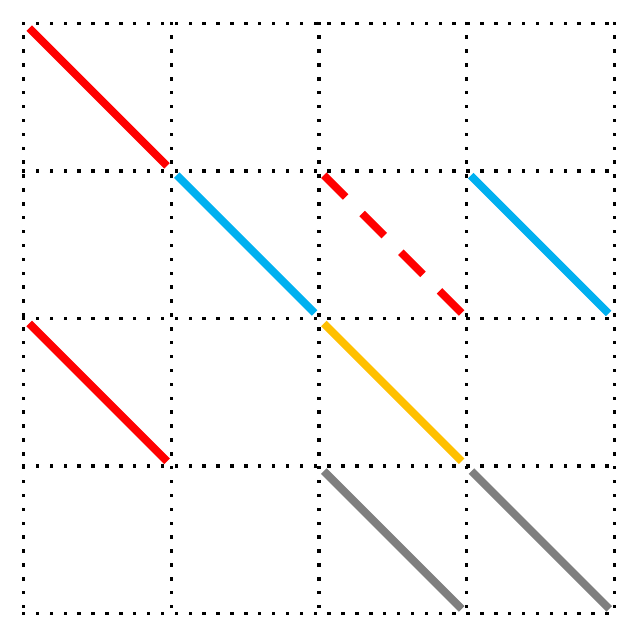}}
	\vspace{-10px}
	\caption{A graphical illustration of $g_4(\bm{A}, \bm r)$ identified by
		AutoBLM (top) and AutoBLM+ (bottom) on the entity classification task.}
	\label{fig:s2}
	\vspace{-10px}
\end{figure*}

\begin{align*}
&f_{\bm{A}'}(h,r,t) 
\\
&= \sum\nolimits_{i=1}^K\sum\nolimits_{j=1}^K \text{sign}(A'_{ij})\langle\bm h'_{i}, \bm r'_{|A'_{ij}|},\bm t'_{j}\rangle,
\\
&= \sum\nolimits_{i=1}^K\sum\nolimits_{j=1}^K s_{|A_{ij}|}\cdot\text{sign}(A_{ij})
\cdot\langle{\bm h}_{i}^*, s_{|A_{ij}|}\cdot{\bm r}_{|A_{ij}|}^*,{\bm t}_{j}^*\rangle, 
\\
&= \sum\nolimits_{i=1}^K\sum\nolimits_{j=1}^K s_{|A_{ij}|}^2\cdot\text{sign}(A_{ij})
\cdot\langle{\bm h}_{i}^*, {\bm r}_{|A_{ij}|}^*,{\bm t}_{j}^*\rangle, 
\\
&= \sum\nolimits_{i=1}^K\sum\nolimits_{j=1}^K \text{sign}(A_{ij})
\cdot\langle{\bm h}_{i}^*, {\bm r}_{|A_{ij}|}^*,{\bm t}_{j}^*\rangle, 
\\
&= f_{\bm{A}}(h,r,t),
\end{align*}
with $s_{|A_{ij}|}^2=1$.

Similarly,
given ${\bm E'}^*, {\bm R'}^*$ as the optimal embeddings trained by $f_{\bm{A}'}(h,  r,  t)$,
we can set $\bm E, \bm R$
with  $\bm E = {\bm E'}^*, \bm R_k=s_k\cdot {\bm R'}_{k}^*, k=1\dots K$.
In this way,
we always have
\begin{align*}
&f_{\bm{A}}(h,r,t) 
\\
&= \sum\nolimits_{i=1}^K\sum\nolimits_{j=1}^K \text{sign}(A_{ij})\langle\bm h_{i}, \bm r_{|A_{ij}|},\bm t_{j}\rangle,
\\
&= \sum\nolimits_{i=1}^K\sum\nolimits_{j=1}^K s_{|A'_{ij}|}\cdot \text{sign}(A'_{ij})
\cdot\langle{\bm h'}_{i}^*,  s_{|A'_{ij}|}\cdot{\bm r'}_{|A'_{ij}|}^*,{\bm t'}_{j}^*\rangle, 
\\
&= \sum\nolimits_{i=1}^K\sum\nolimits_{j=1}^K s_{|A'_{ij}|}^2\cdot \text{sign}(A'_{ij})
\cdot\langle{\bm h'}_{i}^*, {\bm r'}_{|A'_{ij}|}^*,{\bm t'}_{j}^*\rangle, 
\\
&= \sum\nolimits_{i=1}^K\sum\nolimits_{j=1}^K \text{sign}(A'_{ij})
\cdot\langle{\bm h'}_{i}^*, {\bm r'}_{|A'_{ij}|}^*,{\bm t'}_{j}^*\rangle, 
\\
&= f_{\bm{A}'}(h,r,t).
\end{align*}
Finally,
based on Lemma~\ref{lem:optimal},
we have $\bm{A}\equiv \bm{A}'$.
\end{enumerate}
\end{proof}

\section{Relation distribution in different datasets}
\label{app:rel:distribution}

Following \cite{rossi2020knowledge},
if more than half of the training triples 
of a relation $r$
have
inverse triples 
(i.e.,
$|\{(t,r,h)\in\mathcal S_{\text{tra}}: (h,r,t)\in \mathcal S_{\text{tra}}\}| >
\frac{1}{2}|\{(h,r,t)\in \mathcal S_{\text{tra}}\}|$),
$r$ is considered as symmetric.
If there exists no inverse triplet
(i.e., $|\{(t,r,h)\in\mathcal S_{\text{tra}}: (h,r,t)\in \mathcal
S_{\text{tra}}\}| =0$),
$r$ is anti-symmetric.
Relations that are neither 
symmetric nor
anti-symmetric
are general asymmetric.
Finally, a relation $r$ belongs to the inverse type
if $\exists r'\in\mathcal R: |\{(t,r',h)\in\mathcal S_{\text{tra}}:(h,r,t)\in\mathcal S_{\text{tra}}\}| >  \frac{1}{2}|\{(h,r,t)\in\mathcal S_{\text{tra}}\}|$.

As can be seen, the four datasets have very 
different distributions and thus properties.
As demonstrated in neural architecture search
\cite{elsken2019neural,liu2018darts,zoph2017neural},
different datasets need different neural architectures.
The architectures discovered have better performance than those
designed by
humans.
Hence,
the scoring functions should also be data-dependent, 
as demonstrated empirically
in Section \ref{exp:kgc:performance}.

\begin{table}[ht]
	\centering
	\caption{Distribution of relation types in the testing set.}
	\label{tb:rel-distribution}
	\setlength\tabcolsep{3.1pt}
	\vspace{-10px}
	\begin{tabular}{c|ccc|c}
		\toprule
		& symmetry & anti-symmetry & general asymmetry & inverse \\  \midrule
		WN18     & 23.4\%    & 72.1\%         & 4.5\%      & 4.5\%   \\
		FB15k    & 9.3\%     & 5.2\%          & 85.5\%     & 74.9\%  \\
		WN18RR   & 37.4\%    & 59.0\%         & 3.6\%      & 0.0\%   \\
		FB15k237 & 3.0\%     & 8.5\%          & 88.5\%     & 10.5\%  \\
		YAGO3-10 & 3.4\%     & 0.7\%          & 95.9\%     & 8.2\%  \\
		\bottomrule
	\end{tabular}
\end{table}

\section{Consistent performance under different dimensions}
\label{sssec:transfer}

In this section, we perform the KG completion experiment in Section~\ref{ssec:KGC}.
First, we
collect the first
100 structures $\bm{A}$'s 
(with $d=64$)
of AutoBLM+ in Section~\ref{ssec:KGC}
and measure the corresponding 
validation MRR
performance 
in step~\ref{step:evo-train} of Algorithm~\ref{alg:evolution}.
We then
increase the embedding dimensionality to $1024$, 
retrain and re-evaluate these structures.
Figure~\ref{fig:transfer}
compares the validation MRRs obtained with
$d=64$ and $d=1024$
on the WN18RR and FB15k-237 data sets.
As can be seen,
the two sets of MRRs
are correlated,
especially for the top performed ones.
The Spearman's rank correlation coefficient
on WN18RR is
 $0.4255$ and on FB15k-237 is $0.7054$.
\begin{figure}[H]
	\centering
	\includegraphics[width=0.48\columnwidth]{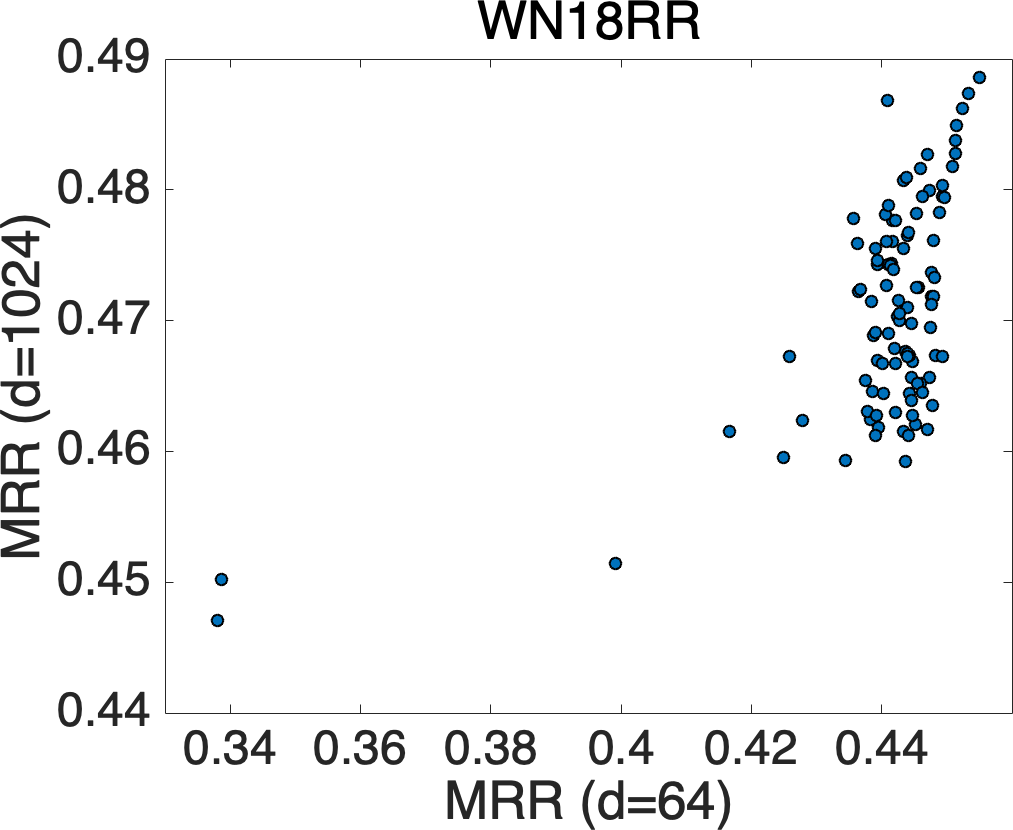}
	\hfill
	\includegraphics[width=0.50\columnwidth]{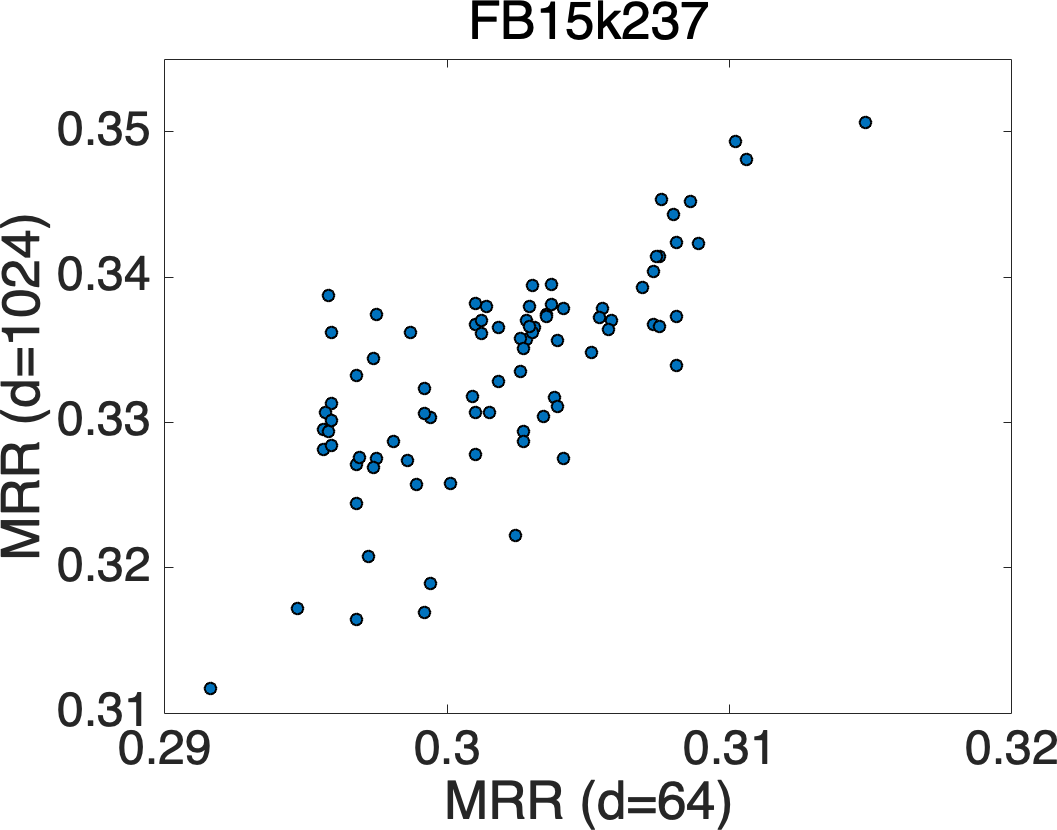}
	
	\vspace{-10px}
\caption{Validation MRRs of the structures with $d=64$ and $d=1024$.}
	\label{fig:transfer}
\end{figure}

\section{Models Obtained by AutoBLM and AutoBLM+}
\label{app:figures}

Figures~\ref{fig:structure:ogb},
\ref{fig:s1} and
\ref{fig:s2} 
show the structures obtained by AutoBLM and AutoBLM+
on the tasks of 
KG completion (Section~\ref{ssec:KGC}) for the  ogbl-biokg and ogbl-wikikg2
datasets, 
multi-hop query (Section~\ref{sec:exp:hop}) and
entity classification (Section~\ref{sec:exp:cls}), respectively.

\end{document}